\newtheorem{assumption}{Assumption}
\newcommand{\rd}{\mathrm{d}}
\title{A Continuous Time Framework for Discrete Denoising Models}
\newcommand{\Affiliation}{%
\end{tabular}\\\begin{tabular}[t]{c}\ignorespaces%
}
\author{%
  Andrew Campbell$^1$
  \And 
  Joe Benton$^1$
  \And
  Valentin De Bortoli$^2$
  \AND
  Tom Rainforth$^1$
  \And
  George Deligiannidis$^1$
  \And
  Arnaud Doucet$^1$
  \Affiliation\\
  $^1$Department of Statistics, University of Oxford, UK \quad $^2$CNRS ENS Ulm, Paris, France\\
  \texttt{ \{campbell, benton, rainforth, deligian, doucet\}@stats.ox.ac.uk}\\
  \texttt{valentin.debortoli@gmail.com}
}
\begin{document}

\etocdepthtag.toc{mtchapter}
\etocsettagdepth{mtchapter}{subsection}
\etocsettagdepth{mtappendix}{none}

\maketitle

\newcommand{\dt}{\Delta t}
\newcommand{\xdt}{\tilde{x}}
\newcommand{\xt}{x}
\newcommand{\E}{\mathbb{E}}
\newcommand{\KL}{\text{KL}}
\newcommand{\Dd}{1:D \backslash d}
\newcommand{\pdata}{p_{\textup{data}}}
\newcommand{\pref}{p_{\textrm{ref}}}
\newcommand{\LCT}{\mathcal{L}_{\textup{CT}}}
\newcommand{\LDT}{\mathcal{L}_{\textup{DT}}}
\newcommand{\LeCT}{\mathcal{L}_{\textup{eCT}}}
\newtheorem{theorem}{Theorem}
\newtheorem{proposition}{Proposition}
\newenvironment{talign}
 {\let\displaystyle\textstyle\align}
 {\endalign}

\maketitle
\begin{abstract}
We provide the first complete continuous time framework for denoising diffusion models of discrete data. This is achieved by formulating the forward noising process and corresponding reverse time generative process as Continuous Time Markov Chains (CTMCs). The model can be efficiently trained using a continuous time version of the ELBO. We simulate the high dimensional CTMC using techniques developed in chemical physics and exploit our continuous time framework to derive high performance samplers that we show can outperform discrete time methods for discrete data. The continuous time treatment also enables us to derive a novel theoretical result bounding the error between the generated sample distribution and the true data distribution.

\end{abstract}

\section{Introduction}

Diffusion/score-based/denoising models \cite{sohl2015deep, song2019generative, ho2020denoising, song2020score} are a popular class of generative models that achieve state-of-the-art sample quality with good coverage of the data distribution \cite{dhariwal2021diffusion} all whilst using a stable, non-adversarial, simple to implement training objective. The general framework is to define a forward noising process that takes in data and gradually corrupts it until the data distribution is transformed into a simple distribution that is easy to sample. The model then learns to reverse this process by learning the logarithmic gradient of the noised marginal distributions known as the score.

Most previous work on denoising models operates on a continuous state space. However, there are many problems for which the data we would like to model is discrete.
This occurs, for example, in text, segmentation maps, categorical features, discrete latent spaces, and the direct 8-bit representation of images. Previous work has tried to realize the benefits of the denoising framework on discrete data problems, with promising initial results~\cite{hoogeboom2021argmax, hoogeboom2021autoregressive, austin2021structured, esser2021imagebart, hoogeboom2022equivariant, cohen2022diffusion, johnson2021beyond, gu2021vector}. 

All of these previous approaches train and sample the model in discrete \emph{time}. 
Unfortunately, working in discrete time has notable drawbacks. It generally forces the user to pick a partition of the process at training time and the model only learns to denoise at these fixed time points. Due to the fixed partition, we are then limited to a simple ancestral sampling strategy. In continuous time, the model instead learns to denoise for any arbitrary time point in the process. This complete specification of the reverse process enables much greater flexibility in defining the reverse sampling scheme. For example, in continuous state spaces, continuous time samplers that greatly reduce the sampling time have been devised \cite{jolicoeur2021gotta, zhang2022fast, salimans2022progressive, chung2021come} as well as ones that improve sample quality \cite{song2020score, dockhorn2021score}. The continuous time interpretation has also enabled the derivation of interesting theoretical properties such as error bounds \cite{de2021diffusion} in continuous state spaces.

To allow these benefits to be exploited for discrete state spaces as well, we formulate a continuous time framework for discrete denoising models. Specifically, our contributions are as follows. We formulate the forward noising process as a Continuous Time Markov Chain (CTMC) and identify the generative CTMC that is the time-reversal of this process. We then bound the log likelihood of the generated data distribution, giving a continuous time equivalent of the ELBO that can be used for efficient training of a parametric approximation to the true generative reverse process. To efficiently simulate the parametric reverse process, we leverage tau-leaping \cite{gillespie2001approximate} and propose a novel predictor-corrector type scheme that can be used to improve simulation accuracy. The continuous time framework allows us to derive a bound on the error between the true data distribution and the samples generated from the approximate reverse process simulated with tau-leaping. Finally, we demonstrate our proposed method on the generative modeling of images from the CIFAR-10 dataset and monophonic music sequences. Notably, we find our tau-leaping with predictor-corrector sampler can provide higher quality CIFAR10 samples than previous discrete time discrete state approaches, further closing the performance gap between when images are modeled as discrete data or as continuous data.

Proofs for all propositions and theorems are given in the Appendix.

\begin{figure}
    \centering
    \hspace{-1cm}
    \includegraphics[width=0.9\linewidth]{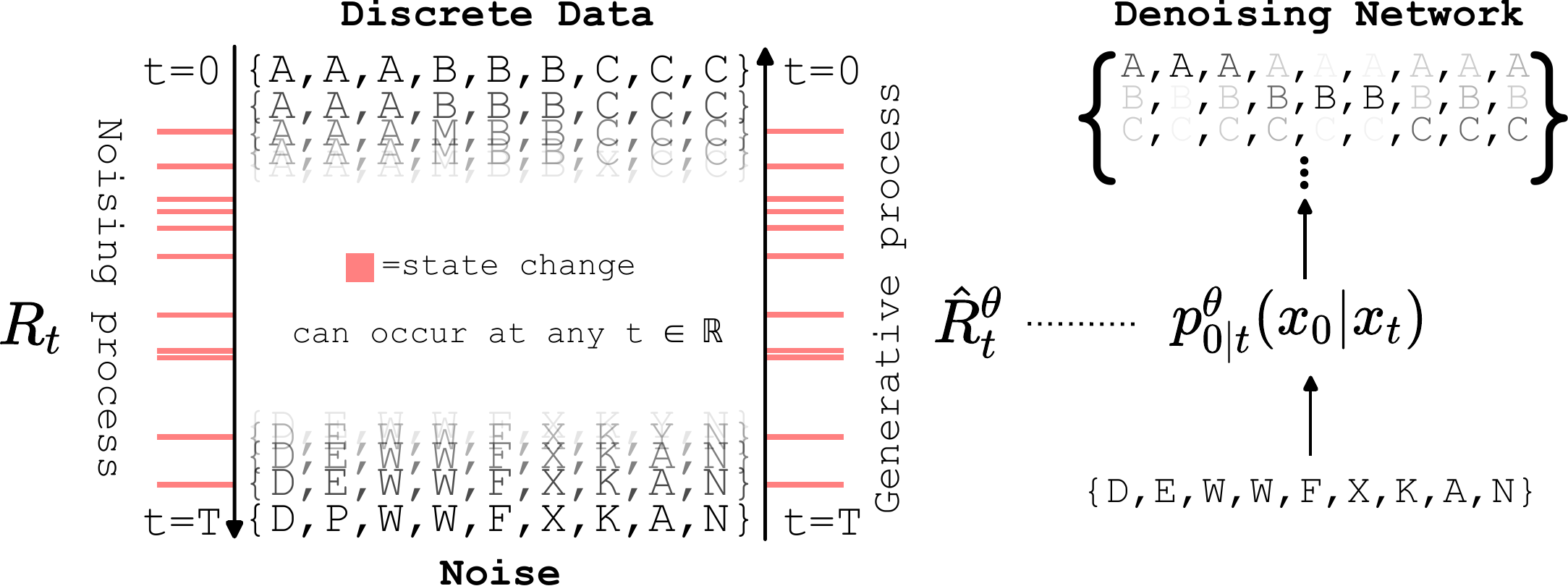}
    \caption{The forward noising process corrupts data according to $R_t$, the rate of corruption events at time $t$. The noising process' time reversal gives the generative process which is defined through $\hat{R}_t^\theta$, the rate of generative events at time $t$. $\hat{R}_t^\theta$ is parameterized through the denoising network, $p_{0|t}^\theta(x_0 | x_t)$, which outputs categorical probabilities over clean $x_0$ values conditioned on a noisy $x_t$.}
    \label{fig:fig1}
\end{figure}

\section{Background on Discrete Denoising Models} \label{sec:Background}
In the discrete time, discrete state space case, we aim to model discrete data $x_0 \in \mathcal{X}$ with finite cardinality $S = | \mathcal{X} |$. We assume $x_0 \sim \pdata(x_0)$ for some discrete data distribution $\pdata(x_0)$. We define a forward noising process that transforms $\pdata(x_0)$ to some distribution $q_K(x_K)$ that closely approximates an easy to sample distribution $\pref(x_K)$. This is done by defining forward kernels $q_{k+1|k}(x_{k+1} | x_k)$ that all admit $\pref$ as a stationary distribution and mix reasonably quickly. For example, one can use a simple uniform kernel \cite{hoogeboom2021argmax, austin2021structured}, $q_{k+1|k}(x_{k+1}|x_k) =  \delta_{x_{k+1}, x_k} (1 - \beta) + ( 1- \delta_{x_{k+1}, x_k})\beta / (S - 1) $ where $\delta$ is a Kronecker delta. The corresponding $\pref$ is the uniform distribution over all states. Other choices include: an absorbing state kernel---where for each state there is a small probability that it transitions to some absorbing state---or a discretized Gaussian kernel---where only transitions to nearby states have significant probability (valid for spaces with ordinal structure) \cite{austin2021structured}.

After defining $q_{k+1|k}$, we have a forward joint decomposition as follows
\begin{equation}
   \textstyle q_{0:K}(x_{0:K}) = \pdata(x_0) \prod_{k=0}^{K-1} q_{k+1|k}(x_{k+1}|x_k).
\end{equation}
The joint distribution $q_{0:K}(x_{0:K})$ also admits a reverse decomposition:
\begin{equation}
  \textstyle  q_{0:K}(x_{0:K}) = q_K(x_K) \prod_{k=0}^{K-1} q_{k|k+1}(x_k | x_{k+1}) ~~ \text{where} ~~ q_{k|k+1}(x_k | x_{k+1}) = \frac{q_{k+1|k}(x_{k+1}|x_k) q_k(x_k)}{q_{k+1}(x_{k+1})}.
\end{equation}
Here $q_k(x_k)$ denotes the marginal of $q_{0:K}(x_{0:K})$ at time $k$.
If one had access to $q_{k|k+1}$ and could sample $q_K$ exactly, then samples from $\pdata(x_0)$ could be produced by first sampling $x_K\sim q_K(\cdot)$ and then ancestrally sampling the reverse kernels, i.e. $x_k\sim q_{k|k+1}(\cdot|x_{k+1})$.

However, in practice, $q_{k|k+1}$ is intractable and needs to be approximated with a parametric reverse kernel, $p^\theta_{k|k+1}$. This kernel is commonly defined  through the analytic $q_{k|k+1, 0}$ distribution and a parametric `denoising' model $p^\theta_{0|k+1}$ \cite{hoogeboom2021argmax, austin2021structured},
\begin{talign}
    p^\theta_{k|k+1}(x_k | x_{k+1}) &\triangleq \sum_{x_0} q_{k|k+1, 0}(x_k | x_{k+1}, x_0) p^\theta_{0|k+1}(x_0 | x_{k+1}) \\
    &= q_{k+1|k}(x_{k+1}|x_k) \sum_{x_0} \frac{q_{k|0}(x_k | x_0)}{q_{k+1|0}(x_{k+1}|x_0)} p_{0|k+1}^\theta(x_0 | x_{k+1}). \label{eq:discreteTimeParametricKernel}
\end{talign}

Though $q_K(x_K)$ is also intractable, for large $K$ we can reliably approximate it with $\pref(x_K)$. 
Note that the faster the transitions mix, the more accurate this approximation becomes. Approximate samples from $\pdata(x_0)$ can then be obtained by sampling the generative joint distribution
\begin{equation}
    \textstyle p^\theta_{0:K}(x_{0:K}) = \pref(x_K) \prod_{k=0}^{K-1} p^\theta_{k|k+1}(x_k | x_{k+1}),
\end{equation}
where $\theta$ is trained through minimizing the negative discrete time (DT) ELBO which is an upper bound on the negative model log-likelihood
\begin{equation}
    \textstyle \E_{\pdata(x_0)} \left[ - \log p^\theta_0(x_0) \right] \leq \E_{q_{0:K}(x_{0:K})} \left[ - \log \frac{p_{0:K}^\theta(x_{0:K})}{q_{1:K|0}(x_{1:K} | x_0)}  \right] = \mathcal{L}_{\textup{DT}} (\theta).
\end{equation}
It was shown in \cite{sohl2015deep} that $\mathcal{L}_{\textup{DT}}$ can be re-written as
\begin{talign}
    \textstyle \mathcal{L}_{\textup{DT}}(\theta) = \mathbb{E}_{\pdata(x_0)} \Big[& \KL( q_{K|0}(x_K | x_0) || \pref(x_K) ) - \E_{q_{1|0}(x_1 | x_0)} \left[ \log p^\theta_{0|1}(x_0 | x_1)\right] \\
    & + \sum_{k=1}^{K-1} \mathbb{E}_{q_{k+1|0}(x_{k+1} | x_0)} \left[ \KL(q_{k|k+1,0}(x_{k}|x_{k+1}, x_0) || p_{k|k+1}^{\theta}(x_{k}|x_{k+1})) \right] \Big]
\end{talign}
where $\KL$ is the Kullback--Leibler divergence. The forward kernels $q_{k+1|k}$ are chosen such that $q_{k|0}(x_k | x_0)$ can be computed efficiently in a time independent of $k$. With this, $\theta$ can be efficiently trained by taking a random selection of terms from $\mathcal{L}_{\textup{DT}}$ in each minibatch and performing a stochastic gradient step.

\section{Continuous Time Framework} \label{sec:ContinuousTimeFramework}

\subsection{Forward process and its time reversal}
Our method is built upon a continuous time process from $t=0$ to $t=T$. State transitions can occur at any time during this process as opposed to the discrete time case where transitions only occur when one of the finite number of transition kernels is applied (see Figure \ref{fig:fig1}). This process is known as a Continuous Time Markov Chain (CTMC), we provide a short overview of CTMCs in Appendix \ref{sec:ApdxPrimerCTMC} for completeness. Giving an intuitive introduction here, we can define a CTMC through an initial distribution $q_0$ and a transition rate matrix $R_t \in \mathbb{R}^{S \times S}$. If the current state is $\tilde{x}$, then the transition rate matrix entry $R_t(\tilde{x}, x)$ is the instantaneous rate (occurrences per unit time) at which state $\tilde{x}$ transitions to state $x$. Loosely speaking, the next state in the process will likely be one for which $R_t(\tilde{x}, x)$ is high, and furthermore, the higher the rate is, the less time it will take for this transition to occur.

It turns out that the transition rate, $R_t$, also defines the infinitesimal transition probability for the process between the two time points $t-\dt$ and $t$
\begin{equation}
    \textstyle q_{t | t-\dt}(x | \tilde{x}) = \delta_{x, \tilde{x}} + R_t(\tilde{x}, x) \dt + o(\dt),
\end{equation}
where $o(\dt)$ represents terms that tend to zero at a faster rate than $\dt$.
Comparing to the discrete time case, we see that $R_t$ assumes an analogous role to the discrete time forward kernel $q_{k+1|k}$ in how we define the forward process.
Therefore, just as in discrete time, we design $R_t$ such that: i) the forward process mixes quickly towards an easy to sample (stationary) distribution, $\pref$, (e.g. uniform), ii) we can analytically obtain $q_{t|0}(x_t | x_0)$ distributions to enable efficient training (see Section \ref{sec:ChoiceOfForwardProcess} for how this is done). We initialize the forward CTMC at $q_0(x_0) = \pdata(x_0)$ at time $t=0$. We denote the marginal at time $t=T$ as $q_T(x_T)$, which should be close to $\pref(x_T)$.

We now consider the time reversal of the forward process, which will take us from the marginal $q_T(x_T)$ back to the data distribution $\pdata(x_0)$ through a reverse transition rate matrix, $\hat{R}_t \in \mathbb{R}^{S \times S}$:
\begin{equation}
    \textstyle q_{t | t+\dt}(\tilde{x} | x) = \delta_{\tilde{x}, x} + \hat{R}_t(x, \tilde{x}) \dt + o(\dt).
    \label{eq:reverseKernelDefn}
\end{equation}
In discrete time, one uses Bayes rule to go from $q_{k+1|k}$ to $q_{k|k+1}$. 
We can use similar ideas to calculate $\hat{R}_t$ from $R_t$ as per the following result.
\begin{proposition}
For a forward in time CTMC, $\{ x_t \}_{t \in [0, T]}$, with rate matrix $R_t$, initial distribution $\pdata(x_0)$ and terminal distribution $q_T(x_T)$, there exists a CTMC with initial distribution $q_T(x_T)$ at $t=T$, terminal distribution $\pdata(x_0)$ at $t=0$ and transition rate matrix $\hat{R}_t$ that runs backwards in time and is almost everywhere equivalent to the time reversal of the forward CTMC, $\{ x_t \}_{t \in [T, 0]}$. Furthermore, $\hat{R}_t$ is related to $R_t$ by the following expression
\begin{equation}
    \textstyle \hat{R}_t(x, \tilde{x}) = R_t(\tilde{x}, x)\sum_{x_0}  \frac{q_{t|0}(\tilde{x}| x_0)}{q_{t|0}(x | x_0)} q_{0|t}(x_0 | x) \quad \text{for}\quad x \neq \tilde{x},
\end{equation}
where $q_{t|0}(x|x_0)$ are the conditional marginals of the forward process and $q_{0|t}(x_0 | x) = q_{t|0}(x | x_0) \pdata(x_0)/q_t(x)$ with $q_t(x)$ being the marginal of the forward process at time $t$. When $x = \tilde{x}$, $\hat{R}_t(x, x) = - \sum_{x' \neq x} \hat{R}_t(x, x')$ because the rows must sum to zero (see Appendix \ref{sec:ApdxPrimerCTMC}).
\label{prop:time_reversal}
\end{proposition}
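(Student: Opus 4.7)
My plan is to obtain the formula for $\hat{R}_t$ directly from the infinitesimal characterization of the reverse chain together with Bayes' rule, and then re-express the resulting ratio of marginals in terms of the conditional forward marginals $q_{t|0}$ and the posterior $q_{0|t}$. Concretely, I start from the definition
\begin{equation}
    q_{t|t+\dt}(\tilde x \mid x) = \delta_{\tilde x, x} + \hat R_t(x, \tilde x)\,\dt + o(\dt),
\end{equation}
and also write $q_{t|t+\dt}(\tilde x \mid x) = q_{t+\dt|t}(x \mid \tilde x)\,q_t(\tilde x)/q_{t+\dt}(x)$. For $x \neq \tilde x$, the forward infinitesimal expansion gives $q_{t+\dt|t}(x \mid \tilde x) = R_t(\tilde x, x)\,\dt + o(\dt)$. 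Substituting, dividing by $\dt$, and taking $\dt \to 0$ (using continuity of $t \mapsto q_t(x)$ for each $x$) yields the preliminary identity $\hat R_t(x, \tilde x) = R_t(\tilde x, x)\, q_t(\tilde x)/q_t(x)$.

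The second step is purely algebraic. Using the definition $q_{0|t}(x_0 \mid x) = q_{t|0}(x \mid x_0)\,\pdata(x_0)/q_t(x)$, I can solve for $\pdata(x_0) = q_{0|t}(x_0 \mid x)\, q_t(x)/q_{t|0}(x \mid x_0)$ and substitute into $q_t(\tilde x) = \sum_{x_0} q_{t|0}(\tilde x \mid x_0)\,\pdata(x_0)$. This gives
\begin{equation}
    \frac{q_t(\tilde x)}{q_t(x)} = \sum_{x_0} \frac{q_{t|0}(\tilde x \mid x_0)}{q_{t|0}(x \mid x_0)}\, q_{0|t}(x_0 \mid x),
\end{equation}
which, multiplied by $R_t(\tilde x, x)$, produces exactly the claimed expression. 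The convention $\hat R_t(x,x) = -\sum_{x' \neq x}\hat R_t(x,x')$ then follows because any bona fide rate matrix must have rows summing to zero, and this is also consistent with the probability $q_{t|t+\dt}(x \mid x) = 1 - \sum_{\tilde x \neq x} q_{t|t+\dt}(\tilde x \mid x)$.

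The main obstacle, and the reason for the \emph{almost everywhere} qualifier in the statement, is handling states $x$ with $q_t(x) = 0$: there the ratio $q_t(\tilde x)/q_t(x)$ and the posterior $q_{0|t}(x_0 \mid x)$ are not defined. On a finite state space these form a measure-zero set under the forward law, so $\hat R_t$ can be set arbitrarily on them without changing the distribution of the reverse process. To justify that the $\hat R_t$ constructed above actually generates a CTMC whose law is the time reversal of the forward CTMC, I would verify that the associated Kolmogorov forward equation is satisfied by $q_{T-t}$: differentiating $q_t(x)$ in time using the forward master equation, substituting the formula for $\hat R_t$, and checking that $q_{T-s}$ solves $\partial_s q_{T-s}(x) = \sum_{\tilde x} \hat R_{T-s}(\tilde x, x)\, q_{T-s}(\tilde x)$ with initial condition $q_T$. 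This, together with uniqueness of solutions to the master equation on a finite state space, yields the existence and equivalence claims; a full derivation can be deferred to the appendix or imported from the standard CTMC time-reversal literature.
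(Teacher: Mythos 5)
Your proposal is correct and follows essentially the same route as the paper: Bayes' rule applied to the two-time transition probabilities, an infinitesimal limit yielding the intermediate identity $\hat R_t(x,\tilde x)=R_t(\tilde x,x)\,q_t(\tilde x)/q_t(x)$ (the paper obtains this same identity via the Kolmogorov forward and backward equations rather than a direct first-order expansion), and then the identical algebraic rewriting of the marginal ratio in terms of $q_{t|0}$ and $q_{0|t}$. One small point of divergence: the paper's \emph{almost everywhere} qualifier refers to the right-continuous modification of $\{x_{T-t}\}$ at the jump times (needed for the reversed path to be a bona fide CTMC), not to states with $q_t(x)=0$ as you suggest; also note that verifying the master equation for $q_{T-t}$ only matches marginals, whereas the equivalence of laws follows more directly from the time-symmetry of the Markov property together with your computed transition kernel.
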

Unfortunately, $\hat{R}_t$ is intractable due to the intractability of $q_t(x)$ and thus of $q_{0|t}(x_0|x)$. Therefore, we consider an approximation $\hat{R}_t^\theta$ of $\hat{R}_t$ by approximating $q_{0|t}(x_0 | x)$ with a parametric denoising model, $p^\theta_{0|t}(x_0 | x)$:
\begin{equation}
    \textstyle \hat{R}_t^\theta(x, \tilde{x}) = R_t(\tilde{x}, x) \sum_{x_0} \frac{q_{t|0}(\tilde{x} | x_0)}{q_{t|0}(x | x_0)} p^\theta_{0|t}(x_0 | x) \quad \text{for} \quad x \neq \tilde{x}
    \label{eq:RthetaRate}
\end{equation}
and $\hat{R}^\theta_t(x, x) = - \sum_{x' \neq x} \hat{R}^\theta_t(x, x')$ as before. As a further analogy to the discrete time case, notice that when $x \neq \tilde{x}$, $\hat{R}_t^\theta$ has the same form as the discrete time parametric reverse kernel, $p^\theta_{k|k+1}$ defined in eq (\ref{eq:discreteTimeParametricKernel}) but with the forward kernel, $q_{k+1|k}$, replaced by the forward rate, $R_t$.

\subsection{Continuous Time ELBO}
In discrete time, $\theta$ is trained by minimizing the discrete time negative ELBO, $\mathcal{L}_{\textup{DT}}$, formed from the forward and reverse processes. We mirror this approach in continuous time by minimizing the corresponding continuous time (CT) negative ELBO, $\mathcal{L}_{\textup{CT}}$, as derived below.
\begin{proposition}
\label{prop:CTELBO}
For the reverse in time CTMC with initial distribution $\pref(x_T)$, terminal distribution $p_0^\theta(x_0)$, and reverse rate $\hat{R}_t^\theta$, an upper bound on the negative model log-likelihood, $\mathbb{E}_{\pdata(x_0)}[ - \log p_0^\theta(x_0)]$, is given by
\begin{equation}
   \textstyle \LCT(\theta) = T \, \mathbb{E}_{t\sim \mathcal{U}(0, T) q_t(x) r_t(\tilde{x}|x)} \Big[ \Big\{ \sum_{x' \neq x} \hat{R}_t^\theta(x, x') \Big\} - \mathcal{Z}^t(x) \log \left(\hat{R}_t^\theta(\tilde{x}, x) \right) \Big] + C,
\end{equation}
where $C$ is a constant independent of $\theta$ and
\begin{equation}
    \textstyle \mathcal{Z}^t(x) = \sum_{x' \neq x} R_t(x, x') \hspace{2cm} r_t(\tilde{x}|x) = (1 - \delta_{\tilde{x}, x} ) R_t(x, \tilde{x})/\mathcal{Z}^t(x).
\end{equation}

\end{proposition}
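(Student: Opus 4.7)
The plan is to derive $\LCT$ as the $\Delta t \to 0$ limit of $\LDT$ under the discretization $\dt = T/K$, $t_k = k\dt$. Three ingredients of $\LDT$ handle easily: the terminal term $\KL(q_{K|0}(\cdot|x_0)\|\pref)$ tends to $\KL(q_{T|0}(\cdot|x_0)\|\pref)$, which is $\theta$-independent and absorbed into $C$; the reconstruction term $-\E_{q_{1|0}}[\log p^\theta_{0|1}(x_0|x_1)]$ is $O(\dt\log\dt)$ and vanishes because $q_{1|0}(x_1|x_0)\to\delta_{x_1,x_0}$ at rate $O(\dt)$; and the discrete sum $\sum_{k=1}^{K-1}\dt\,(\cdot)$ converges to $\int_0^T(\cdot)\,dt$ by standard Riemann-sum arguments. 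The substantive work is computing the per-step KL divergence to leading order in $\dt$.

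For this, I would substitute the infinitesimal rate expansions into both sides. For $\xdt\neq x$, Bayes' rule gives $q_{t-\dt|t,0}(\xdt|x,x_0)=\alpha_{\xdt,x_0}(x)\dt+o(\dt)$ with $\alpha_{\xdt,x_0}(x)\triangleq R_t(\xdt,x)q_{t|0}(\xdt|x_0)/q_{t|0}(x|x_0)$, while $p^\theta_{t-\dt|t}(\xdt|x)=\hat{R}^\theta_t(x,\xdt)\dt+o(\dt)$. Taylor expansion of both the off-diagonal contributions (numerator and denominator each $O(\dt)$) and the diagonal one, of the form $(1-A\dt)\log\tfrac{1-A\dt}{1-B\dt}$, yields
\begin{equation*}
\KL\bigl(q_{t-\dt|t,0}(\cdot|x,x_0)\,\big\|\,p^\theta_{t-\dt|t}(\cdot|x)\bigr) = \dt\sum_{\xdt\neq x}\bigl\{\hat{R}^\theta_t(x,\xdt)-\alpha_{\xdt,x_0}(x)\log\hat{R}^\theta_t(x,\xdt)\bigr\} + \dt\,c_t(x,x_0) + o(\dt),
\end{equation*}
where $c_t(x,x_0)$ collects the $\alpha\log\alpha-\alpha$ pieces and is $\theta$-independent. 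Summing over $k$, passing to the integral, and marginalising $\alpha_{\xdt,x_0}(x)$ against $q_{0|t}(x_0|x)$---which by Proposition~1 produces exactly $\hat{R}_t(x,\xdt)$---the $\theta$-dependent part of $\LCT$ becomes $\int_0^T\E_{q_t(x)}[\sum_{\xdt\neq x}(\hat{R}^\theta_t(x,\xdt)-\hat{R}_t(x,\xdt)\log\hat{R}^\theta_t(x,\xdt))]\,dt$.

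The final step is matching the stated form, whose log factor is $\log\hat{R}^\theta_t(\xdt,x)$ weighted by $R_t(x,\xdt)=\mathcal{Z}^t(x)r_t(\xdt|x)$, rather than $\log\hat{R}^\theta_t(x,\xdt)$ weighted by $\hat{R}_t(x,\xdt)$. The key is a detailed-balance-type identity
\begin{equation*}
q_t(\xdt)\,\hat{R}_t(\xdt,x) = q_t(x)\,R_t(x,\xdt),
\end{equation*}
which follows directly from the closed form of $\hat{R}_t$ in Proposition~1 combined with $q_{0|t}(x_0|\xdt)q_t(\xdt)=q_{t|0}(\xdt|x_0)\pdata(x_0)$. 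Relabelling $x\leftrightarrow\xdt$ in the double sum and applying this identity converts the log term into $\E_{q_t(x)}[\sum_{\xdt\neq x}R_t(x,\xdt)\log\hat{R}^\theta_t(\xdt,x)] = \E_{q_t(x)r_t(\xdt|x)}[\mathcal{Z}^t(x)\log\hat{R}^\theta_t(\xdt,x)]$. Writing the outer integral as $T$ times an expectation over $t\sim\mathcal{U}(0,T)$ then recovers $\LCT$. I expect the main obstacle to be spotting this detailed-balance identity: without it, the log term would require drawing jumps from the intractable $\hat{R}_t$, whereas with it training reduces to sampling from the tractable forward rate $R_t$.
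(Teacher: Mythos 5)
Your proposal is correct and follows the same overall strategy as the paper's second proof of Proposition~\ref{prop:CTELBO}: take the $\dt \to 0$ limit of $\LDT$, with the terminal KL and reconstruction terms disposed of exactly as you describe. The difference lies in how the per-step KL is reduced. The paper first collapses $\E_{\pdata(x_0) q_{k+1|0}}[\KL(q_{k|k+1,0} \,\|\, p^\theta_{k|k+1})]$ into a cross-entropy against the forward joint, $-\E_{q_k(x_k) q_{k+1|k}(x_{k+1}|x_k)}[\log p^\theta_{k|k+1}(x_k|x_{k+1})] + C$, so that when $q_{k+1|k} = \delta + R_k \dt + o(\dt)$ is expanded the coefficient of the log term is already the forward rate $R_k(x_k,x_{k+1})$ and no further identity is needed. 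You instead Taylor-expand the conditional KL directly, obtain the true reverse rate $\hat{R}_t(x,\tilde{x})$ as the weight after marginalising over $q_{0|t}(x_0|x)$, and then convert to the tractable form via the identity $q_t(x)\hat{R}_t(x,\tilde{x}) = q_t(\tilde{x})R_t(\tilde{x},x)$ and a relabelling of the double sum. That identity does hold (it is the intermediate form $\hat{R}_t(x,\tilde{x}) = R_t(\tilde{x},x)\, q_t(\tilde{x})/q_t(x)$ established in the proof of Proposition~\ref{prop:time_reversal}), and your use of it is precisely the rate-level analogue of the paper's joint-distribution rearrangement, so the two derivations are equivalent; yours makes the ``why is the weight the forward rate rather than the reverse rate'' question explicit at the cost of an extra step, while the paper's sidesteps it by reorganising before expanding. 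For completeness, note the paper also gives a first, measure-theoretic proof via Radon--Nikodym derivatives of path measures and Dynkin's lemma, which your proposal does not attempt.
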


Here $r_t(\tilde{x} | x)$ gives the probability of transitioning from $x$ to $\tilde{x}$, given that we know a transition occurs at time $t$. We can optimize this objective efficiently with stochastic gradient descent. For a gradient update, we sample a batch of datapoints from $\pdata(x_0)$, noise each datapoint using a random time, $t \sim \mathcal{U}(0, T)$, $x \sim q_{t|0}(x | x_0)$ and finally sample an auxiliary $\tilde{x}$ from $r_t(\tilde{x} | x)$ for each $x$.
Intuitively, ($x$, $\tilde{x}$) are a pair of states following the forward in time noising process. Minimizing the second term in $\LCT$ maximizes the reverse rate for this pair, but going in the backwards direction, $\tilde{x}$ to $x$. This is how $\hat{R}_t^\theta$ learns to reverse the noising process. Intuition on the first term and a direct comparison to $\LDT$ is given in Appendix \ref{sec:ApdxCTDTELBOComparison}.

The first argument of $\smash{\hat{R}_t^\theta}$ is input into $\smash{p_{0|t}^\theta}$ so we naively require two network forward passes on $x$ and $\tilde{x}$ to evaluate the objective. We can avoid this by approximating the $q_t(x)$ sample in the first term with $\tilde{x}$ meaning we need only evaluate the network once on $\tilde{x}$. The approximation is valid because, as we show in Appendix \ref{sec:ApdxCTELBOOneForwardPass}, $\tilde{x}$ is approximately distributed according to $q_{t+\delta t}$ for $\delta t$ very small.

\section{Efficient Forward and Backward Sampling} \label{sec:EfficientForwardAndBackwardSampling}

\subsection{Choice of Forward Process} \label{sec:ChoiceOfForwardProcess}
The transition rate matrix $R_t$ needs to be chosen such that the forward process: i) mixes quickly towards $\pref$, and ii) the $q_{t|0}(x | x_0)$ distributions can be analytically obtained. The Kolmogorov differential equation for the CTMC needs to be integrated to obtain $q_{t|0}(x | x_0)$. This can be done analytically when $R_t$ and $R_{t'}$ commute for all $t, t'$, see Appendix \ref{sec:ApdxChoiceOfForwardProcess}. An easy way to meet this condition is to let $R_t = \beta(t) R_b$ where $R_b \in \mathbb{R}^{S \times S}$ is a user-specified time independent base rate matrix and $\beta(t) \in \mathbb{R}$ is a time dependent scalar. We then obtain the analytic expression
\begin{equation}
    \textstyle q_{t|0}(x = j | x_0 = i) = \left( Q \text{exp} \left[ \Lambda \int_0^t \beta(s) ds \right] Q^{-1} \right)_{ij}
\end{equation}
where $R_b = Q \Lambda Q^{-1}$ is the eigendecomposition of matrix $R_b$ and $\text{exp} [\cdot]$ the element-wise exponential.

Our choice of $\beta$ schedule is guided by \cite{ho2020denoising, song2020score}, $\beta(t) = a b^t \log(b)$.
The hyperparameters $a$ and $b$ are selected such that $\smash{q_T(x) \approx \pref(x)}$ at the terminal time $t=T$ while having a steady speed of `information corruption' which ensures that $\smash{\hat{R}_t}$ does not vary quickly in a short span of time.

We experiment with a variety of $R_b$ matrices, for example, a uniform rate, $R_b = \mathbbm{1} \mathbbm{1}^T - S \mathrm{Id}$, where $\mathbbm{1} \mathbbm{1}^T$ is a matrix of ones and $\mathrm{Id}$ is the identity. For problems with a heavy spatial bias, e.g. images, we can instead use a forward rate that only encourages transitions to nearby states; details and the links to the corresponding discrete time processes can be found in Appendix \ref{sec:ApdxChoiceOfForwardProcess}.

\subsection{Factorizing Over Dimensions} \label{sec:FactorizeDim}

Our aim is to model data that is $D$ dimensional, with each dimension taking one value from $S$ possibilities. We now slightly redefine notation and say $\bm{x}^{1:D} \in \mathcal{X}^D$, $|\mathcal{X}| = S$. In this setting, calculating transition probabilities naively would require calculating $S^D$ rate values corresponding to each of the possible next states. This is intractable for any reasonably sized $S$ and $D$. We avoid this problem simply by factorizing the forward process such that each dimension propagates independently. Since this is a continuous time process and each dimension's forward process is independent of the others, the probability two or more dimensions transition at exactly the same time is zero. Therefore, overall in the full dimensional forward CTMC, each transition only ever involves a change in exactly one dimension. For the time reversal CTMC, it will also be true that exactly one dimension changes in each transition. This makes computation tractable because of the $S^D$ rate values, only $D \times (S-1) + 1$ are non-zero - those corresponding to transitions where exactly one dimension changes plus the no change transition. Finally, we note that even though dimensions propagate independently in the forward direction, they are not independent in the reverse direction because the starting points for each dimension's forward process are not independent for non factorized $\pdata$.
The following proposition shows the exact forms for the forward and reverse rates in this case.

\begin{proposition}
\label{prop:dimfactorize}
If the forward process factorizes as $q_{t|s}(\bm{x}_t^{1:D} | \bm{x}_s^{1:D}) = \prod_{d=1}^D q_{t|s}(x_t^d | x_s^d)$, $t>s$, then the forward and reverse rates are of the form
\begin{gather}
    \textstyle R_t^{1:D}(\tilde{\bm{x}}^{1:D}, \bm{x}^{1:D}) = \sum_{d=1}^D R_t^d(\tilde{x}^d, x^d) \delta_{\bm{x}^{\Dd}, \tilde{\bm{x}}^{\Dd}},\\
    \textstyle \hat{R}_t^{1:D}(\bm{x}^{1:D}, \tilde{\bm{x}}^{1:D}) = \sum_{d=1}^D R_t^d(\tilde{x}^d, x^d) \delta_{\bm{x}^{\Dd}, \tilde{\bm{x}}^{\Dd}} \sum_{x_0^d} q_{0|t}(x_0^d | \bm{x}^{1:D}) \frac{q_{t|0}(\tilde{x}^d | x_0^d)}{q_{t|0}(x^d | x_0^d)},
\end{gather}
where $R_t^d \in \mathbb{R}^{S \times S}$ and $\delta_{\bm{x}^{\Dd}, \tilde{\bm{x}}^{\Dd}}$ is 1 when all dimensions except for $d$ are equal.
\end{proposition}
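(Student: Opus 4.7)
The plan is to derive the two identities separately by expanding the infinitesimal transition probabilities to first order in $\dt$ for the forward rate, and then invoking Proposition 1 for the reverse rate, exploiting the factorized form of $q_{t|0}$ to collapse the product back to a single-dimension expression.

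For the forward rate, I would start from the defining relation $q_{t|t-\dt}(\bm{x} | \tilde{\bm{x}}) = \delta_{\bm{x}, \tilde{\bm{x}}} + R_t^{1:D}(\tilde{\bm{x}}, \bm{x})\dt + o(\dt)$ and substitute the dimension-wise factorization. Each factor can be written as $q_{t|t-\dt}(x^d | \tilde{x}^d) = \delta_{x^d, \tilde{x}^d} + R_t^d(\tilde{x}^d, x^d)\dt + o(\dt)$. Expanding $\prod_{d=1}^D(\delta_{x^d,\tilde{x}^d} + R_t^d(\tilde{x}^d,x^d)\dt + o(\dt))$, the $O(1)$ term is $\prod_d \delta_{x^d, \tilde{x}^d} = \delta_{\bm{x},\tilde{\bm{x}}}$, the first-order term picks out one dimension at a time giving $\sum_d R_t^d(\tilde{x}^d, x^d)\prod_{d' \neq d}\delta_{x^{d'},\tilde{x}^{d'}} = \sum_d R_t^d(\tilde{x}^d,x^d)\delta_{\bm{x}^{\Dd},\tilde{\bm{x}}^{\Dd}}$, and all remaining terms are $O(\dt^2) = o(\dt)$. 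Matching coefficients of $\dt$ yields the claimed form of $R_t^{1:D}$.

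For the reverse rate, I would apply Proposition 1 to the $D$-dimensional CTMC, giving for $\bm{x} \neq \tilde{\bm{x}}$:
\begin{equation}
\hat{R}_t^{1:D}(\bm{x},\tilde{\bm{x}}) = R_t^{1:D}(\tilde{\bm{x}},\bm{x})\sum_{\bm{x}_0}\frac{q_{t|0}(\tilde{\bm{x}}|\bm{x}_0)}{q_{t|0}(\bm{x}|\bm{x}_0)}q_{0|t}(\bm{x}_0|\bm{x}).
\end{equation}
Plugging in the forward rate from step one, the Kronecker delta $\delta_{\bm{x}^{\Dd},\tilde{\bm{x}}^{\Dd}}$ forces $\tilde{x}^{d'} = x^{d'}$ for all $d' \neq d$ within the $d$-th term of the sum. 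Using $q_{t|0}(\bm{x}|\bm{x}_0) = \prod_{d'} q_{t|0}(x^{d'}|x_0^{d'})$, the corresponding ratio telescopes, cancelling every factor except for coordinate $d$, leaving $q_{t|0}(\tilde{x}^d|x_0^d)/q_{t|0}(x^d|x_0^d)$. Finally I would marginalize $\bm{x}_0$ over the coordinates $1{:}D\setminus d$: since the ratio only depends on $x_0^d$, the inner sum is $\sum_{\bm{x}_0^{\Dd}}q_{0|t}(\bm{x}_0|\bm{x}) = q_{0|t}(x_0^d|\bm{x})$, producing the stated expression.

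The derivation is essentially computational, so there is no deep obstacle. The only point requiring care is the bookkeeping in the second step: one must track which summations are over the full $\bm{x}_0$ versus the marginal $x_0^d$, and verify that the telescoping cancellation in the conditional ratio is triggered by the Kronecker delta $\delta_{\bm{x}^{\Dd},\tilde{\bm{x}}^{\Dd}}$ sitting in front of each summand. No separate treatment of the diagonal case $\bm{x} = \tilde{\bm{x}}$ is needed beyond the row-sum-zero convention already stated in Proposition 1.
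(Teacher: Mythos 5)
Your proposal is correct. The reverse-rate half is essentially identical to the paper's argument: both apply Proposition \ref{prop:time_reversal} to the full $D$-dimensional chain, let the Kronecker delta collapse the ratio $q_{t|0}(\tilde{\bm{x}}^{1:D}|\bm{x}_0^{1:D})/q_{t|0}(\bm{x}^{1:D}|\bm{x}_0^{1:D})$ to the single factor $q_{t|0}(\tilde{x}^d|x_0^d)/q_{t|0}(x^d|x_0^d)$, and then marginalize $\bm{x}_0^{\Dd}$ out of the posterior to leave $q_{0|t}(x_0^d|\bm{x}^{1:D})$. For the forward rate, however, you take a genuinely different and somewhat more elementary route: you expand the product $\prod_d\bigl(\delta_{x^d,\tilde{x}^d}+R_t^d(\tilde{x}^d,x^d)\dt+o(\dt)\bigr)$ directly and read off the coefficient of $\dt$, whereas the paper substitutes the factorized kernel into the Kolmogorov forward equation, rearranges the resulting sum, and then identifies $R_t^{1:D}$ by specializing to the degenerate kernel $q_{t|s}(\bm{y}^{1:D}|\tilde{\bm{x}}^{1:D})=\delta_{\bm{y}^{1:D},\tilde{\bm{x}}^{1:D}}$ (i.e.\ letting $s\to t$). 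Your expansion reaches the same conclusion while avoiding that specialization step, at the mild cost of implicitly invoking the definition of each marginal rate $R_t^d$ via the one-dimensional infinitesimal kernel, which is part of the setup anyway. Both arguments are sound; yours is arguably the cleaner of the two for this half.
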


To find $\hat{R}_t^{\theta \, 1:D}$ we simply replace $q_{0|t}(x_0^d | \bm{x}^{1:D})$ with $p_{0|t}^\theta(x_0^d | \bm{x}^{1:D})$ which is easily modeled with a neural network that outputs conditionally independent state probabilities in each dimension. In Appendix \ref{sec:ApdxCTELBOFactorization} we derive the form of $\mathcal{L}_{\textup{CT}}$ when we use this factorized form for $R_t^{1:D}$ and $\hat{R}_t^{\theta \, 1:D}$.

\subsection{Simulating the Generative Reverse Process with Tau-Leaping} \label{sec:TauLeaping}

The parametric generative reverse process is a CTMC with rate matrix $\hat{R}_t^{\theta \, 1:D}$. Simulating this process from distribution $\pref(\bm{x}_T^{1:D})$ at time $t=T$ back to $t=0$ will produce approximate samples from $\pdata(\bm{x}_0^{1:D})$. The process could be simulated exactly using Gillespie's Algorithm \cite{gillespie1976general, gillespie1977exact, wilkinson2018stochastic} which alternates between i) sampling a holding time to remain in the current state and ii) sampling a new state according to the current rate matrix, $\hat{R}_t^{\theta \, 1:D}$ (see Appendix \ref{sec:ApdxCTMCSimulation}). This is inefficient for large $D$ because we would need to step through each transition individually and so only one dimension would change for each simulation step.

Instead, we use tau-leaping \cite{gillespie2001approximate,wilkinson2018stochastic}, a very popular approximate simulation method developed in chemical physics. Rather than step back through time one transition to the next, tau-leaping leaps from $t$ to $t-\tau$ and applies all transitions that occurred in $[t-\tau, t]$ simultaneously. To make a leap, we assume $\hat{R}_t^{\theta \, 1:D}$ and $\bm{x}_t^{1:D}$ remain constant in $[t-\tau, t]$. As we propagate from $t$ to $t-\tau$, we count all of the transitions that occur, but hold off on actually applying them until we reach $t-\tau$, such that $\bm{x}_t^{1:D}$ remains constant in $[t-\tau, t]$. Assuming $\hat{R}_t^{\theta \, 1:D}$ and $\bm{x}_t^{1:D}$ remain constant, the number of times a transition from $\bm{x}_t^{1:D}$ to $\tilde{\bm{x}}^{1:D}$ occurs in $[t-\tau, t]$ is Poisson distributed with mean $\tau \hat{R}_t^{\theta \, 1:D}(\bm{x}_t^{1:D}, \tilde{\bm{x}}^{1:D})$. Once we reach $t-\tau$, we apply all transitions that occurred simultaneously i.e. $\bm{x}_{t-\tau}^{1:D} = \bm{x}_t^{1:D} + \sum_{i} P_i (\tilde{\bm{x}}_i^{1:D} - \bm{x}_t^{1:D})$ where $P_i$ is a Poisson random variable with mean $\tau \hat{R}_t^{\theta \, 1:D}(\bm{x}_t^{1:D}, \tilde{\bm{x}}_i^{1:D})$. Note the sum assumes a mapping from $\mathcal{X}$ to $\mathbb{Z}$.

\begin{wrapfigure}{l}{4cm}
\vspace{0.1cm}
\includegraphics[width=3.5cm]{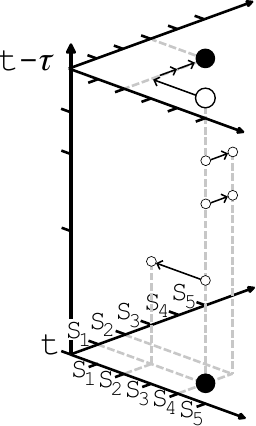}
\vspace{0.28cm}
\caption{3D visualization of one tau-leaping step from $x_t^{1:2} = \{ S_4, S_1 \}$ to $x_{t-\tau}^{1:2} = \{ S_2, S_3 \}$. Here, $D=2$, $|\mathcal{X}|=5$, $P_{12}=1$, $P_{22}=2$, all other $P_{ds}=0$.}
\label{fig:tauleaping}
\vspace{-0.8cm}
\end{wrapfigure}

Using our knowledge of $\hat{R}_t^{\theta \, 1:D}$, we can further unpack this update. Namely, $\hat{R}_t^{\theta \, 1:D}(\bm{x}_t^{1:D}, \tilde{\bm{x}}^{1:D})$ can only be non-zero when $\tilde{\bm{x}}^{1:D}$ has a different value to $\bm{x}_t^{1:D}$ in exactly one dimension (rates for multi-dimensional changes are zero). Explicitly summing over these options we get $\bm{x}_{t-\tau}^{1:D} = \bm{x}_t^{1:D} + \sum_{d=1}^D \sum_{s=1 \backslash x_t^d}^{S} P_{ds} (s - x_t^d) \bm{e}^d$ where $\bm{e}^d$ is a one-hot vector with a $1$ at dimension $d$ and $P_{ds}$ is a Poisson random variable with mean $\tau \hat{R}_t^{\theta \, 1:D}(\bm{x}_t^{1:D}, \bm{x}_t^{1:D} + (s - x_t^d) \bm{e}^d)$. Since multiple $P_{ds}$ can be non-zero, we see that tau-leaping allows $\bm{x}_t^{1:D}$ to change in multiple dimensions in a single step. Figure \ref{fig:tauleaping} visualizes this idea. During the $[t-\tau, t]$ interval, one jump occurs in dimension $1$ and two jumps occur in dimension $2$. These are all applied simultaneously once we reach $t-\tau$. When our discrete data has ordinal structure (e.g. Section \ref{sec:ImageModeling}) our mapping to $\mathbb{Z}$ is not arbitrary and making multiple jumps within the same dimension ($\sum_{s=1 \backslash x_t^d}^S P_{ds} > 1$) is meaningful. In the non-ordinal/categorical case (e.g. Section \ref{sec:MonophonicMusic}) the mapping to $\mathbb{Z}$ is arbitrary and so, although taking simultaneous jumps in different dimensions is meaningful, taking multiple jumps within the same dimension is not. For this type of data, we reject changes to $x_t^d$ for any $d$ for which $\smash{\sum_{s=1 \backslash x_t^d}^S P_{ds} > 1}$. In practice, the rejection rate is very small when $\smash{R_t^{1:D}}$ is suitable for categorical data (e.g. uniform), see Appendix \ref{sec:ApdxMusicExperiment}. In Section \ref{sec:ErrorBound}, our error bound accounts for this low probability of rejection and also the low probability of an out of bounds jump that we observe in practice in the ordinal case.

The tau-leaping approximation improves with smaller $\tau$, recovering exact simulation in the limit as $\tau \rightarrow 0$. Exact simulation is similar to an autoregressive model in that only one dimension changes per step. Increasing $\tau$ and thus the average number of dimensions changing per step gives us a natural way to modulate the `autoregressiveness' of the model and trade sample quality with compute (Figure \ref{fig:cifar10plots} right). We refer to our method of using tau-leaping to simulate the reverse CTMC as $\tau$LDR (tau-leaping denoising reversal) which we formalize in Algorithm \ref{alg:tauleaping} in Appendix \ref{sec:ApdxCTMCSimulation}.

We note that theoretically, one could approximate $\hat{R}_t^{\theta \, 1:D}$ as constant in the interval $[t-\tau, t]$, and construct a transition probability matrix by solving the forward Kolmogorov equation with the matrix exponential $\smash{P_{t-\tau | t} \approx \text{exp} (\tau \hat{R}_t^{\theta \, 1:D} )}$. However, for the learned $\smash{\hat{R}_t^{\theta \, 1:D} \in \mathbb{R}^{S^D \times S^D}}$ matrix, it is intractable to compute this matrix exponential so we use tau-leaping for sampling instead.

\subsection{Predictor-Corrector}
During approximate reverse sampling, we aim for the marginal distribution of samples at time $t$ to be close to $q_t(x_t)$ (the marginal at time $t$ of the true CTMC). The continuous time framework allows us to exploit additional information to more accurately follow the reverse progression of marginals, $\{q_t(x_t)\}_{t \in [T, 0]}$ and improve sample quality. Namely, after a tau-leaping `predictor' step using rate $\hat{R}_t^{\theta}$, we can apply `corrector' steps with rate $R_t^c$ which has $q_t(x_t)$ as its stationary distribution. The corrector steps bring the distribution of samples at time $t$ closer to the desired $q_t(x_t)$ marginal. $R_t^c$ is easy to calculate as stated below

\begin{proposition} \label{prop:corrector_rate}
For a forward CTMC with marginals $\{ q_t(x_t) \}_{t \in [0, T]}$, forward rate, $R_t$, and corresponding reverse CTMC with rate $\hat{R}_t$, the rate $R_t^c= R_t + \hat{R}_t$ has $q_t(x_t)$ as its stationary distribution.
\end{proposition}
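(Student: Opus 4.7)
The plan is to verify the stationarity condition directly: a probability vector $\pi$ is stationary for a rate matrix $R$ iff $\sum_x \pi(x) R(x,y) = 0$ for every $y$ (the rows of $R$ sum to zero, so this is the master-equation balance). I will show $\sum_x q_t(x) R_t^c(x,y) = \sum_x q_t(x) R_t(x,y) + \sum_x q_t(x) \hat R_t(x,y) = 0$, using the explicit formula for $\hat R_t$ given in Proposition \ref{prop:time_reversal}.

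The key algebraic step I would carry out first is a pointwise ``detailed-balance''-type identity. Starting from $\hat R_t(x,\tilde x) = R_t(\tilde x, x) \sum_{x_0} \frac{q_{t|0}(\tilde x\mid x_0)}{q_{t|0}(x\mid x_0)} q_{0|t}(x_0\mid x)$ for $x \neq \tilde x$, I substitute $q_{0|t}(x_0\mid x) = q_{t|0}(x\mid x_0)\pdata(x_0)/q_t(x)$. The $q_{t|0}(x\mid x_0)$ factors cancel, and the remaining sum $\sum_{x_0} q_{t|0}(\tilde x\mid x_0)\pdata(x_0)$ equals $q_t(\tilde x)$. This yields the clean relation
\begin{equation}
q_t(x)\hat R_t(x,\tilde x) = q_t(\tilde x) R_t(\tilde x, x), \qquad x \neq \tilde x. \label{eq:dbplan}
\end{equation}

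Next I split the sum $\sum_x q_t(x) R_t^c(x,y)$ into the diagonal term $x = y$ and the off-diagonal terms $x \neq y$. Using \eqref{eq:dbplan} in both directions (it is symmetric under swapping the roles of $R_t$ and $\hat R_t$) gives
\begin{align}
\sum_{x \neq y} q_t(x)\hat R_t(x,y) &= q_t(y)\sum_{x\neq y} R_t(y,x) = -q_t(y) R_t(y,y),\\
\sum_{x \neq y} q_t(x) R_t(x,y) &= q_t(y)\sum_{x\neq y} \hat R_t(y,x) = -q_t(y)\hat R_t(y,y),
\end{align}
where the final equalities use that rows of $R_t$ and $\hat R_t$ each sum to zero. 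Adding these to the diagonal contribution $q_t(y)[R_t(y,y) + \hat R_t(y,y)]$ gives exactly $0$, so $q_t$ is stationary for $R_t^c$.

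I do not expect a serious obstacle; the entire argument is driven by identity \eqref{eq:dbplan}, and the only point that deserves care is the bookkeeping of the diagonal entries (ensuring I use the row-sum-zero convention consistently for both $R_t$ and $\hat R_t$, and that \eqref{eq:dbplan} only needs to hold off-diagonally). As a sanity check one can instead derive the same result more conceptually from Kolmogorov's forward equation: $\partial_t q_t(y) = \sum_x q_t(x) R_t(x,y)$ for the forward CTMC and $-\partial_t q_t(y) = \sum_x q_t(x) \hat R_t(x,y)$ for the time-reversed CTMC (which shares the marginals $q_t$); summing these two equations gives $\sum_x q_t(x) R_t^c(x,y) = 0$ immediately. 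Either route closes the proof.
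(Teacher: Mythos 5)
Your proof is correct. Your primary route differs from the paper's: you verify the balance condition $\sum_x q_t(x) R_t^c(x,y)=0$ directly by algebra, using the flux identity $q_t(x)\hat R_t(x,\tilde x)=q_t(\tilde x)R_t(\tilde x,x)$ (which you correctly extract from the formula in Proposition~\ref{prop:time_reversal}; it is in fact the intermediate form $\hat R_t(x,\tilde x)=R_t(\tilde x,x)\,q_t(\tilde x)/q_t(x)$ appearing in that proposition's proof), together with careful bookkeeping of the diagonal entries under the row-sum-zero convention. The paper instead takes exactly the route you relegate to a ``sanity check'': write the Kolmogorov forward equation for the forward chain, $\partial_t q_t(y)=\sum_x R_t(x,y)q_t(x)$, and for the reverse chain, $-\partial_t q_t(y)=\sum_x \hat R_t(x,y)q_t(x)$, and add them. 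The two arguments buy slightly different things: the Kolmogorov-equation version is shorter and works for any time reversal without needing the explicit form of $\hat R_t$, but it silently relies on the reverse CTMC having marginals $q_t$ (guaranteed by Proposition~\ref{prop:time_reversal}); your algebraic version makes the mechanism explicit --- the off-diagonal probability fluxes of $R_t$ and $\hat R_t$ under $q_t$ exactly cancel the diagonal leakage terms --- at the cost of assuming $q_t>0$ so the ratio is well defined. Both close the proof; no gap.
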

In practice, we approximate $R_t^c$ by replacing $\hat{R}_t$ with $\hat{R}_t^\theta$. This is directly analogous to Predictor-Corrector samplers in continuous state spaces \cite{song2020score} that predict by integrating the reverse SDE and correct with score-based Markov chain Monte Carlo steps, see Appendix \ref{sec:ApdxPredictorCorrectorDiscussion} for further discussion.

\subsection{Error Bound} \label{sec:ErrorBound}
Our continuous time framework also allows us to provide a novel theoretical bound on the error between the true data distribution and the sample distribution generated via tau-leaping (without predictor-corrector steps), in terms of the error in our approximation of the reverse rate and the mixing of the forward noising process.

We assume we have a time-homogeneous rate matrix $R_t$ on $\mathcal{X}$, from which we construct the factorized rate matrix $R_t^{1:D}$ on $\mathcal{X}^D$ by setting $R^d_t = R_t$ for each $d$. Note that by rescaling time by a factor of $\beta(t)$ we can transform our choice of rate from Section \ref{sec:ChoiceOfForwardProcess} to be time-homogeneous. We will denote $|R| = \sup_{t \in [0,T], x \in \mathcal{X}} |R_t(x,x)|$, and let $t_{\textup{mix}}$ be the (1/4)-mixing time of the CTMC with rate $R_t$ (see \cite[Chapter 4.5]{levin2009markovchains}).

\begin{theorem}
\label{thm:error_bound}
For any $D \geq 1$ and distribution $\pdata$ on $\mathcal{X}^D$, let $\{x_t\}_{t \in [0,T]}$ be a CTMC starting in $\pdata$ with rate matrix $R^{1:D}_t$ as above. Suppose that $\hat{R}_t^{\theta \, 1:D}$ is an approximation to the reverse rate matrix and let $(y_k)_{k = 0, 1, \dots, N}$ be a tau-leaping approximation to the reverse dynamics with maximum step size $\tau$. Suppose further that there is some constant $M > 0$ independent of $D$ such that
\begin{equation}
\label{eq:error_bound_condition}
    \sum_{y \neq x} \left| \hat R_t^{1:D}(x,y) - \hat{R}_t^{\theta \, 1:D}(x,y) \right| \leq M
\end{equation}
for all $t \in [0,T]$. Then under the assumptions in Appendix \ref{sec:ApdxErrorBound}, there are constants $C_1, C_2 > 0$ depending on $\mathcal{X}$ and $R_t$ but not $D$ such that, if $\mathcal{L}(y_0)$ denotes the law of $y_0$, we have the total variation bound
\begin{equation}
\textstyle
    ||\mathcal{L}(y_0) - \pdata||_{\textup{TV}} \leq 3MT + \left\{ \big(|R|SDC_1\big)^2 + \frac{1}{2} C_2(M + C_1 SD|R|) \right\} \tau T + 2 \exp \left \{ - \frac{T \log^2 2}{t_{\textup{mix}} \log 4D} \right \}
\end{equation}
\end{theorem}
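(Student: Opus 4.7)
The plan is a triangle-inequality decomposition into three conceptually distinct error sources: the mixing gap of the forward process at time $T$, the rate-approximation error between $\hat R_t^{1:D}$ and $\hat R_t^{\theta, 1:D}$, and the discretisation error introduced by tau-leaping. Concretely, let $\nu$ denote the distribution at $t=0$ of the exact reverse CTMC with rate $\hat R^{\theta, 1:D}$ started from $\pref$ at $t=T$, and let $\mu$ denote the same object but with the true reverse rate $\hat R^{1:D}$. By Proposition \ref{prop:time_reversal}, running the true reverse CTMC from $q_T$ returns $\pdata$, so TV contraction under Markov kernels gives $\|\mu - \pdata\|_{TV} \leq \|\pref - q_T\|_{TV}$. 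The triangle inequality then yields
\begin{equation*}
\|\mathcal{L}(y_0) - \pdata\|_{TV} \leq \|\mathcal{L}(y_0) - \nu\|_{TV} + \|\nu - \mu\|_{TV} + \|\pref - q_T\|_{TV}.
\end{equation*}

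For the mixing gap $\|\pref - q_T\|_{TV}$, since the forward process factorises over dimensions with identical time-homogeneous single-coordinate rate $R_t$, I couple each dimension independently to a stationary copy and apply a union-type estimate $\|\pref - q_T\|_{TV} \leq D \cdot d^*(T)$, where $d^*$ is the single-coordinate TV gap. Using submultiplicativity $d^*(s+t) \leq 2 d^*(s) d^*(t)$ together with the definition of $t_{\textup{mix}}$ recovers a bound of the form $d^*(T) \leq C \cdot 2^{-T/t_{\textup{mix}}}$, and optimising over constants gives the exponential term $2 \exp\{-T(\log 2)^2/(t_{\textup{mix}} \log 4D)\}$. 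For the rate-approximation error $\|\nu - \mu\|_{TV}$, I use a standard CTMC comparison estimate: for two CTMCs with the same initial condition and rates $L^1, L^2$, Kolmogorov's forward equation together with a Grönwall argument (equivalently, the explicit Radon--Nikodym derivative between the two path laws) gives $\|\nu - \mu\|_{TV} \leq \int_0^T \sup_x \sum_{y \neq x} |L^1_t(x,y) - L^2_t(x,y)| \, dt$, which is at most $MT$ by hypothesis \eqref{eq:error_bound_condition}.

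The main obstacle is the tau-leaping term $\|\mathcal{L}(y_0) - \nu\|_{TV}$. I view a single tau-leaping step from $t_k$ to $t_{k-1} = t_k - \tau$ as a process on $\mathcal{X}^D$ whose rate is frozen at $\hat R_{t_k}^{\theta, 1:D}(x_{t_k}, \cdot)$ over the step and which, unlike a genuine CTMC, does not update the state when jumps occur within the interval. Comparing with the exact CTMC with rate $\hat R^{\theta, 1:D}$ over the same interval introduces two error contributions: (i) \emph{time-freezing} of the rate, controlled by Lipschitz-in-$t$ regularity of $\hat R^{\theta, 1:D}_t$ (captured by the constant $C_2$ collected in the appendix assumptions); and (ii) \emph{state-freezing}, controlled by the probability of two or more jumps within a single step, which is at most of order $(\tau|R|SD)^2$ per step by a Poisson-thinning argument applied to the factorised rate from Proposition \ref{prop:dimfactorize}. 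Summing over the $T/\tau$ steps, and bookkeeping the $M$-contribution introduced whenever a comparison against the true reverse rate is invoked during these Gronwall-type estimates, produces the $\{(|R|SDC_1)^2 + \tfrac{1}{2} C_2(M + C_1 SD|R|)\} \tau T$ term. The rejection mechanism for categorical data introduces additional rate-approximation-type contributions that are absorbed into the coefficient $3MT$ together with the middle term.

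The technically delicate point is the second-order (in $\tau$) accounting of state-freezing, which must be carried out so that only the \emph{explicit} $SD$ factor appears and no hidden dependence on $D$ sneaks in. This is where the dimension-factorised structure of Proposition \ref{prop:dimfactorize} is essential: because both the exact and tau-leaping dynamics only allow single-coordinate jumps, the Poisson intensities summed across dimensions are $O(|R|SD)$, and the two-jump probability within a step scales cleanly as $(\tau|R|SD)^2$ rather than exponentially in $D$. Combining the three bounds then yields the claimed inequality.
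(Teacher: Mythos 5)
Your proposal follows essentially the same route as the paper's proof: the same three-way decomposition into mixing, rate-approximation, and tau-leaping discretisation error; the same generator-comparison lemma (TV between two CTMCs bounded by the time-integral of the $\ell^1$ difference of their rates, which is the paper's coupling-based Proposition~\ref{prop:single_step_tv}); the same per-dimension union bound giving $t^D_{\textup{mix}}\le\{1+\lceil\log_2 D\rceil\}t_{\textup{mix}}$ for the exponential term; and the same split of the per-step tau-leaping error into a rate-freezing and a state-freezing contribution, with the extra $2MT$ beyond the middle term arising exactly as you say from invoking condition~(\ref{eq:error_bound_condition}) inside the discretisation estimates. The one substantive discrepancy is your attribution of the two $O(\tau^2)$ per-step terms. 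In the paper, the $(|R|SDC_1\tau)^2$ term comes from the \emph{time} variation of $\hat R_t$ over a frozen-rate step, controlled by the derivative bound $|\partial_t\hat R_t(x,y)|\le 2|R|^2SDC_1^2$ obtained from the Kolmogorov forward equation and Assumption~\ref{ass:bounded_ratios}; the constant $C_2$ is not a time-Lipschitz constant but a bound on $\sum_z|\hat R_t(x,x+z)-\hat R_t(y,y+z)|$ for states $x\sim y$ differing in one coordinate, and the state-freezing error is bounded by $C_2\,\mathbb{E}[J_t]$ with $J_t$ the jump count, which is what produces $\tfrac12 C_2(M+C_1SD|R|)\tau^2$. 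Your alternative control of state-freezing by the probability of two or more jumps in a step is valid and yields a term of the same $(\text{total rate}\times\tau)^2$ form (quadratic rather than linear in the total rate, hence slightly cruder than the paper's), but your time-freezing term cannot be captured by a $D$-independent constant $C_2$ — the relevant derivative bound necessarily scales like $(SD)^2$ — so the two mechanisms must be relabelled for the stated constants to come out as in the theorem.
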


The first term of the above bound captures the error introduced by our approximation of the reverse rate $\hat R_t^{1:D}$ with $\hat{R}_t^{\theta \, 1:D}$. The second term reflects the error introduced by the tau-leaping approximation, and is linear in both $T$ and $\tau$, showing that as we take our tau-leaping steps to be arbitrarily small, the error introduced by tau-leaping goes to zero. The final term describes the mixing of the forward chain, and captures the error introduced since $p_{\textup{ref}}$ and $q_T$ are not exactly equal.

We choose to make the dependence of the bound on the dimension $D$ explicit, since we are specifically interested in applying tau-leaping to high dimensional problems where we make transitions in different dimensions simultaneously in a single time step. The bound grows at worst quadratically in the dimension, versus e.g. exponentially.
The bound is therefore useful in showing us that we do not need to make $\tau$ impractically small in high dimensions. Other than gaining these intuitions, we do not expect the bound to be particularly tight in practice and further it would not be practical to compute because of the difficulty in finding $M$, $C_1$ and $C_2$.

The assumptions listed in Appendix \ref{sec:ApdxErrorBound} hold approximately for tau-leaping in practice when we use spatially biased rates for ordinal data such that jump sizes are small or uniform rates for non-ordinal data such that the dimensional rejection rate is small. These assumptions could be weakened, however, Theorem \ref{thm:error_bound} would become much more involved, obscuring the intuition and structure of the problem.

\section{Related Work} \label{sec:RelatedWork}

The application of denoising models to discrete data was first described in \cite{sohl2015deep} using a binomial diffusion process for a binary dataset. Each reverse kernel $\smash{p^\theta_{k|k+1}}$ was directly parameterized without using a denoising model $\smash{p_{0|k}^\theta}$. In \cite{song2020denoising} an approach for discrete categorical data was suggested using a uniform forward noising kernel, $q_{k+1|k}$, and a reverse kernel parameterized through a denoising model, though no experiments were performed with the approach. Experiments on text and segmentation maps were then performed with a similar model in \cite{hoogeboom2021argmax}. Other forward kernels were introduced in \cite{austin2021structured} that are more appropriate for certain data types such as the spatially biased Gaussian kernel. \cite{esser2021imagebart, gu2021vector} apply the approach to discrete latent space modeling using uniform and absorbing state forward kernels. Whilst a link to continuous time for the forward process is mentioned in \cite{austin2021structured}, all of these approaches train and sample in discrete time. We show in Appendix \ref{sec:ApdxDTDimAssumptions} that this involves making an implicit approximation for multi-dimensional data. We extend this line of work by training and sampling in continuous time.

Other works also operate in discrete space but less rigidly follow the diffusion framework. A corruption process tailored to text is proposed in \cite{johnson2021beyond}, whereby token deletion and insertion is also incorporated. \cite{savinov2021step} also focus on text, creating a generative reverse chain that repeatedly applies the same denoising kernel. The corruption distribution is also defined through the same denoising kernel to reduce distribution shift between training and sampling. In \cite{hoogeboom2021autoregressive}, a more standard masking based forward process is used but the reversal is interpreted from an order agnostic autoregressive perspective. They also describe how their model can be interpreted as the reversal of a continuous time absorbing state diffusion but do not utilize this perspective in training or sampling.
\cite{goyal2017variational} propose a denoising type framework that can be used on binary data where the forward and reverse process share the same transition kernel. Finally, in \cite{cohen2022diffusion}, the discrete latent space of a VQVAE is modeled by quantizing an underlying continuous state space diffusion with probabilistic quantization functions.

\section{Experiments} \label{sec:Experiments}
\subsection{Demonstrative Example}
\captionsetup[subfigure]{labelformat=empty}
\begin{figure}
\centering
\subfloat[]{
    \includegraphics[trim=50 0 0 0, width=0.5\textwidth]{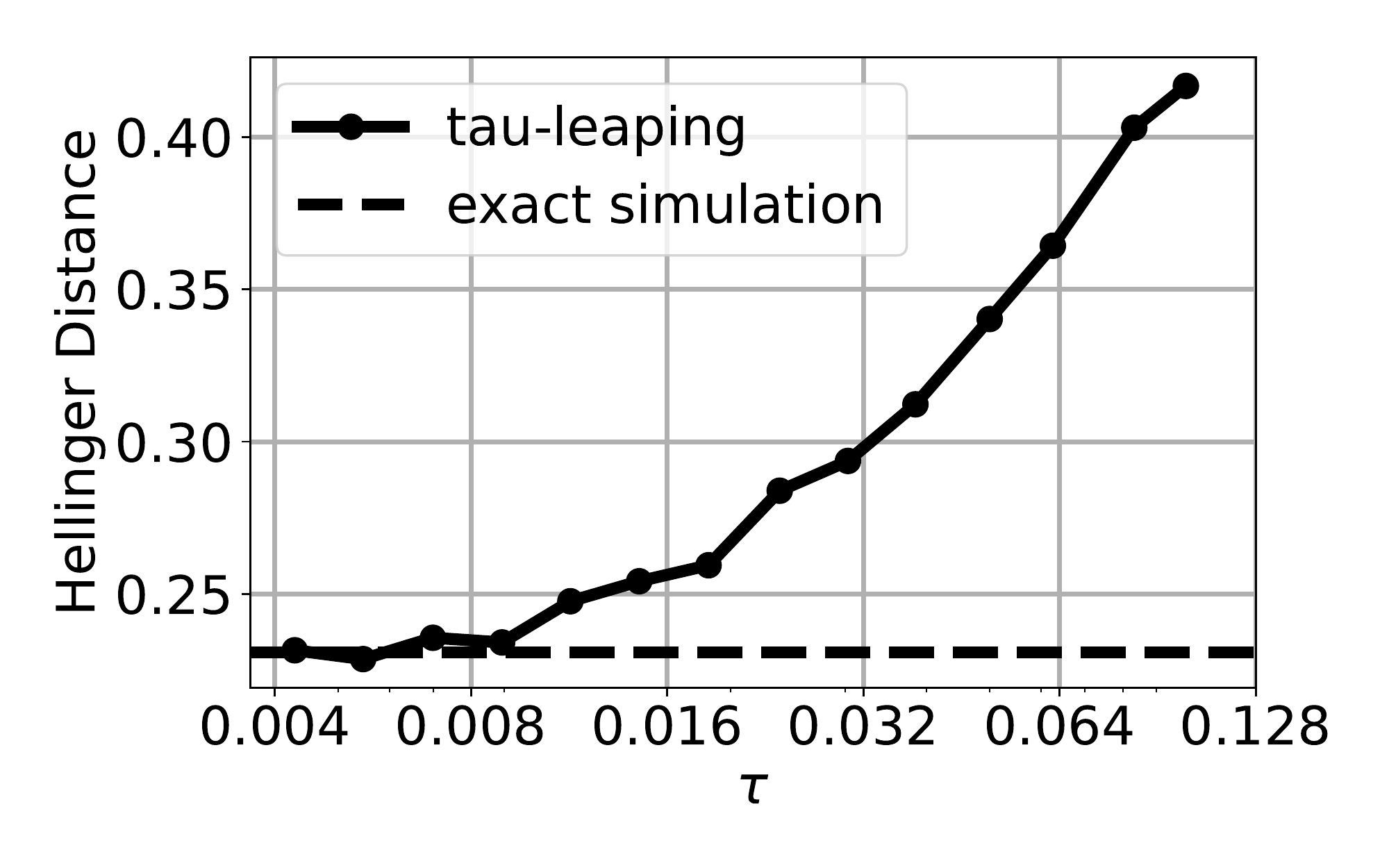}
}
\subfloat[]{
    \includegraphics[trim=30 -40 30 0, width=0.4\textwidth]{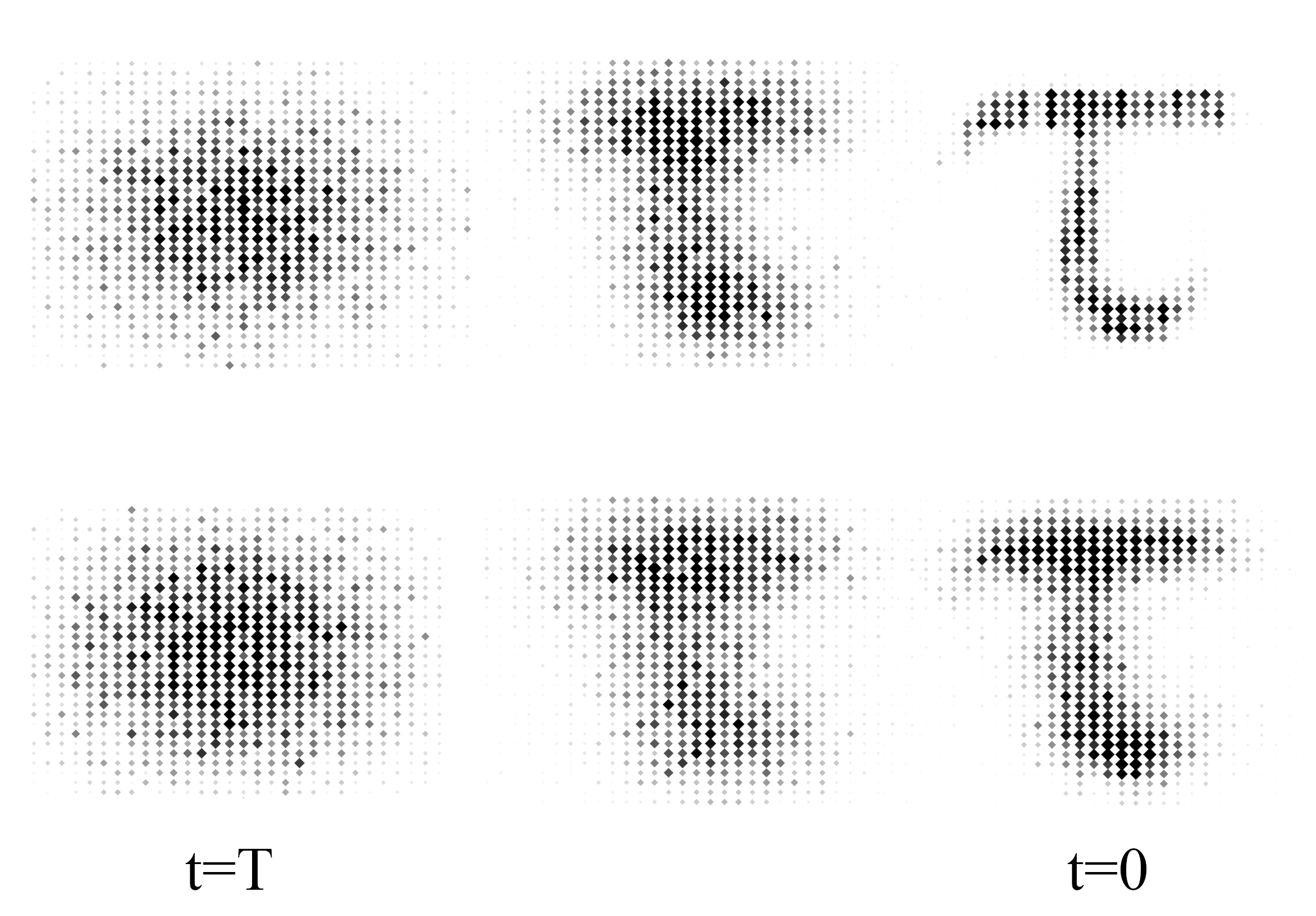}
}
\vspace{-1.0cm}
\caption{ \textbf{Left:} Hellinger distance between the true training distribution and generated sample distributions with exact simulation or tau-leaping. With $\tau$ small, we simulate the reverse CTMC with the same fidelity as the exact simulation. \textbf{Top Right:} Histograms of the marginals during the reverse generative process simulated using tau-leaping with $\tau=0.004$. Darker and larger diamonds represent increased density. \textbf{Bottom Right:} The same for $\tau=0.1$, note the reduced sample quality.}
\label{fig:2d_figures}
\vspace{-0.3cm}
\end{figure}

We first verify the method can accurately produce samples from the entire support of the data distribution and that tau-leaping can accurately simulate the reverse CTMC. To do this, we create a dataset formed of 2d samples of a state space of 32 arranged such that the histogram of the training dataset forms a `$\tau$' shape. We train a denoising model using the $\mathcal{L}_{\textup{CT}}$ objective with $\smash{p_{0|t}^\theta}$ parameterized through a residual MLP (full details in Appendix \ref{sec:ApdxToyExperiment}). We then sample the parameterized reverse process using an exact method (up to needing to numerically integrate the reverse rate) and tau-leaping. Figure \ref{fig:2d_figures} top-right shows the marginals during reverse simulation with $\tau=0.004$ and we indeed produce samples from the entire support of $\pdata$. Furthermore, we find that with sufficiently small $\tau$, we can match the fidelity of exact simulation of the reverse CTMC (Figure \ref{fig:2d_figures} left).
The value of $\tau$ dictates the number of network evaluations in the reverse process according to $\text{NFE} = T/\tau$. In all experiments we use $T=1$. Exact simulation results in a non zero Hellinger distance between the generated and training distributions because of imperfections in the learned $\smash{\hat{R}_t^\theta}$ model.

\subsection{Image Modeling} \label{sec:ImageModeling}

\begin{table}
  \caption{Sample quality metrics and model likelihoods for diffusion methods modeling CIFAR10 in discrete state space. Diffusion methods modeling CIFAR10 in continuous space are included for reference. The Inception Score (IS) and Fréchet Inception Distance (FID) are calculated using 50000 generated samples with respect to the training dataset as is standard practice. The ELBO values are reported on the test set in bits per dimension.}
  \label{tab:fids}
  \centering
  \begin{tabular}{lllll}
    \toprule
    & Method     & IS $(\uparrow)$ & FID $(\downarrow)$ & ELBO $(\uparrow)$  \\
    \midrule
    Discrete state & D3PM Absorbing \cite{austin2021structured} & $6.78 $& $30.97$ & $- 4.40$ \\
    & D3PM Gauss \cite{austin2021structured}    & $8.56$ & $7.34$ & $\mathbf{- 3.44}$  \\
    & $\tau$LDR-$0$ (ours) & $8.74$ & $8.10$ & $-3.59$ \\
    & $\tau$LDR-$10$ (ours) & $\mathbf{9.49}$ & $\mathbf{3.74}$ & $-3.59$ \\
    \midrule
    Continuous state & DDPM \cite{ho2020denoising} & $9.46$ & $3.17$ & $- 3.75$\\
    & NCSN \cite{song2020score} & $9.89$ & $2.20$ & - \\
    \bottomrule
  \end{tabular}
\vspace{-0.5cm}
\end{table}

\begin{figure}
\centering
\subfloat[]{
    \includegraphics[trim= 750 0 500 0, width=0.5\textwidth]{figures/cifar_samples_4by8.pdf}
}
\subfloat[]{
  \includegraphics[trim=20 0 0 0, width=0.5\textwidth]{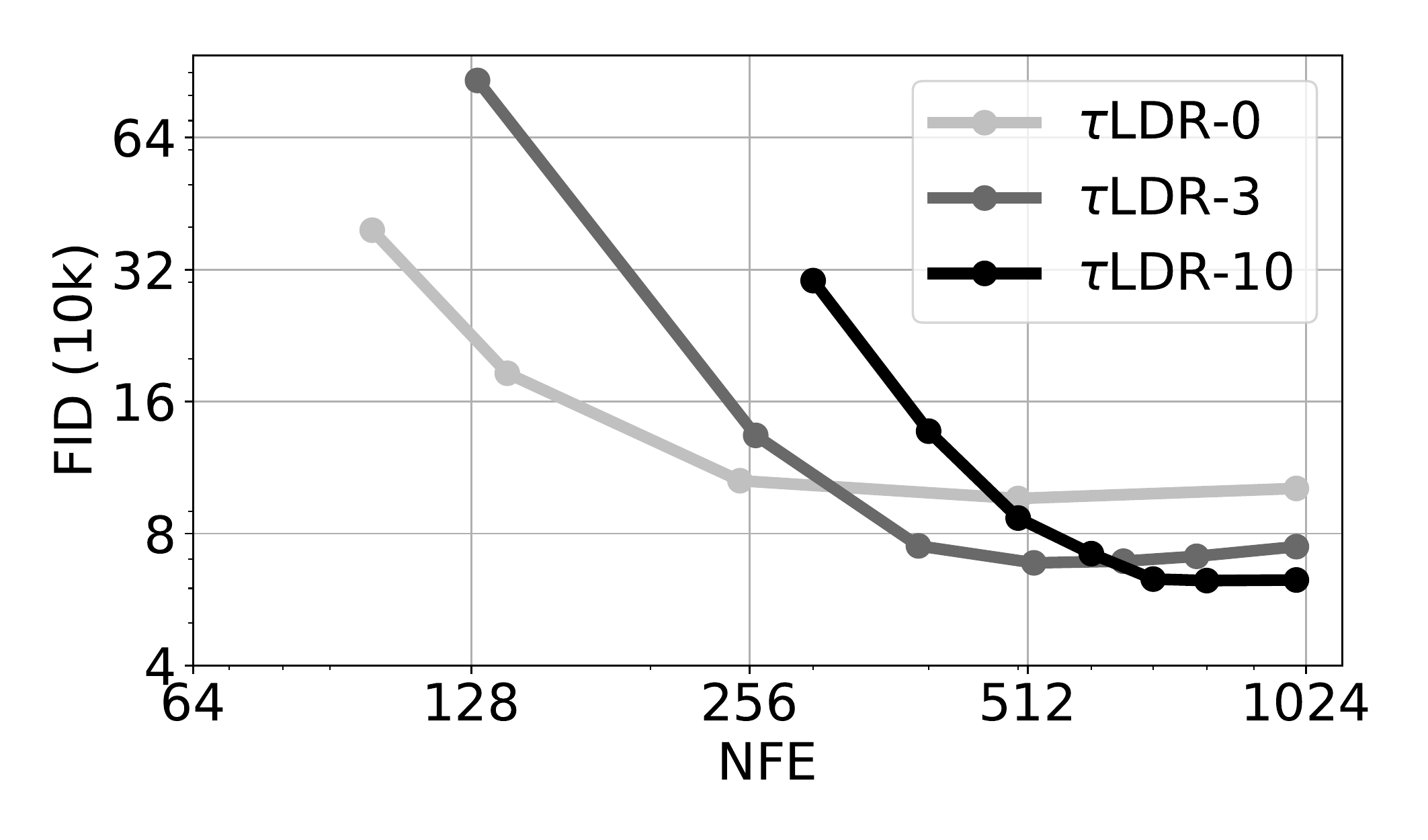}
}

\vspace{-0.9cm}
    \caption{\textbf{Left:} Unconditional CIFAR10 samples from our $\tau$LDR-10 model \textbf{Right:} FID scores for the generated CIFAR10 samples versus number of $p_{0|t}^\theta$ evaluations during sampling (variation induced by varying $\tau$). Calculated with 10k samples, hence the discrepancy with Table \ref{tab:fids} \cite{chong2020effectively}.}
\label{fig:cifar10plots}
\vspace{-0.5cm}
\end{figure}

We now demonstrate that our continuous time framework gives us improved generative modeling performance versus operating in discrete time. We show this on the CIFAR-10 image dataset. Images are typically stored as discrete data, each pixel channel taking one value from 256 possibilities. Continuous state space methods have to somehow get around this fact by, for example, adding a discretization function at the end of the generative process \cite{ho2020denoising} or adding uniform noise to the data. Here, we model the images directly in discrete space. We parameterize $p_{0|t}^\theta$ using the standard U-net architecture \cite{ho2020denoising} with the modifications for discrete state space suggested by \cite{austin2021structured}. We use a spatially biased rate matrix and train with an augmented $\LCT$ loss including direct $p_{0|t}^\theta$ supervision, full experimental details are in Appendix \ref{sec:ApdxCIFAR10Experiment}.

Figure \ref{fig:cifar10plots} left shows randomly generated unconditional CIFAR10 samples from the model and we report sample quality metrics in Table \ref{tab:fids}. We see that our method ($\tau$LDR-$0$) with $0$ corrector steps has better Inception Score but worse FID than the D3PM discrete time method. However, our $\tau$LDR-$10$ method with $10$ corrector steps per predictor step at the end of the reverse sampling process ($t < 0.1T$) greatly improves sample quality, beating the discrete time method in both metrics and further closes the performance gap with methods modeling images as continuous data. The derivation of the corrector rate which gave us this improved performance required our continuous time framework. D3PM achieves the highest ELBO but we note that this does not correlate well with sample quality. In Table \ref{tab:fids}, $\tau$ was adjusted such that both $\tau$LDR-$0$ and $\tau$LDR-$10$ used 1000 $\smash{p_{0|t}^\theta}$ evaluations in the reverse sampling procedure. We show how FID score varies with number of $\smash{p_{0|t}^\theta}$ evaluations for $\tau$LDR-$\{0,3,10\}$ in Figure \ref{fig:cifar10plots} right. The optimum number of corrector steps depends on the sampling budget, with lower numbers of corrector steps being optimal for tighter budgets. This is due to the increased $\tau$ required to maintain a fixed budget when we use a larger number of corrector steps.

\subsection{Monophonic Music} \label{sec:MonophonicMusic}
In this experiment, we demonstrate our continuous time model improves generation quality on non-ordinal/categorical discrete data. We model songs from the Lakh pianoroll dataset \cite{raffel2016learning, dong2018musegan}. We select all monophonic sequences from the dataset such that at each of the 256 time steps either one from 128 notes is played or it is a rest. Therefore, our data has state space size $S=129$ and dimension $D=256$. We scramble the ordering of the state space when mapping to $\mathbb{Z}$ to destroy any ordinal structure. We parameterize $p_{0|t}^\theta$ with a transformer architecture \cite{mittal2021symbolic} and train using a conditional form of $\LCT$ targeting the conditional distribution of the final 14 bars (224 time steps) given the first 2 bars of the song. We use a uniform forward rate matrix, $R_t$, full experimental details are given in Appendix \ref{sec:ApdxMusicExperiment}. Conditional completions of unseen test songs are shown in Figure \ref{fig:conditional_music}.
The model is able to faithfully complete the piece in the same style as the conditioning bars.

We quantify sample quality in Table \ref{tab:piano_metrics}.
We use two metrics: the Hellinger distance between the histograms of generated and ground truth notes and the proportion of outlier notes in the generations but not in the ground truth. 
Using our method, we compare between a birth/death and uniform forward rate matrix $R_t$. 
The birth/death rate is only non-zero for adjacent states whereas the uniform rate allows transitions between arbitrary states which is more appropriate for the categorical case thus giving improved sample quality.
Adding 2 corrector steps per predictor step further improves sample quality.
We also compare to the discrete time method D3PM \cite{austin2021structured} with its most suitable corruption process for categorical data. We find it performs worse than our continuous time method.

\begin{table}
  \caption{Metrics comparing generated conditional samples and ground truth completions. We compute these over the test set showing mean$\pm$std with respect to 5 samples for each test song.}
  \label{tab:piano_metrics}
  \centering
  \begin{tabular}{lll}
    \toprule
    Model  & Hellinger Distance & Proportion of Outliers \\
    \midrule
    $\tau$LDR-0 Birth/Death & $0.3928 \pm 0.0010$ & $0.1316 \pm 0.0012$\\
    $\tau$LDR-0 Uniform & $0.3765 \pm 0.0013$ & $0.1106 \pm 0.0010$ \\
    $\tau$LDR-2 Uniform & $\mathbf{0.3762 \pm 0.0015}$ & $\mathbf{0.1091 \pm 0.0014}$\\
    \midrule
    D3PM Uniform \cite{austin2021structured} & $0.3839 \pm 0.0002$ & $0.1137 \pm 0.0010$\\
    \bottomrule
  \end{tabular}
  \vspace{-0.3cm}
\end{table}
\begin{figure}
    \centering
    \includegraphics[width=14cm]{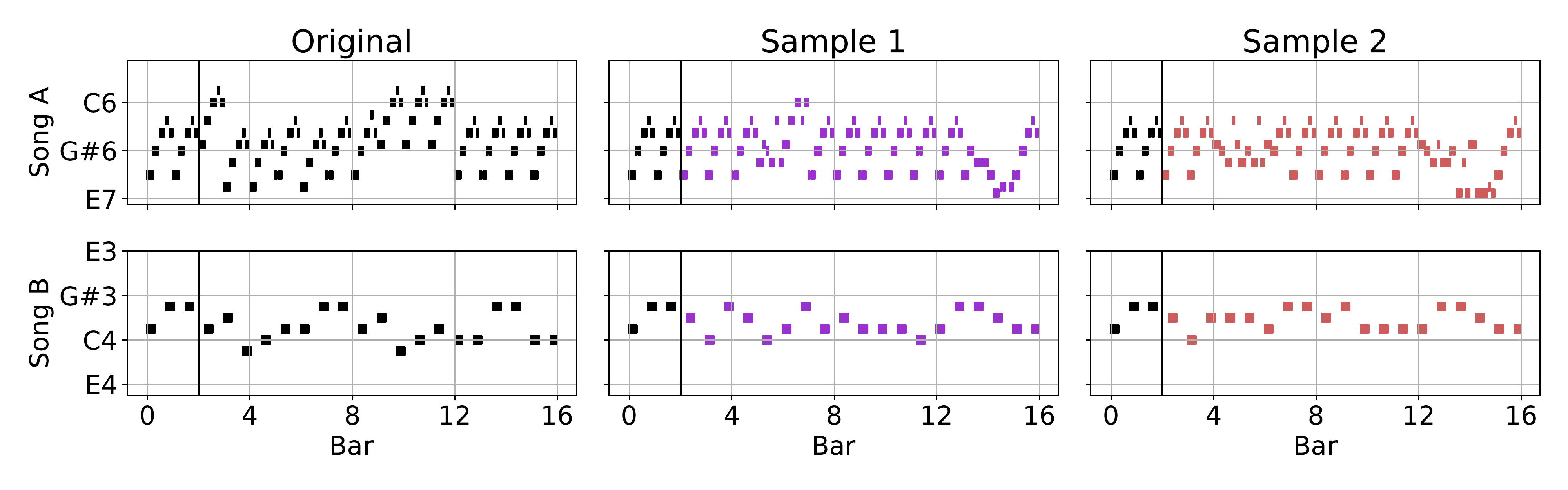}
    \vspace{-0.7cm}
    \caption{Conditional completions of an unseen music sequence. The conditioning 2 bars are shown to the left of the black line. More examples and audio recordings are linked in Appendix \ref{sec:ApdxMusicExperiment}.}
    \label{fig:conditional_music}
\end{figure}

\section{Discussion} \label{sec:Discussion}
We have presented a continuous time framework for discrete denoising models. We showed how to efficiently sample the generative process with tau-leaping and provided a bound on the error of the generated samples. On discrete data problems, we found our predictor-corrector sampler improved sample quality versus discrete time methods.
Regarding limitations, our model requires many model evaluations to produce a sample. Our work has opened the door to applying the work improving sampling speed on continuous data \cite{jolicoeur2021gotta, zhang2022fast, salimans2022progressive, chung2021come, bao2022analytic} to discrete data problems too. Modeling performance on images is also slightly behind continuous state space models, we hope this gap is further closed with bespoke discrete state architectures and corruption process tuning. Finally, we note that the ELBO values for the discrete time model on CIFAR10 are better than for our method. In this work, we focused on sample quality rather than using our model to give data likelihoods e.g. for compression downstream tasks.

\section*{Acknowledgements}
Andrew Campbell and Joe Benton acknowledge support from the EPSRC CDT in Modern Statistics and Statistical Machine Learning (EP/S023151/1). Arnaud Doucet is partly supported by the EPSRC grant EP/R034710/1. He also acknowledges support of the UK Defence Science and Technology Laboratory (DSTL) and EPSRC under grant EP/R013616/1. This is part of the collaboration between US DOD, UK MOD and UK EPSRC under the Multidisciplinary University Research Initiative. This project made use of time on Tier 2 HPC facility JADE2, funded by EPSRC (EP/T022205/1).

\bibliographystyle{unsrt}
\bibliography{neurips_2022}

\newpage
\appendix

{\huge \bfseries Appendix}
\etocdepthtag.toc{mtappendix}
\etocsettagdepth{mtchapter}{none}
\etocsettagdepth{mtappendix}{subsection}
\tableofcontents

The appendix is organized as follows. In Section \ref{sec:ApdxPrimerCTMC}, we provide a short introduction to Continuous Time Markov Chains, including the relevant results we use in this work. Proofs for all the Propositions and Theorems from the main text are in Section \ref{sec:ApdxProofs}. We then describe in Section \ref{sec:ApdxCTELBODetails} some additional intuitions and forms of our proposed objective, $\LCT$. In Section \ref{sec:ApdxDirectDenoisingSupervision}, we describe how an additional direct denoising model supervision term can be added to the objective to improve empirical performance. Details for how we define the forward process in our model can be found in Section \ref{sec:ApdxChoiceOfForwardProcess}. Section \ref{sec:ApdxCTMCSimulation} describes in more detail how CTMCs can be simulated and includes the algorithmic description of tau-leaping. We argue in Section \ref{sec:ApdxDTDimAssumptions} that operating in discrete time forces an implicit assumption when using a factorized forward process on multi-dimensional data. Full experimental details for all investigations can be found in Section \ref{sec:ApdxExperimentDetails} as well as additional plots and results from our models. Finally, in Section \ref{sec:ApdxEthicalConsiderations}, we consider the social impacts of our research.

\section{Primer on Continuous Time Markov Chains} \label{sec:ApdxPrimerCTMC}

\begin{figure}
    \centering
    \includegraphics[width=0.8\textwidth]{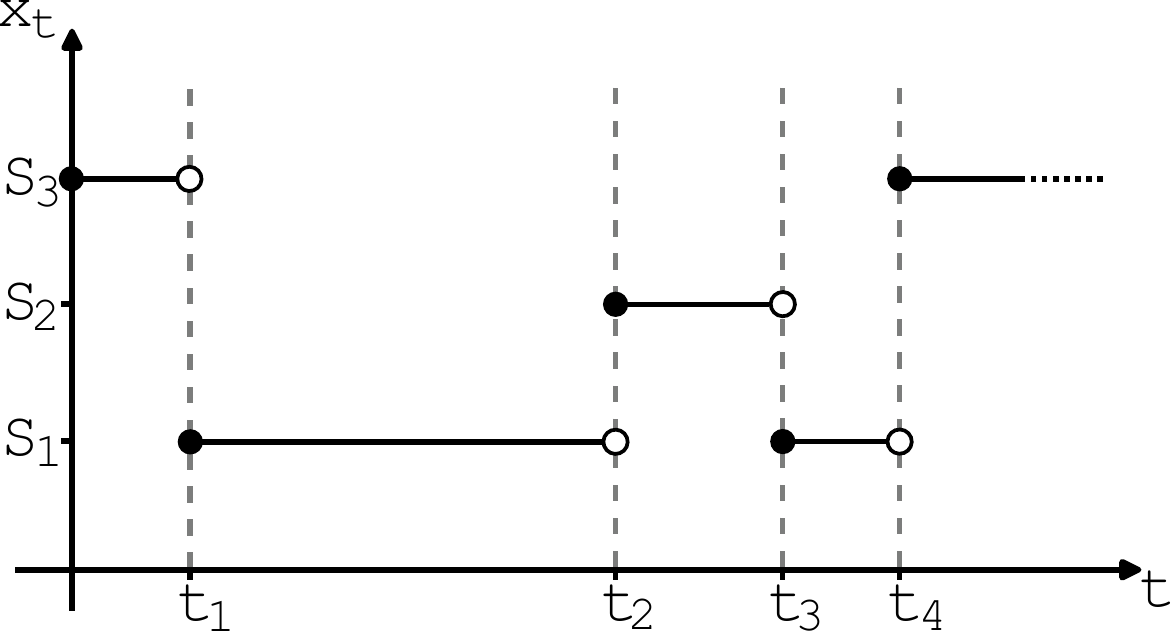}
    \caption{Schematic representation of a 1-dimensional CTMC with 3 states.}
    \label{fig:apdxCTMCSchematic}
\end{figure}

A Continuous Time Markov Chain (CTMC) is a right continuous stochastic process
$\{ x_t \}_{t \in [0, T]}$ satisfying the Markov property, with $x_t$ taking
values in a discrete state space $\mathcal{X}$.  Since the CTMC is Markov,
future behaviour of the process depends only on the current state and not the
history. A schematic representation of a CTMC path is shown in Figure
\ref{fig:apdxCTMCSchematic}. The process repeatedly transitions from one state
to another after having waited in the previous state for a randomly determined
amount of time.\\

A CTMC can be completely characterised by its jumps and holding times. Specifically, the time between each jump or \textit{holding time} is exponentially distributed with mean $\nu(x)$ where $x$ is the state in which the process is holding. The next state that is jumped to is drawn from a jump probability distribution $r(\tilde{x} | x)$. The holding and jumping procedure is then repeated.\\

There is an equivalent definition involving the transition rate matrix,
$R \in \mathbb{R}^{S \times S}$, that we use in the main paper. The transition
rate matrix is defined as
\begin{equation}
    R(\tilde{x},  x) = \underset{\dt \rightarrow 0}{\text{lim}} \frac{q_{t|t-\dt}(x | \tilde{x}) - \delta_{x, \tilde{x}}}{\dt}
    \label{eq:ApdxRlimitDefn}
\end{equation}
where $R(\tilde{x}, x)$ is the $(\tilde{x}, x)$ element of the transition rate matrix and $q_{t|t-\dt}(x | \tilde{x})$ is the infinitesimal transition probability of being in state $x$ at time $t$ given that the process was in state $\tilde{x}$ at time $t-\dt$. Conversely, the CTMC can itself be defined through this infinitesimal transition probability
\begin{equation}
    q_{t|t-\dt}(x | \tilde{x}) = \delta_{x, \tilde{x}} + R(\tilde{x}, x) \dt + o(\dt) \label{eq:ApdxRtransitionDefin}
\end{equation}
where $o(\dt)$ represents terms that tend to zero at a faster rate than $\dt$. From this definition of the transition rate matrix, we can infer the following properties:
\begin{equation}
\textstyle     R(\tilde{x}, x) \geq 0 \quad \text{for} \quad \tilde{x} \neq x, \hspace{1cm} R(x, x) \leq 0, \hspace{1cm} R(x, x) = - \sum_{x' \neq x} R(x, x')
\label{eq:ApdxRconstraints}
\end{equation}
$R(\tilde{x}, x)$ is the rate at which probability mass moves from state $\tilde{x}$ to $x$. $R(x, x)$ is the total rate at which probability mass moves out of state $x$ and is thus negative.

In the time-homogeneous case, $R$ has simple relations to the jump and holding time definitions.
\begin{equation}
    \nu(x) = - \frac{1}{R(x, x)} \hspace{1.5cm} r(\tilde{x} | x) = ( 1 - \delta_{\tilde{x}, x}) \frac{R(x, \tilde{x})}{- R(x, x)}
\end{equation}
In the time-inhomogeneous case, our transition rate matrix will now depend on time, $R_t$, and these simple relations to the jump and holding time definition do not hold. However, $R_t$ will still follow equations (\ref{eq:ApdxRlimitDefn}), (\ref{eq:ApdxRtransitionDefin}) and (\ref{eq:ApdxRconstraints}).\\

The CTMC transition probabilities satisfy the Kolmogorov forward and backward equations. For $t > s$,
\begin{align}
    \text{Kolmogorov forward equation} \quad &\partial_t q_{t|s}(x | \tilde{x}) = \sum_{y} q_{t|s}(y | \tilde{x}) R_t(y, x)\\
    \text{Kolmogorov backward equation} \quad &\partial_s q_{t|s}(x | \tilde{x}) = - \sum_{y} R_s(\tilde{x}, y) q_{t|s}(x | y)
\end{align}
The Kolmogorov forward equation also gives us a differential equation for the marginals of the CTMC.
\begin{equation}
    \partial_t q_t(x) = \sum_{y} q_t(y) R_t(y, x).
\end{equation}

\paragraph{Exponential and Poisson Random Variables} In the time homogeneous case, holding times are exponentially distributed with mean $\nu(x) = -1 / R(x, x)$. The tau-leaping algorithm relies on the fact that the number of events in interval $[0, t]$ is Poisson distributed with mean $\frac{1}{\nu} t$ when the inter-event times are exponentially distributed with mean $\nu$.

\section{Proofs} \label{sec:ApdxProofs}

\subsection{Proof of Proposition \ref{prop:time_reversal}}

\begin{proof}
  We recall that a process $\{x_t\}_{t \in [0,T]}$ taking values in
  $\mathcal{X}$ is called a CTMC if it is right-continuous and satisfies the
  Markov property. Denote $\{y_t\}_{t \in [0,T]} = \{x_{T-t}\}_{t \in [0,T]}$
  except at the jump times of the forward process $\tau_n$ with
  $n \in \mathbb{N}$, where
  $y_{T - \tau_n} = x_{\tau_n}^- = \lim_{t \leq \tau_n, t \to \tau_n}
  x_t$. Hence, $\{y_t\}_{t \in[0,T]}$ is almost surely equal to
  $\{x_{T-t}\}_{t \in [0,T]}$ and is right-continuous. Since the Markov property
  is symmetric, we get that $\{y_t\}_{t \in [0,T]}$ is a CTMC. We now compute
  its transition matrix. Let $x,\tilde{x} \in \mathcal{X}$ with
  $x \neq \tilde{x}$, using the Kolmogorov forward equation, we have
  \begin{equation}
    \textstyle{
      \partial_t p_{t|s}(\tilde{x}|x) = \sum_{y \in \mathcal{X}} p_{t|s}(y|x) \hat{R}_{T-t}(y, \tilde{x}) \; ,
      }
    \end{equation}
    where $\{p_{t|s}, \ s, t \in [0,T], \ t >s\}$ is the
      transition probability system associated with
      $\{y_t\}_{t \in [0,T]}$ and $\{\hat{R}_{T-t}\}_{t \in [0,T]}$
      is the transition rate matrix associated with
      $\{y_t\}_{t \in [0,T]}$.  Note that
    \begin{align}
      p_{t|s}(x = j|\tilde{x} = i) &= \mathbb{P}(y_t = j \ | \ y_s =i)  \\
      &= \mathbb{P}(x_{T-t} = j \ | \ x_{T-s} =i)\\
      &= \mathbb{P}(x_{T-s} = i | x_{T-t} = j) \frac{\mathbb{P}(x_{T-t} = j)}{\mathbb{P}(x_{T-s} = i)}\\
      &= q_{T-s|T-t}(\tilde{x}=i|x=j) \frac{q_{T-t}(x=j)}{q_{T-s}(\tilde{x}=i)} \;
   \end{align}
   where $\{ q_{t|s}, s, t \in [0, T], t>s\}$ is the transition probability system associated with $\{x_t \}_{t \in [0,T]}$ and $\{q_t, t \in [0, T] \}$ are the marginals of $\{ x_t \}_{t \in [0, T]}$.
   Now, writing the backward Kolmogorov equation for $\{ x_t \}_{t \in [0,T]}$
   \begin{equation}
    \textstyle \partial_s q_{t|s}(\tilde{x} | x) = - \sum_{y \in \mathcal{X}} R_s(x, y) q_{t|s}(\tilde{x} | y)
   \end{equation}
   Re-labeling the time indices we obtain,
   \begin{talign}
       \partial_{T-t} q_{T-s | T-t}(\tilde{x} | x) &= - \sum_{y \in \mathcal{X}} R_{T-t}(x, y) q_{T-s | T-t}(\tilde{x} | y)\\
       \partial_t q_{T-s | T-t}(\tilde{x} | x) &= \sum_{y \in \mathcal{X}} R_{T-t}(x, y) q_{T-s | T-t}(\tilde{x} | y)
   \end{talign}
   Letting $s \rightarrow t$ and using that $\lim_{s \to t} q_{T-s|T-t}(x|\tilde{x}) =0$, we get that
   \begin{align}
       \hat{R}_{T-t}(x, \tilde{x}) &= \lim_{s \to t} \partial_t p_{t|s}(\tilde{x} | x)\\
       &= \lim_{s \to t} \partial_t \left( q_{T-s|T-t}(x | \tilde{x}) \frac{q_{T-t}(\tilde{x})}{q_{T-s}(x)} \right)\\
       &= \lim_{s \to t} \left[ \partial_t \left( q_{T-s|T-t}(x|\tilde{x}) \right) \frac{q_{T-t}(\tilde{x})}{q_{T-s}(x)} + q_{T-s|T-t}(x|\tilde{x}) \frac{\partial_t q_{T-t}(\tilde{x})}{q_{T-s}(x)} \right]\\
       &= \lim_{s \to t} \partial_t \left( q_{T-s|T-t}(x|\tilde{x}) \right) \frac{q_{T-t}(\tilde{x})}{q_{T-s}(x)}\\
       &= R_{T-t}(\tilde{x}, x) \frac{q_{T-t}(\tilde{x})}{q_{T-t}(x)}
   \end{align}
   Re-labeling the time-indices on the rate matrices, we obtain
   \begin{equation}
       \hat{R}_t(x, \tilde{x}) = R_t(\tilde{x}, x) \frac{q_t(\tilde{x})}{q_t(x)}
   \end{equation}
   Now we write the marginal ratio $\frac{q_t(\tilde{x})}{q_t(x)}$ in a different form
    \begin{align}
     \frac{q_t(\tilde{x})}{q_t(x)} &= \sum_{x_0} \frac{ \pdata(x_0) }{q_{t}(x)}q_{t|0}(\tilde{x} | x_0)\\
     &= \sum_{x_0} \frac{q_{0|t}(x_0 | x)}{q_{t|0}(x | x_0)} q_{t|0}(\tilde{x} | x_0).
    \end{align}
    Substituting in this form for the marginal ratio concludes the proof.

\end{proof}

\subsection{Proof of Proposition \ref{prop:CTELBO}}

In this section, we detail two proofs for Proposition \ref{prop:CTELBO}. The first is a formal proof using results from stochastic processes. We then provide a second informal proof for the same result to gain intuition into the $\LCT$ objective that only relies on elementary results from CTMCs.

\paragraph{Proof 1 - Stochastic Processes}

\begin{proof} Let us write $\mathbb{Q}$ for the path measure of the forward CTMC with rate matrix $R_t$, $\hat{\mathbb{Q}}$ for the path measure of its exact time reversal and $\mathbb{P}^{\theta}$ for the path measure of the approximate reverse process with rate matrix $\hat R^\theta_t$. Also, we use superscripts to notate conditioning on the starting point, for example $\mathbb{Q}^{x_0}$ denotes the path measure of the forward process conditioned to start in $x_0$.

With this notation, we have 
\begin{align}
-\log p_0^\theta (x_0)
&=-\log \int \pref(\rd x_T) \int_{\{\hat W_T=x_0\}} \mathbb{P}^{\theta,x_T}( \rd w)\\
&=-\log \int q_{T|0}(\rd x_T) \int_{\{\hat W_T=x_0\}} \hat{\mathbb{Q}}^{x_T}( \rd w) \frac{\rd \pref}{\rd q_{T|0}}(x_T) \frac{\rd \mathbb{P}^{\theta,x_T}}{\rd \hat{\mathbb{Q}}^{x_T}}(w)\\
&=-\log \int q_{T|0}(\rd x_T) \int 
\hat{\mathbb{Q}}( \rd w| \hat W_0=x_T, \hat W_T=x_0) \frac{\rd \pref}{\rd q_{T|0}}(x_T) \frac{\rd \mathbb{P}^{\theta,x_T}}{\rd \hat{\mathbb{Q}}^{x_T}}(w) \mathbb{Q}^{x_T}\{\hat W_0=x_0\}\\
&\leq \int q_{T|0}(\rd x_T) \int 
\hat{\mathbb{Q}}(\rd w| \hat W_0=x_T, \hat W_T=x_0) \left\{
-\log \frac{\rd \mathbb{P}^{\theta,x_T}}{\rd \hat{\mathbb{Q}}^{x_T}}(w)\right\} + C,
\end{align}    
where $\mathbb{P}^{\theta}, \hat{\mathbb{Q}}$ run in the reverse time direction. Writing $\hat{W}_s$ for a reverse path and integrating wrt $\pdata(\rd x_0)$ we have

\begin{align}
\int \pdata(x_0)[ -\log p_0^\theta (x_0)]
&\leq \int \pdata(x_0) \int q_{T|0}(\rd x_T) \int 
\hat{\mathbb{Q}}( \rd \hat{W}| \hat{W}_0=x_T, \hat{W}_T=x_0) \\
&\qquad \times
\left\{
\int_{s=0}^T \hat R_{T-s}^\theta(\hat{W}_s) \rd s -
\sum_{s: \hat{W}_{s-}\neq \hat{W}_s} 
\log \mathbb{P}_{T-s}^{\theta}(\hat{W}_s | \hat{W}_{s-}) R_{T-s}^{\theta}(\hat{W}_{s-})
\right\} + C,
\end{align}
where $\hat R_t^\theta(x)$ is shorthand for $- \hat R_{t}^\theta(x,x)$.

When $x_0\sim \pdata, x_T \sim q_{T|0}(\cdot | x_0), \hat{W} \sim  \hat{\mathbb{Q}}( \rd W| \hat{W}_0=x_T, \hat{W}_T=x_0)$, 
the reverse path is distributed according to $\pdata(\rd x_0) \mathbb{Q}_{x_0}( \rd W)$ and therefore 
$(\hat{W}_{s-}, \hat{W}_s)$ is distributed like $(W_{T-s}, W_{(T-s)-})$
and thus we have
\begin{align}
&\int \pdata(x_0)[ -\log p_0^\theta (x_0)]\\
&\leq \int \pdata(x_0) \mathbb{Q}_{x_0}(\rd W)
\left\{
\int_{s=0}^T \hat{R}_{T-s}^\theta({W}_{(T-s)-}) \rd s -
\sum_{s: {W}_{(T-s)-}\neq {W}_{T-s}} 
\log \mathbb{P}^{\theta}_{T-s}({W}_{(T-s)-} | {W}_{T-s}) \hat R_{T-s}^{\theta}({W}_{T-s})
\right\} + C\\
\end{align}    
Using Dynkin's lemma and the fact that 
$\mathbb{P}_t^\theta(x|y) \hat{R}_t^{\theta}(y) 
=\hat{R}_t^{\theta}(y,x)$ we can re-expresss this final line as
\begin{align}
&= \iint_{s=0}^T q_{T-s}(\rd x) 
\left\{\sum_{z\neq x} \hat{R}_{T-s}^\theta (x, z) 
- \sum_{z\neq x} {R}_{T-s}(x,z) \frac{\sum_{y\neq x}{R}_{T-s}(x,y) }{\sum_{z\neq x} {R}_{T-s}(x,z)} \log \hat{R}_{T-s}^\theta (y,x) \right\}\\
&=\iint_{s=0}^T q_{T-s}(\rd x)r_{T-s}(\rd y| x) 
\left\{\sum_{z\neq x} \hat{R}_{T-s}^\theta (x, z) 
- \sum_{z\neq x} {R}_{T-s}(x,z)  \log \hat{R}_{T-s}^\theta (y,x) \right\}\\
&=\iint_{s=0}^T q_{s}(\rd x)r_{s}(\rd y| x) 
\left\{\sum_{z\neq x} \hat{R}_{s}^\theta (x, z) 
- \sum_{z\neq x} {R}_{s}(x,z)  \log \hat{R}_{s}^\theta (y,x) \right\}
\end{align}
which rearranges to give the continuous time ELBO in the form of Proposition \ref{prop:CTELBO}.

\end{proof}

\paragraph{Proof 2 - Limit of Discrete Time ELBO}
\begin{proof}

Consider a partitioning of $[0, T]$, $0 = t_0 < t_1 < \dots < t_{k-1} < t_k < t_{k+1} < \dots < t_{K-1} < t_K = T$. Let $t_k - t_{k-1} = \dt$ for all $k$. In subscripts we use $k$ as a shorthand for $t_k$ when this does not cause confusion. Considering a CTMC with this time partitioning converts the problem into a discrete time Markov Chain with forward transition kernel, $q_{k+1|k}(x_{k+1}|x_k)$ and parameterized reverse kernel, $p^\theta_{k|k+1}(x_k | x_{k+1})$. Therefore, we can write the negative ELBO in its discrete time form, $\LDT$
\begin{align}
    \textstyle \LDT(\theta) = \mathbb{E}_{\pdata(x_0)} \Big[& \KL( q_{K|0}(x_K | x_0) || \pref(x_K) ) - \E_{q_{1|0}(x_1 | x_0)} \left[ \log p^\theta_{0|1}(x_0 | x_1)\right] \\
    & + \sum_{k=1}^{K-1} \mathbb{E}_{q_{k+1|0}(x_{k+1} | x_0)} \left[ \KL(q_{k|k+1,0}(x_{k}|x_{k+1}, x_0) || p_{k|k+1}^{\theta}(x_{k}|x_{k+1})) \right] \Big]
\end{align}
In the following, we will write the transition kernels in terms of the CTMC rate matrices and take the limit as $\dt \rightarrow 0$ to obtain a continuous time negative ELBO.

First, consider one item from the inner sum of $\LDT$
\begin{align}
    L_k &= \E_{\pdata(x_0) q_{k+1|0}(x_{k+1} | x_0)} \left[ \KL( q_{k|k+1, 0}(x_k | x_{k+1}, x_0) || p_{k|k+1}^\theta(x_k | x_{k+1}) )\right]\\
    &= - \E_{\pdata(x_0) q_{k+1|0}(x_{k+1} | x_0) q_{k|k+1, 0}(x_k|x_{k+1}, x_0)} \left[ \log p_{k|k+1}^\theta(x_k|x_{k+1})  \right] + C\\
    &= - \E_{q_k(x_k) q_{k+1|k}(x_{k+1}| x_k) } \left[ \log p_{k|k+1}^\theta(x_k| x_{k+1}) \right] + C
\end{align}
where we have absorbed terms that do not depend on $\theta$ into $C$. We now write $p_{k|k+1}^\theta(x_k|x_{k+1})$ in terms of $\hat{R}_k^\theta$.
\begin{equation}
    p_{k|k+1}^\theta(x_k|x_{k+1}) = \delta_{x_k, x_{k+1}} + \hat{R}^\theta_k(x_{k+1}, x_k) \dt + o(\dt)
\end{equation}
\begin{align}
    \log p_{k|k+1}^\theta(x_k | x_{k+1}) =& \log \left(\delta_{x_k, x_{k+1}} + \hat{R}^\theta_k(x_{k+1}, x_k) \dt + o(\dt) \right)\\
    =& \delta_{x_k, x_{k+1}} \log \left( 1 + \hat{R}_k^\theta(x_k, x_k) \dt + o(\dt) \right)\\
    & \, + (1-\delta_{x_k, x_{k+1}}) \log \left( \hat{R}_k^\theta(x_{k+1}, x_k) \dt + o(\dt) \right)\\
    =& \delta_{x_k, x_{k+1}} \left( \hat{R}_k^\theta(x_k, x_k) \dt + o(\dt)  \right)\\
    & \, + ( 1 - \delta_{x_k, x_{k+1}} ) \log \left( \hat{R}_k^\theta(x_{k+1}, x_k) \dt + o(\dt) \right)
    \label{eq:logpkm1kExpansion}
\end{align}
where on the last line we have used the series expansion for $\log(1+z) = z - \frac{z^2}{2} + o(z^2)$ valid for $|z| \leq 1, z \neq -1$. For any finite $R_k^\theta(x_k, x_k)$, $\dt$ can be taken small enough such that the series expansion holds. We now substitute this form for $\log p_{k|k+1}^\theta$ into $L_k$ and further write the expectation over $q_{k+1|k}(x_{k+1}|x_k) = \delta_{x_k, x_{k+1}} + R_k(x_k , x_{k+1}) \dt + o(\dt)$ as an explicit sum.
\begin{align}
    L_k = - \E_{q_k(x_k)} \Bigg[ \sum_{x_{k+1}} \Bigg\{& \Big[ \delta_{x_k, x_{k+1}} + R_k(x_k, x_{k+1}) \dt + o(\dt) \Big] \times \\
    & \Big[ \delta_{x_k, x_{k+1}} \left( \hat{R}_k^\theta(x_k, x_k) \dt + o(\dt)  \right)\\
    & \quad + \left( 1 - \delta_{x_k, x_{k+1}} \right) \log \left( \hat{R}_k^\theta(x_{k+1}, x_k) \dt + o(\dt) \right) \Big] \Bigg\} \Bigg] + C
\end{align}
\begin{align}
    L_k = - \E_{q_k(x_k)} \Bigg[ \sum_{x_{k+1}} \Bigg\{& \delta_{x_k, x_{k+1}} \hat{R}_k^\theta(x_k, x_k) \dt\\
    &+ ( 1 - \delta_{x_k, x_{k+1}}) R_k(x_k, x_{k+1}) \dt \times \\
    & \,\, \log \Big( \hat{R}_k^\theta(x_{k+1}, x_k) \dt + o(\dt) \Big) + o(\dt) \Bigg\} \Bigg] + C
\end{align}
We can isolate $\hat{R}_k^\theta$ within the $\log$ through the following re-arrangement
\begin{align}
   &\dt \log \left( \hat{R}_k^\theta(x_{k+1}, x_k) \dt + o(\dt) \right)\\
   &= \dt \log \dt + \dt \log \left( \hat{R}_k^\theta(x_{k+1}, x_k) + o(1) \right)\\
   &= \dt \log \dt  + \dt \log \left(1 + o(1) \right)+ \dt \log \left( \hat{R}_k^\theta(x_{k+1}, x_k) \right)
\end{align}
where the first two terms are independent of $\theta$ and tend to $0$ as $\dt \rightarrow 0$. Note that we assume $\hat{R}_k^\theta(x_{k+1}, x_k) > 0$ for $x_{k+1} \neq x_k$ pairs which have $R_k(x_k, x_{k+1}) > 0$. This assumption is valid because, for $x_{k+1} \neq x_k$, we have
\begin{equation}
    \hat{R}_k^\theta(x_{k+1}, x_k) = R_k(x_k ,x_{k+1}) \sum_{x_0} \frac{q_{k|0}(x_k | x_0)}{q_{k|0}(x_{k+1} | x_0)} p^\theta_{0|k}(x_0 | x_{k+1})
\end{equation}
and we assume $p_{0|k}^\theta(x_0 | x_{k+1}) > 0$ which is valid when we parameterize $p_{0|k}^\theta$ with a softmax output. We assume an irreducible Markov chain, hence $q_{k|0} > 0$ for $t_k > 0$.\\

With this re-arrangement, and absorbing constant terms into $C$, we obtain
\begin{align}
    L_k = - \E_{q_k(x_k)} \Bigg[ \sum_{x_{k+1}} \Bigg\{& \delta_{x_k, x_{k+1}} \hat{R}_k^\theta(x_k, x_k) \dt \\
    &  +(1 - \delta_{x_k, x_{k+1}}) R_k(x_k, x_{k+1}) \dt \log \Big( \hat{R}_k^\theta(x_{k+1}, x_k) \Big) \\
    &+ o(\dt) \Bigg\} \Bigg] + C
\end{align}
\begin{equation}
    L_k = - \E_{q_k(x_k)} \left[ \hat{R}_k^\theta(x_k, x_k)\dt + \sum_{x_{k+1} \neq x_k} R_k(x_k, x_{k+1}) \dt \log \hat{R}_k^\theta(x_{k+1}, x_k) + o(\dt)\right]
\end{equation}

The second term can be re-written so that it is more efficient to approximate with Monte Carlo. Currently the denoising model $p_{0|k}^\theta$ has to be evaluated for each term in the sum $\sum_{x_{k+1} \neq x_k}$ which would require multiple forward passes of the neural network. We can instead create a new probability distribution to sample from as follows. Define
\begin{equation}
    r_k(x_{k+1} | x_k) = (1 - \delta_{x_k, x_{k+1}} ) \frac{R_k(x_k, x_{k+1})}{\mathcal{Z}^k(x_k)}
\end{equation}
where
\begin{equation}
    \mathcal{Z}^k(x_k) = \sum_{x'_{k+1} \neq x_k} R_k(x_k, x'_{k+1})
\end{equation}
So we now have
\begin{equation}
    L_k = -\E_{q_k(x_k) r_k(x_{k+1}|x_k) } \left[ \hat{R}_k^\theta(x_k, x_k) \dt + \mathcal{Z}^k(x_k) \dt \log \hat{R}_k^\theta(x_{k+1}, x_k) + o(\dt) \right]
\end{equation}

Examining the other terms in $\LDT$ we have $\E_{\pdata(x_0)} \left[  \KL (q_{K|0}(x_K | x_0) || \pref(x_K) ) \right]$ which does not depend on $\theta$ and $\E_{q_{1|0}(x_1|x_0)} \left[ \log p_{0 | 1}^\theta(x_0 | x_1) \right]$ which we expand here
\begin{align}
    &\E_{q_{1|0}(x_{1}|x_{0})} \left[ \log p_{0 | 1}^\theta(x_{0} | x_{1}) \right] \\
    & \hspace{1cm} = \sum_{x_{1}} \left\{ \delta_{x_{1}, x_{0}} + \dt R_{1}(x_{0}, x_{1}) + o(\dt) \right\} \log p_{0 | 1}^\theta(x_{0} | x_{1})\\
    & \hspace{1cm} = \log p_{0 | 1}^\theta(x_{0} | x_{0}) + \dt \sum_{x_{1}} R_{1}(x_{0}, x_{1}) \log p_{0 | 1}^\theta(x_{0} | x_{1}) + o(\dt)\\
    & \hspace{1cm} = \dt \hat{R}_{1}^\theta(x_{0}, x_{0}) + \dt \sum_{x_{1}} R_{1}(x_{0}, x_{1}) \log p_{0 | 1}^\theta(x_{0} | x_{1}) + o(\dt)
\end{align}
where on the final line we have used eq \ref{eq:logpkm1kExpansion}. In summary,
\begin{align}
    \LDT =& \dt \E_{\pdata(x_0) q_{1|0}(x_1 | x_0)} \left[ - \hat{R}_1^\theta(x_0, x_0) + \sum_{x_1} R_1(x_0, x_1) \log p_{0|1}^\theta(x_0 | x_1) \right]\\
    & - \dt \sum_{k=1}^{K-1} \E_{q_k(x_k) r_k(x_{k+1} | x_k)} \left[ \hat{R}_k^\theta(x_k, x_k) + \mathcal{Z}^k(x_k) \log \hat{R}_k^\theta (x_{k+1}, x_k) \right]\\
    & + o(\dt) + C
\end{align}
We now take the limit of $\LDT$ as $\dt \rightarrow 0$ and $K \rightarrow \infty$.
\begin{equation}
    \underset{\dt \rightarrow 0}{\text{lim}} \LDT = \LCT = - \int_0^T \E_{q_t(x)r_t(\tilde{x} | x)} \left[\hat{R}_t^\theta(x, x) + \mathcal{Z}^t(x) \log \left( \hat{R}_t^\theta(\tilde{x}, x) \right) \right] dt + C
\end{equation}
We can estimate the integral with Monte Carlo if we consider it to be an expectation with respect to a uniform distribution over times $(0, T)$. We also write $\hat{R}_t^\theta(x, x)$ explicitly as the negative off diagonal row sum to obtain
\begin{equation}
   \textstyle \LCT(\theta) = T \, \mathbb{E}_{t\sim \mathcal{U}(0, T) q_t(x) r_t(\tilde{x}|x)} \Big[ \Big\{ \sum_{x' \neq x} \hat{R}_t^\theta(x, x') \Big\} - \mathcal{Z}^t(x) \log \left(\hat{R}_t^\theta(\tilde{x}, x) \right) \Big] + C.
\end{equation}
\end{proof}

\subsection{Proof of Proposition \ref{prop:dimfactorize}}

\begin{proof}
We assume $q_{t|s}(\bm{x}_t^{1:D} | \bm{x}_s^{1:D})$ factorizes as $\prod_{d=1}^D q_{t|s}(x^d_t | x_s^d)$ where $q_{t|s} (x_t^d | x_s^d), \,\, d=1, \dots, D$ are the transition probabilities for independent singular dimensional CTMCs each with forward rate $R_t^d(\tilde{x}^d, x^d)$. In the following, we will drop time subscripts on $x$ arguments. To find the correspondence between $R_t^{1:D}$ and $R_t^d$, we use the Kolmogorov forward equation
\begin{equation}
    \partial_t q_{t|s}(\bm{x}^{1:D} | \tilde{\bm{x}}^{1:D}) = \sum_{\bm{y}^{1:D}} q_{t|s}(\bm{y}^{1:D} | \tilde{\bm{x}}^{1:D}) R_t^{1:D}(\bm{y}^{1:D}, \bm{x}^{1:D})
\end{equation}
Substitute in our factorized form for $q_{t|s}$ into the LHS
\begin{align}
    \partial_t q_{t|s}(\bm{x}^{1:D}|\tilde{\bm{x}}^{1:D}) &= \partial_t \left\{ \prod_{d=1}^D q_{t|s}(x^d | \tilde{x}^d) \right\}\\
    &= \sum_{d=1}^D q_{t|s}(\bm{x}^{\Dd} | \tilde{\bm{x}}^{\Dd}) \partial_t q_{t|s}(x^d | \tilde{x}^d)\\
    &= \sum_{d=1}^D q_{t|s}(\bm{x}^{\Dd} | \tilde{\bm{x}}^{\Dd}) \sum_{y^d} q_{t|s}(y^d | \tilde{x}^d) R_t^d(y^d, x^d)\\
    &= \sum_{d=1}^D \sum_{\bm{y}^{1:D}} q_{t|s}(\bm{x}^{\Dd} | \tilde{\bm{x}}^{\Dd}) q_{t|s}(y^d | \tilde{x}^d) R_t^d(y^d, x^d) \delta_{\bm{x}^{\Dd}, \bm{y}^{\Dd}}\\
    &= \sum_{d=1}^D \sum_{\bm{y}^{1:D}} q_{t|s}(\bm{y}^{\Dd} | \tilde{\bm{x}}^{\Dd}) q_{t|s}(y^d | \tilde{x}^d) R_t^d(y^d, x^d) \delta_{\bm{x}^{\Dd}, \bm{y}^{\Dd}}\\
    &= \sum_{\bm{y}^{1:D}} q_{t|s}(\bm{y}^{1:D}|\tilde{\bm{x}}^{1:D}) \sum_{d=1}^D R_t^d(y^d, x^d) \delta_{\bm{x}^{\Dd}, \bm{y}^{\Dd}}
\end{align}
We therefore obtain
\begin{equation}
    \sum_{\bm{y}^{1:D}} q_{t|s}(\bm{y}^{1:D} | \tilde{\bm{x}}^{1:D}) R_t^{1:D}(\bm{y}^{1:D}, \bm{x}^{1:D}) = \sum_{\bm{y}^{1:D}} q_{t|s}(\bm{y}^{1:D}|\tilde{\bm{x}}^{1:D}) \sum_{d=1}^D R_t^d(y^d, x^d) \delta_{\bm{x}^{\Dd}, \bm{y}^{\Dd}}
\end{equation}
This must be true for all possible factorizable forward process transitions, $q_{t|s}$, including $q_{t|s}(\bm{y}^{1:D} | \tilde{\bm{x}}^{1:D}) = \delta_{\bm{y}^{1:D}, \tilde{\bm{x}}^{1:D}}$. This choice gives us our forward rate relation
\begin{equation}
    R_t^{1:D}(\tilde{\bm{x}}^{1:D}, \bm{x}^{1:D}) = \sum_{d=1}^D R_t^d(\tilde{x}^d, x^d) \delta_{\bm{x}^{\Dd}, \tilde{\bm{x}}^{\Dd}}
\end{equation}

Substituting this into our expression for the reverse rate from Proposition \ref{prop:time_reversal} we obtain
\begin{align}
    \hat{R}_t^{1:D}(\bm{x}^{1:D}, \tilde{\bm{x}}^{1:D}) &= \sum_{\bm{x}_0^{1:D}} \sum_{d=1}^D R_t^d(\tilde{x}^d, x^d) \frac{q_t(\tilde{\bm{x}}^{1:D} | \bm{x}_0^{1:D})}{q_t(\bm{x}^{1:D} | \bm{x}_0^{1:D})} q_{0|t}(\bm{x}_0^{1:D} | \bm{x}^{1:D}) \delta_{\bm{x}^{\Dd}, \tilde{\bm{x}}^{\Dd}}\\
    &= \sum_{\bm{x}_0^{1:D}} \sum_{d=1}^D R_t^d(\tilde{x}^d, x^d) \frac{q_{t|0}(\tilde{x}^{d} | x_0^{d})}{q_{t|0}(x^{d} | x_0^{d})} q_{0|t}(\bm{x}_0^{1:D} | \bm{x}^{1:D}) \delta_{\bm{x}^{\Dd}, \tilde{\bm{x}}^{\Dd}}\\
    &= \sum_{d=1}^D R_t^d(\tilde{x}^d, x^d) \delta_{\bm{x}^{\Dd}, \tilde{\bm{x}}^{\Dd}} \sum_{x_0^d} q_{0|t}(x_0^d | \bm{x}^{1:D}) \frac{q_{t|0}(\tilde{x}^d | x_0^d)}{q_{t|0}(x^d | x_0^d)} \sum_{\bm{x}_0^{\Dd}} q_{0|t}(\bm{x}_0^{\Dd} | x_0^d, \bm{x}^{1:D})\\
    &= \sum_{d=1}^D R_t^d(\tilde{x}^d, x^d) \delta_{\bm{x}^{\Dd}, \tilde{\bm{x}}^{\Dd}} \sum_{x_0^d} q_{0|t}(x_0^d | \bm{x}^{1:D}) \frac{q_{t|0}(\tilde{x}^d | x_0^d)}{q_{t|0}(x^d | x_0^d)}
\end{align}
\end{proof}

\subsection{Proof of Proposition \ref{prop:corrector_rate}}

\begin{proof}
By the Kolmogorov forward equation applied to the forwards process, we have
\begin{equation}
    \partial_t q_t(x_t) = \sum_y R_t(y, x_t) q_t(y)
\end{equation}
In addition, applying the Kolmogorov forward equation to the reverse process, which has the same marginals as the forward but time-reversed, we get
\begin{equation}
    - \partial_t q_t(x_t) = \sum_y \hat{R}_t(y, x_t) q_t(y)
\end{equation}
Summing these two equations gives
\begin{equation}
    \sum_y \left\{ R_t(y, x_t) + \hat{R}_t(y, x_t) \right\} q_t(y) = 0
\end{equation}
Therefore, by comparison with the Kolmogorov equation, $R_t + \hat{R}_t$ is the rate matrix of a CTMC with invariant distribution $q_t$.
\end{proof}

\subsection{Proof of Theorem \ref{thm:error_bound}}
\label{sec:ApdxErrorBound}

In this section, we derive a bound on the error of our tau-leaping diffusion model. Because the tau-leaping approximation is only interesting in the case where multiple jumps are made along different dimensions in a single step, we choose to make the dependence of our bound on the dimension of our model explicit, rather than simply considering the case of fixed $D$ and $\tau \rightarrow 0$.

Recall from the main text that we have a time-homogeneous rate matrix $R_t$ on $\mathcal{X}$, from which we construct the factorised rate matrix $R_t^{1:D}$ on $\mathcal{X}^D$ by setting $R^d_t = R_t$ for each $d$, and will denote $|R| = \sup_{t \in [0,T], x \in \mathcal{X}} |R_t(x,x)|$, and let $t_{\textup{mix}}$ be the (1/4)-mixing time of the CTMC with rate $R_t$. We also define addition on the state space $\mathcal{X}^D$ using a mapping from $\mathcal{X}$ to $\mathbb{Z}$ as in Section {\ref{sec:TauLeaping}} and component-wise addition.

\setcounter{theorem}{0}
\begin{theorem}
For any $D \geq 1$ and distribution $\pdata$ on $\mathcal{X}^D$, let $\{x_t\}_{t \in [0,T]}$ be a CTMC starting in $\pdata$ with rate matrix $R^{1:D}_t$ as above. Suppose that $\hat{R}_t^{\theta \, 1:D}$ is an approximation to the reverse rate matrix and let $(y_k)_{k = 0, 1, \dots, N}$ be a tau-leaping approximation to the reverse dynamics with maximum step size $\tau$. Suppose further that there is some constant $M > 0$ independent of $D$ such that
\begin{equation}
    \sum_{y \neq x} \left| \hat R_t^{1:D}(x,y) - \hat{R}_t^{\theta \, 1:D}(x,y) \right| \leq M
\end{equation}
for all $t \in [0,T]$. Then under the assumptions listed below, there are constants $C_1, C_2 > 0$ depending on $\mathcal{X}$ and $R_t$ but not $D$ such that, if $\mathcal{L}(y_0)$ denotes the law of $y_0$, we have the total variation bound
\begin{equation}
\textstyle
    ||\mathcal{L}(y_0) - \pdata||_{\textup{TV}} \leq 3MT + \left\{ \big(|R|SDC_1\big)^2 + \frac{1}{2} C_2(M + C_1 SD|R|) \right\} \tau T + 2 \exp \left \{ - \frac{T \log^2 2}{t_{\textup{mix}} \log 4D} \right \}
\end{equation}
\end{theorem}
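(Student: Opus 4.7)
The plan is to bound $\|\mathcal{L}(y_0) - \pdata\|_{\textup{TV}}$ by a triangle inequality through two reference laws: the law $\pi$ of the exact reverse CTMC with the true rate $\hat R_t^{1:D}$ started at $\pref$ and run to time $0$, and the law $\pi^\theta$ of the exact reverse CTMC with the learned rate $\hat R_t^{\theta, 1:D}$ started at $\pref$ and run to time $0$. This splits the total error into three contributions corresponding to the three terms in the statement: (i) initialization/mixing, from $\|\pdata - \pi\|_{\textup{TV}}$, (ii) rate approximation, from $\|\pi - \pi^\theta\|_{\textup{TV}}$, and (iii) tau-leaping discretization, from $\|\pi^\theta - \mathcal{L}(y_0)\|_{\textup{TV}}$.

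For (i), since $\pdata$ is exactly the marginal at time $0$ of the reverse CTMC started at $q_T$, the data-processing inequality reduces this piece to $\|q_T - \pref\|_{\textup{TV}}$. Because $R_t^{1:D}$ is factorized, a union bound over coordinates gives $\|q_T - \pref\|_{\textup{TV}} \leq D \cdot d(T)$, where $d(t)$ is the single-coordinate TV distance from stationarity. Submultiplicativity of $d$ together with the definition of $t_{\textup{mix}}$ shows that after a burn-in of length $t_{\textup{mix}} \log_2(4D)$ the right-hand side drops below $1/2$, and each further burn-in halves it, yielding the stated $2\exp\bigl\{-T \log^2 2 / (t_{\textup{mix}} \log 4D)\bigr\}$.

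For (ii), I would use a synchronous coupling of the two CTMCs sharing the same initial law but different rates: at each state the two processes share a maximal number of common transitions and disagree only when an ``unmatched'' jump occurs, whose expected count on $[0,T]$ is at most $\int_0^T \sum_{y \neq x}|\hat R_t^{1:D}(x,y) - \hat R_t^{\theta,1:D}(x,y)|\,dt \leq MT$ by hypothesis. This gives $MT$ for piece (ii); the full factor $3$ in $3MT$ will be paid because the same coupling argument is reapplied at two more places inside the tau-leaping analysis, when one swaps between the learned and true rates on auxiliary intermediate processes in order to justify freezing the rate within a step.

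The main obstacle is (iii). Within one tau-step $[t-\tau, t]$, tau-leaping simulates the chain with the rate frozen at $\hat R_t^{\theta, 1:D}(y_t, \cdot)$; the per-step discrepancy against the exact $\hat R_t^{\theta,1:D}$-chain is $O(\tau^2)$ because it requires either two jumps inside the interval or a change of the rate during the interval. Under the appendix assumptions bounding the total reverse rate by $C_1 S D |R|$ and giving a time-Lipschitz constant $C_2$ on $\hat R_t^{\theta,1:D}$, the probability of two jumps per step is $O\bigl((C_1 S D |R|\tau)^2\bigr)$ while the time-freezing error contributes $O\bigl(C_2(M + C_1 S D |R|)\tau^2\bigr)$; summing over $N = T/\tau$ steps produces exactly the $\bigl\{(|R|SDC_1)^2 + \tfrac12 C_2(M + C_1 SD|R|)\bigr\}\tau T$ term. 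Assembling (i)--(iii) via the triangle inequality, and absorbing the low-probability dimensional rejections and out-of-bounds jumps into $M$ through the appendix assumptions, closes the argument. The technical heart is proving the clean quadratic (rather than exponential) dependence on $D$ in the tau-leaping term, which relies on the factorized form of the rates from Proposition \ref{prop:dimfactorize} and on coupling the tau-leaping updates across dimensions through common Poisson clocks, so that per-dimension errors add rather than multiply.
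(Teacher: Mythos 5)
Your plan is essentially the paper's proof: the same three error sources (mixing of the forward chain, rate misspecification, tau-leaping discretization), the same key lemmas (a coupling bound of the form $\|\nu K-\nu L\|_{\textup{TV}}\le\int_0^T\sup_x\sum_{y\ne x}|G_t(x,y)-H_t(x,y)|\,\rd t$ for CTMCs with different generators, tensorization of the mixing time giving the $\log 4D$ factor, a mean-value-theorem bound on $\partial_t\hat R_t$ of order $|R|^2SDC_1^2$, and a jump-counting argument with Assumption~3 for the state-freezing error), and the same mechanism for the quadratic-in-$D$ dependence, namely that the factorized rate is supported on single-coordinate changes so the relevant sums have at most $SD$ nonzero terms. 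The one substantive difference is where you place the intermediate processes. You interpose the \emph{exact} learned-rate CTMC globally (paying $MT$ once for $\|\pi-\pi^\theta\|_{\textup{TV}}$) and then compare it to tau-leaping; but the assumptions control the time-regularity of $\hat R_t$, not of $\hat R_t^\theta$, so the time-freezing step inside each leap forces you back through the true rate and costs $2M\tau_k$ there, on top of the $2M\tau_k$ already needed in the state-freezing step. Your accounting of ``$1+1+1=3$'' therefore undercounts: carried out as described, your decomposition yields $5MT$, not $3MT$. The paper instead telescopes per step and compares the exact true-rate kernel $\mathcal{P}_k$ directly to the frozen-learned-rate kernel $\mathcal{R}_k^\theta$, so that $|\hat R_t-\hat R^\theta_{t_k}|\le|\hat R_t-\hat R_{t_k}|+|\hat R_{t_k}-\hat R^\theta_{t_k}|$ costs only one $M$ for the rate error and the time-variation together, giving $1+2=3$. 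This is a bookkeeping issue, not a conceptual one, and is trivially repaired by adopting the paper's ordering of the triangle inequality. Two smaller slips: you attribute the $(|R|SDC_1)^2\tau T$ term to the probability of two jumps and the $C_2$ term to time-freezing, whereas it is the reverse — the quadratic term comes from the time-variation of $\hat R_t$ via the MVT, and the $C_2$ term from the rate changing when the state jumps within a leap; and no cross-dimensional Poisson-clock coupling is needed, since the full-dimensional rate matrix is handled directly using its single-coordinate support.
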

\setcounter{theorem}{2}

The above theorem holds under the following assumptions, where we write $x \sim y$ for $x,y \in S^D$ if they differ in at most one coordinate.

\begin{assumption}
\label{ass:positivity}
The data distribution $\pdata$ is strictly positive.
\end{assumption}

\begin{assumption}
\label{ass:bounded_ratios}
There exists a constant $C_1 > 0$, depending on $S$ and $R_t$ but not $D$, such that for all $t \in [0,T]$ and $x,y \in S^D$ such that $x \sim y$, we have
\begin{equation}
\label{eq:bounded_ratios}
    \frac{q_t(x)}{q_t(y)} \leq C_1.
\end{equation}
\end{assumption}

\begin{assumption}
\label{ass:bounded_rate_change}
There exists a constant $C_2 > 0$, depending on $S$ and $R_t$ but not $D$, such that for all $t \in [0,T]$ and all $x,y \in S^D$ such that $x \sim y$, we have
\begin{equation}
\label{eq:error_bound_assumption_2}
    \sum_z \left|\hat R_t(x,x+z) - \hat R_t(y,y+z)\right| \leq C_2.
\end{equation}
\end{assumption}

If instead we were to allow $C_1$ and $C_2$ to depend on the dimension $D$, then Assumptions \ref{ass:bounded_ratios} and \ref{ass:bounded_rate_change} follow trivially from Assumption \ref{ass:positivity} and the finiteness of the state space. However, we choose the stronger formulation above in order to make explicit the dependence of the error bound on the dimension, as previously explained.

As remarked in the main text, in most cases of practical interest (including the two examples explored in Section \ref{sec:Experiments}), Assumption \ref{ass:bounded_rate_change} holds only approximately. However, we still expect the bound in Assumption \ref{ass:bounded_rate_change} to hold whenever $x,y$ are in addition chosen such that the tau-leaping approximation of the reverse process makes a jump between them with reasonably high probability. For example, in the case where our data is ordinal, we expect that for any $x \sim y$ jumps from $x$ to $y$ are only common when $x$ is close to $y$, and thus $\hat R_t(x, x+z)$ and $\hat R_t(y, y+z)$ should be reasonably close whenever a jump from $x$ to $y$ occurs. Under a weaker assumption of this form, the proof of Theorem \ref{thm:error_bound} can be adapted to work along similar lines, at the cost of a significant increase in technicality. We therefore choose to focus on the simpler case where Assumption \ref{ass:bounded_rate_change} holds as it illustrates the key ideas.

In order to prove Theorem \ref{thm:error_bound} we will require the following lemmas.

\begin{proposition}
\label{prop:single_step_tv}
Let $(x_t)_{t \in [0,T]}$ and $(y_t)_{t \in [0,T]}$ be continuous time Markov chains on a finite state space $S$ with generators $G_t$ and $H_t$ respectively which are both bounded and continuous in $t$. Let the Markov kernels associated to $X$ and
$Y$ be $K$ and $L$ respectively. Then for any probability distribution $\nu$ on $S$ we have
\begin{equation}
    ||\nu K - \nu L||_{\textup{TV}} \leq \int_0^T \sup_{x \in S} \Big\{ \sum_{y \neq x} |G_t(x,y) - H_t(x,y)| \Big\} \; \mathrm{d}t
\end{equation}
\end{proposition}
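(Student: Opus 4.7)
The plan is to construct an interpolation between $\nu L$ and $\nu K$ built from the two Markov semigroups, then use Kolmogorov's forward and backward equations to express the difference as an integral whose integrand is directly controlled by $G_t - H_t$.

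Concretely, let $K_{s,t}$ and $L_{s,t}$ denote the transition kernels associated with $G_\cdot$ and $H_\cdot$ respectively, so that $K = K_{0,T}$ and $L = L_{0,T}$. I would introduce the interpolant
\begin{equation}
\phi(t) = \nu\, K_{0,t}\, L_{t,T}, \qquad t \in [0,T],
\end{equation}
which satisfies $\phi(0) = \nu L$ and $\phi(T) = \nu K$. Combining the forward equation $\partial_t K_{0,t} = K_{0,t} G_t$ with the backward equation $\partial_t L_{t,T} = -H_t L_{t,T}$ via the product rule gives $\phi'(t) = \nu\, K_{0,t}(G_t - H_t)\, L_{t,T}$, and integration in $t$ yields
\begin{equation}
\nu K - \nu L = \int_0^T \nu\, K_{0,t}(G_t - H_t)\, L_{t,T}\, \mathrm{d}t.
\end{equation}
The assumed boundedness and continuity of $G_t, H_t$ in $t$ justify the differentiation and the subsequent exchange of integration with the TV norm.

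The next step is to apply the triangle inequality under the integral and then peel off the factor $L_{t,T}$ using the fact that any Markov kernel is a contraction in total variation on signed measures of total mass zero. Writing $\eta_t = \nu K_{0,t}(G_t - H_t)$, the row-sum-zero property of generators gives $\sum_y \eta_t(y) = 0$, so $\|\eta_t\|_{\textup{TV}} = \tfrac{1}{2}\sum_y |\eta_t(y)|$. Direct expansion followed by a triangle inequality produces the dimension-free bound
\begin{equation}
\|\eta_t\|_{\textup{TV}} \leq \tfrac{1}{2}\sum_x (\nu K_{0,t})(x) \sum_y |G_t(x,y) - H_t(x,y)|.
\end{equation}

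Finally, I would absorb the $y = x$ diagonal contribution using the row-sum-zero identity once more: since $G_t(x,x) - H_t(x,x) = -\sum_{y \neq x}(G_t(x,y) - H_t(x,y))$, we have $|G_t(x,x) - H_t(x,x)| \leq \sum_{y \neq x}|G_t(x,y) - H_t(x,y)|$, so the full sum over $y$ is at most $2\sum_{y \neq x}|G_t(x,y) - H_t(x,y)|$. The factor $2$ cancels the $\tfrac{1}{2}$ from the TV norm of a zero-mass signed measure, and replacing the convex combination over $x$ (weighted by $\nu K_{0,t}$) by a supremum yields the claimed bound. The only subtle points I anticipate are the sign convention in the backward Kolmogorov equation and the careful handling of the signed-measure TV norm; beyond that, all ingredients are elementary once the interpolation $\phi(t)$ is in place.
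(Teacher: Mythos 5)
Your proof is correct, and it takes a genuinely different route from the paper's. The paper proves this proposition probabilistically: it builds an explicit coupling of the two chains by uniformization (a dominating Poisson clock of rate $\lambda \geq \sup_x |G_t(x,x)|, |H_t(x,x)|$, with the post-jump states drawn from a maximal coupling of the two normalized jump distributions at each arrival time), and then invokes the coupling inequality $\|\nu K - \nu L\|_{\textup{TV}} \leq \mathbb{P}(x_T \neq y_T)$, unwinding the maximal-coupling total variation at each jump into the generator difference. Your argument is instead a Duhamel-type interpolation: the identity $\nu K - \nu L = \int_0^T \nu K_{0,t}(G_t - H_t)L_{t,T}\,\mathrm{d}t$ follows from the forward equation for $K_{0,t}$ and the backward equation for $L_{t,T}$, after which only the TV-contractivity of Markov kernels on zero-mass signed measures and the row-sum-zero trick for the diagonal term are needed. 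All the steps you flag as delicate check out: the sign in $\partial_t L_{t,T} = -H_t L_{t,T}$ matches the backward Kolmogorov equation as stated in Appendix A, $\eta_t = \nu K_{0,t}(G_t - H_t)$ indeed has total mass zero, and the factor $2$ from absorbing the diagonal entry exactly cancels the $\tfrac12$ in the TV norm. What each approach buys: yours is shorter and purely analytic, avoiding the construction and verification of the coupling, and both arguments in fact yield the same sharper intermediate bound in which the supremum over $x$ is replaced by an expectation over the state at time $t$ (under the law of the $G$-chain in your case, which is what Theorem 1 actually exploits downstream via Assumption 3); the paper's coupling construction has the mild advantage of being reusable almost verbatim for the subsequent tau-leaping comparison, where the reference dynamics is frozen at the state $x_{t_k}$ rather than at the current state.
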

\vspace{-1.0cm}
\begin{proof}
We define a coupling of $(x_t)_{t \in [0,T]}$ and $(y_t)_{t \in [0,T]}$ as follows, based on the construction in Chapter 20.1 of \cite{levin2009markovchains}. First take $Z \sim \nu$ and set $x_0 = y_0 = Z$. Also define the variables $\tilde{x}_0=\tilde{y}_0 = Z$.

Next, fix $\lambda$ such that $|G_t(x,x)|, |H_t(x,x)| \leq \lambda$ for all $x \in S$, $t \in [0,T]$, let $(N_s)_{1 \leq s \leq T}$ be a Poisson process on $[0, T]$ of rate $\lambda$, and set $N_0 = 0$. We write $N=N_T$, and 
 $S_1, S_2, \dots, S_{N}$ for the arrival times and set $S_{n+1}=T$. 
We construct $x_t$ and $y_t$ for $t > 0$ inductively as follows.
For $t\in [0, S_1)$ let $x_t=y_t=x_0$. 
Let $1\leq j\leq N$. Given $(x_{r}:r< S_j)$, $(y_r: r< S_j)$, and $\tilde{x}_j, \tilde{y}_j$, 
define the following probability measures 
\begin{align}
    &\rho_j(\tilde{x}_j, w):= \begin{cases}
                  \displaystyle G_{S_j}(\tilde{x}_j, w) / \lambda, \quad w\neq \tilde{x}_j\\
                  \displaystyle   1-G_{S_j}(\tilde{x}_j, w)/ \lambda, \quad w=\tilde{x}_j,
                    \end{cases}\\
    &\rho'_j(\tilde{y}_j, w):= \begin{cases}
                 \displaystyle    H_{S_j}(\tilde{y}_j, w) / \lambda, \quad w\neq \tilde{y}_j\\
               \displaystyle      1-H_{S_j}(\tilde{y}_j, w)/ \lambda, \quad w=\tilde{y}_j.
                    \end{cases}
\end{align}

Sample $(\tilde{x}_{j+1}, \tilde{y}_{j+1})$ from a maximal coupling of $(\rho_j, \rho'_j)$ and for $t\in [S_j, S_{j+1})$ set 
$x_t=\tilde{x}_{j+1}$, $y_t=\tilde{y}_{j+1}$. Finally set $x_T=x_{S_N}$ and $y_T=y_{S_N}$.

Now, observe that $(x_t, y_t)_{t \in [0,T]}$ defined in this way is a coupling of the given Markov chains. Moreover,
\begin{align}
    ||\nu K - \nu L||_{\textup{TV}} &\leq \mathbb{P}(x_T \neq y_T) \\
    &= \E\left[\sum_{j=1}^N \mathbb{I} \left\{x_s=y_s, \, s< S_j\right\} \mathbb{I}\left\{ x_{S_j}\neq y_{S_j} \right\} \right]\\
    &= \sum_{n=0}^\infty \frac{\lambda^n e^{-\lambda}}{n!}
    \sum_{j=0}^n\E\left[\mathbb{I} \left\{x_s=y_s, \, s< S_j\right\} \mathbb{I}\left\{ x_{S_j}\neq y_{S_j} \right\} \right]
    \intertext{and using the fact that jumps are coupled maximally}
    &= \sum_{n=0}^\infty \frac{\lambda^n e^{-\lambda}}{n!}
    \sum_{j=0}^n\E\Big[\mathbb{I} \left\{x_s=y_s, \, s< S_j\right\} 
    \times 
    \|\rho_j(X_{S_{j-1}}, \cdot) - \tilde{\rho}_j(X_{S_{j-1}}, \cdot)\|_\mathrm{TV} \Big]\\
     &= \sum_{n=0}^\infty \frac{\lambda^n e^{-\lambda}}{n!}
     \sum_{j=0}^n\E\Big[\mathbb{I} \left\{x_s=y_s, \, s< S_j\right\} 
     \frac{1}{\lambda}\sum_{z}| G_{S_j}(x_{S_{j-1}},z ) -  H_{S_j}(x_{S_{j-1}},z )| \Big]\\
     &= \frac{1}{\lambda}\E \Big[ \sum_{s:x_s \neq x_{s-}}
     \sum_{z}| G_{s}(x_{s-},z ) -  H_{s}(x_{s-},z )|
     \Big]\\
     &= \frac{1}{\lambda}\int_{s=0}^T\E \Big[ \lambda 
     \sum_{z}| G_{s}(x_{s-},z ) -  H_{s}(x_{s-},z )|
     \Big]\\
     &=\int_{s=0}^T\E \Big[ \sum_{z}| G_{s}(x_{s-},z ) -  H_{s}(x_{s-},z )|
     \Big]\mathrm{d} s
\end{align}
as required.
\end{proof}

\begin{proposition}
\label{prop:regularity_of_R}
For all $t \in [0,T]$ and $x, y \in \mathcal{X}^D$ such that $x \sim y$, we have
\begin{equation}
    |\partial_t \hat R_t(x,y)| \leq 2 |R|^2 S D C_1^2
\end{equation}
Moreover, it follows that $\hat R_t$ is bounded and continuous in $t$.
\end{proposition}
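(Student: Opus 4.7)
The plan is to differentiate the formula for $\hat{R}_t(x,y)$ given by Proposition \ref{prop:time_reversal} and use the Kolmogorov forward equation together with Assumption \ref{ass:bounded_ratios} to control everything. Since the forward rate is time-homogeneous in this section, Proposition \ref{prop:time_reversal} gives, for $x \neq y$,
\begin{equation}
\hat{R}_t(x,y) = R^{1:D}(y,x)\,\frac{q_t(y)}{q_t(x)},
\end{equation}
so $\partial_t\hat{R}_t(x,y) = R^{1:D}(y,x)\,\partial_t\bigl(q_t(y)/q_t(x)\bigr)$. Thus the task reduces to bounding $|\partial_t(q_t(y)/q_t(x))|$, and the factor $R^{1:D}(y,x)$ will contribute at most $|R|$.

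First I would bound $|\partial_t q_t(z)|$ for any $z\in\mathcal{X}^D$. By the Kolmogorov forward equation,
\begin{equation}
\partial_t q_t(z) = \sum_{w} q_t(w)\,R^{1:D}(w,z),
\end{equation}
and by Proposition \ref{prop:dimfactorize} the summand vanishes unless $w\sim z$. Splitting off the diagonal term $q_t(z)R^{1:D}(z,z)$, which has magnitude at most $D|R|\,q_t(z)$, and bounding each off-diagonal term by $|R|\,q_t(w)\le |R|\,C_1 q_t(z)$ via Assumption \ref{ass:bounded_ratios} (the neighbourhood $\{w : w\sim z, w\ne z\}$ has $D(S-1)$ elements), one obtains
\begin{equation}
|\partial_t q_t(z)| \;\le\; |R|\,DSC_1\,q_t(z),
\end{equation}
assuming $C_1\ge 1$ as we may.

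Next I would apply the quotient rule and Assumption \ref{ass:bounded_ratios} once more:
\begin{equation}
\left|\partial_t\frac{q_t(y)}{q_t(x)}\right| \;\le\; \frac{|\partial_t q_t(y)|}{q_t(x)} + \frac{q_t(y)\,|\partial_t q_t(x)|}{q_t(x)^2} \;\le\; 2\,\frac{q_t(y)}{q_t(x)}\,|R|DSC_1 \;\le\; 2|R|DSC_1^2.
\end{equation}
Combining with $|R^{1:D}(y,x)|\le |R|$ yields the stated estimate $|\partial_t\hat{R}_t(x,y)|\le 2|R|^2 SDC_1^2$.

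Finally, for the second assertion: boundedness of $\hat{R}_t$ follows immediately from $|\hat{R}_t(x,y)| = |R^{1:D}(y,x)|\,q_t(y)/q_t(x)\le |R|C_1$ for $x\sim y$ (and $\hat{R}_t(x,y)=0$ otherwise by the same factorization argument), with the diagonal entries bounded by $DS|R|C_1$ via the row-sum constraint. Continuity in $t$ then follows from the derivative bound just proven. The main subtlety is simply keeping careful track of the dimension factors via the sparsity of $R^{1:D}$ (which restricts sums to neighbours $w\sim z$) and repeatedly leveraging Assumption \ref{ass:bounded_ratios} to convert marginal ratios between neighbouring states into the dimension-independent constant $C_1$; everything else is a direct calculation.
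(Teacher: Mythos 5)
Your proposal is correct and follows essentially the same route as the paper: write $\hat R_t(x,y)=R(y,x)\,q_t(y)/q_t(x)$, differentiate the ratio via the quotient rule and the Kolmogorov forward equation, use the sparsity of the factorized rate (at most $SD$ neighbouring states) together with Assumption \ref{ass:bounded_ratios} to bound each of the two resulting terms by $|R|^2SDC_1^2$. Your additional explicit verification of boundedness (off-diagonal entries bounded by $|R|C_1$, diagonal via the row-sum constraint) is a slightly more complete treatment of the final claim than the paper gives, but the core argument is identical.
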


\begin{proof} Omitting the superscripts for brevity where the notation is clear, we have
\begin{align}
    \left| \partial_t \hat R_t^{1:D}(x^{1:D},y^{1:D}) \right| &= \left| R_t(y, x) \partial_t \left\{ \frac{q_t(y)}{q_t(x)} \right\} \right| \\
    &= \left| R_t(y, x) \left\{ \frac{q_t(y)}{q_t(x)} \frac{\sum_{z} R_t(z,y) q_t(z)}{q_t(y)} - \frac{q_t(y)}{q_t(x)}\frac{\sum_{z} R_t(z,x) q_t(z)}{q_t(x)} \right\} \right| \\
    &\leq 2|R|^2 SD C_1^2
\end{align}
where the second line follows from Kolmogorov's forward equation and the final inequality follows from Assumption \ref{ass:bounded_ratios} plus the fact that $R_t(z,x)$ (resp. $R_t(z,y)$) is only non-zero when $x \sim z$ (resp. $y \sim z$), and there are at most $|S||D|$ values of $x$ (resp. $y$) for which this holds.
\end{proof}

We now give the proof of Theorem \ref{thm:error_bound}.

\begin{proof}[Proof of Theorem \ref{thm:error_bound}]
Let us label the time steps used in tau-leaping by $0 = t_0 < t_1 < \dots < t_N = T$, denote $\tau_k = t_{k} - t_{k-1}$, and denote the target stationary distribution by $\pi^D(x^{1:D}) = \prod_{d=1}^D \pi(x^d)$, where $\pi$ is the invariant distribution of the single-dimensional transition matrix $R^1_t$.

Also, let $\mathcal{R}^{\theta, (\tau)}_k$ be the Markov kernel corresponding to applying the tau-leaping approximation with rate matrix $\hat R^\theta_{t_k}$ to move from $t_k$ to $t_{k-1}$, and denote $\mathcal{R}^{\theta, (\tau)} = \mathcal{R}^{\theta, (\tau)}_N \mathcal{R}^{\theta, (\tau)}_{N-1} \dots \mathcal{R}^{\theta, (\tau)}_1$ so that $\mathcal{R}^{\theta, (\tau)}$ expresses the full dynamics of the tau-leaping process and we have $\mathcal{L}(\hat y_0) = \pi^D \mathcal{R}^{\theta, (\tau)}$.

Then, as in \cite{de2021diffusion} we can decompose
\begin{equation}
    ||\pi^D \mathcal{R}^{\theta, (\tau)} - p_d||_{\textup{TV}} \leq ||\pi^D \mathcal{R}^{\theta, (\tau)} - \pi^D (\mathbb{P}^R)_{T|0}||_{\textup{TV}} + ||\pi^D - q_T||_{\textup{TV}}
\end{equation}
where $\mathbb{P}^R$ is the path measure of the exact reverse process.

We deal with the second term first. Let $t_{\textup{mix}}$ be the (1/4)-mixing time of the single-dimension CTMC with rate matrix $R^1_t$, i.e.
\begin{equation}
    t_{\textup{mix}} = \inf \left \{t \geq 0 : \sup_{x_0^1 \in S} ||q_{t|0}(\; \cdot \;| x_0^1) - \pi||_{\textup{TV}} \leq \frac{1}{4} \right \}
\end{equation}
It then follows from
\begin{equation}
    ||q_{t|0}(\; \cdot \; |  x_0^{1:D}) - \pi^D||_{\textup{TV}} \leq \sum_{d=1}^D ||q_{t|0}(\; \cdot \;| x_0^d) - \pi||_{\textup{TV}}
\end{equation}
that $t_{\textup{mix}}^D$, the (1/4)-mixing time of the full CTMC with rate matrix $R^{1:D}_t$, satisfies the inequality $t^D_{mix} \leq \{1+ \lceil \log_2 D \rceil \}t_{\textup{mix}}$. If we view $(x_{m t^D_{mix}})_{m \in \mathbb{N}}$ as a discrete-time Markov chain, then standard results on Markov chain mixing (see, for example, Chapter 4.5 of \cite{levin2009markovchains}) show that
\begin{equation}
    ||q_{mt^D_{mix} | 0}(\; \cdot \; | x_0^{1:D}) - \pi^D||_{\textup{TV}} \leq 2^{-m}
\end{equation}
It then follows that for any $T \geq 0$ we have
\begin{equation}
    ||\pi^D - q_T||_{\textup{TV}} \leq 2 \exp \left\{ - \frac{T \log 2}{t^D_{mix}} \right\} \leq 2 \exp \left \{ - \frac{T \log^2 2}{t_{\textup{mix}} \log 4D} \right \}
\end{equation}
completing the bound on the second term.

To bound the first term, we define $\mathcal{P}_k = (\mathbb{P}^R)_{T - t_{k-1} | T - t_k}$ and decompose it as
\begin{align}
    ||\pi \mathcal{R}^{\theta, (\tau)} - \pi (\mathbb{P}^R)_{T|0}||_{\textup{TV}} &\leq \sup_{\nu} ||\nu \mathcal{R}^{\theta, (\tau)}_N \dots \mathcal{R}^{\theta, (\tau)}_1 - \nu \mathcal{P}_N \dots \mathcal{P}_1||_{\textup{TV}} \\
    &\leq \sup_\nu ||\nu \mathcal{R}^{\theta, (\tau)}_N \mathcal{R}^{\theta, (\tau)}_{N-1} \dots \mathcal{R}^{\theta, (\tau)}_1 - \nu \mathcal{R}^{\theta, (\tau)}_N \mathcal{P}_{N-1} \dots \mathcal{P}_1||_{\textup{TV}} \\
    &\hspace{5mm} + \sup_\nu ||\nu \mathcal{R}^{\theta, (\tau)}_N \mathcal{P}_{N-1} \dots \mathcal{P}_1 - \nu \mathcal{P}_N \mathcal{P}_{N-1} \dots \mathcal{P}_1||_{\textup{TV}} \\
    &\leq \sup_\nu ||\nu \mathcal{R}^{\theta, (\tau)}_{N-1} \dots \mathcal{R}^{\theta, (\tau)}_1 - \nu \mathcal{P}_{N-1} \dots \mathcal{P}_1||_{\textup{TV}} + \sup_\nu ||\nu \mathcal{R}^{\theta, (\tau)}_N - \nu \mathcal{P}_N||_{\textup{TV}} \\
    &\leq \sum_{k = 1}^N \sup_\nu ||\nu \mathcal{R}^{\theta, (\tau)}_k - \nu \mathcal{P}_k||_{\textup{TV}}
\end{align}
by proceeding inductively. So it suffices to find bounds on the total variation distance accumulated on each interval $[t_{k-1}, t_k]$.

Let $\mathcal{R}_k^\theta$ be the Markov kernel corresponding to running the chain from $t_k$ to $t_{k-1}$ with constant rate matrix $\hat R^\theta_{t_k}$. Since by Proposition \ref{prop:regularity_of_R} the reverse rate matrix $\hat R_t$ is bounded and continuous in $t$, using Proposition \ref{prop:single_step_tv} we made deduce that for any distribution $\nu$ on $S$ we have
\begin{align}
    ||\nu \mathcal{P}_k - \nu \mathcal{R}_k^\theta||_{\textup{TV}} &\leq \int_{t_{k-1}}^{t_k} \sup_{x \in S} \Big\{ \sum_{y \neq x} \big| \hat R_t(x,y) - \hat R^\theta_{t_k}(x,y) \big| \Big\} \; \mathrm{d} t \\
    &\leq \int_{t_{k-1}}^{t_k} \sup_{x \in S} \Big\{ \sum_{y \neq x} \big| \hat R_t(x,y) - \hat R_{t_k}(x,y) \big| \Big\} \; \mathrm{d} t  \\
    &\hspace{5mm} + \int_{t_{k-1}}^{t_k} \sup_{x \in S} \Big\{ \sum_{y \neq x} \big| \hat R_{t_k}(x,y) - \hat R^\theta_{t_k}(x,y) \big| \Big\} \; \mathrm{d} t
\end{align}
The first half of this expression can be bounded using the Mean Value Theorem, according to
\begin{align}
    \int_{t_{k-1}}^{t_k} \sup_{x \in S} \Big\{ \sum_{y \neq x} \big| \hat R_t(x,y) - \hat R_{t_k}(x,y) \big| \Big\} \; \mathrm{d} t &\leq \int_{t_{k-1}}^{t_k} |t - t_k| \cdot 2|R|^2S^2D^2C_1^2 \; \mathrm{d} t \\
    &\leq \big(|R|SDC_1\tau_k\big)^2
\end{align}
where in the first line we have used that the summand is only non-zero when $y \sim x$, and there are at most $|S||D|$ values of $y$ for which this holds. The second term can be bounded using condition (\ref{eq:error_bound_condition}), to get 
\begin{equation}
    \int_{t_{k-1}}^{t_k} \sup_{x \in S} \Big\{ \sum_{y \neq x} \big| \hat R_{t_k}(x,y) - \hat R^\theta_{t_k}(x,y) \big| \Big\} \; \mathrm{d} t \leq M \tau_k
\end{equation}
Combining these two expressions, we get a bound on $||\nu \mathcal{P}_k - \nu \mathcal{R}_k^\theta||_{\textup{TV}}$.
\begin{equation}
    ||\nu \mathcal{P}_k - \nu \mathcal{R}_k^\theta||_{\textup{TV}} \leq \big(|R|SDC_1\tau_k\big)^2 + M\tau_k
\end{equation}

It remains to bound $||\nu \mathcal{R}^{\theta}_k - \nu \mathcal{R}^{\theta, (\tau)}_k||_{\textup{TV}}$. Note that performing tau-leaping with rate matrix $\hat R_{t_k}^\theta$ starting in $x_{t_k}$ is equivalent to running a continuous time Markov chain from time $t_k$ to $t_{k-1}$ with constant rate matrix $\hat R^{\theta, (\tau)}_{t_k}$ given by
\begin{equation}
    \hat R^{\theta, (\tau)}_{t_k}(x,y) = \hat R^\theta_{t_k}(x_{t_k}, y - x + x_{t_k})
\end{equation}
(followed potentially by a clamping operation to keep us within $\mathcal{X}^D$). By an analogous argument to the proof of Proposition \ref{prop:single_step_tv},
\begin{equation}
    ||\delta_{x_{t_k}} \mathcal{R}^{\theta}_k - \delta_{x_{t_k}} \mathcal{R}^{\theta, (\tau)}_k||_{\textup{TV}} \leq \int_{t_{k-1}}^{t_k} \E \Big[ \sum_{y \neq x_t} |\hat R_{t_k}^\theta(x_t, y) - \hat R^\theta_{t_k}(x_{t_k}, y - x_t + x_{t_k})| \Big] \; \mathrm{d} t 
\end{equation}
where the expectation is taken over $(x_t)_{t \in [t_{k-1},t_k]}$ distributed according to the exact CTMC with rate matrix $\hat R_{t_k}^\theta$. (Note we have disregarded the clamping operation, since this can only decrease the resulting total variation distance.)

We may rewrite this bound in terms of the exact reverse process using condition (\ref{eq:error_bound_condition}) to get
\begin{equation}
    ||\delta_{x_{t_k}} \mathcal{R}^{\theta}_k - \delta_{x_{t_k}} \mathcal{R}^{\theta, (\tau)}_k||_{\textup{TV}} \leq \int_{t_{k-1}}^{t_k} \E \Big[ 2M + \sum_{y \neq x_t} |\hat R_{t_k}(x_t, y) - \hat R_{t_k}(x_{t_k}, y - x_t + x_{t_k})| \Big] \; \mathrm{d} t 
\end{equation}
Let $J_t$ be the number of jumps that $(x_t)$ makes between $t_k$ and $t$, and label the times of these jumps as $s_1, \dots, s_j$ where $t \leq s_1 \leq \dots \leq s_j \leq t_k$ and $j = J_t$ for convenience. Then by Assumption \ref{ass:bounded_rate_change}, we have
\begin{align}
    \sum_{y \neq x_t} |\hat R_{t_k}(x_t, y) - \hat R_{t_k}(x_{t_k}, y - x_t + x_{t_k})| &\leq \sum_{z} |\hat R_{t_k}(x_t, x_t + z) - \hat R_{t_k}(x_{s_1}, x_{s_1} + z)| + \dots \\
    & \hspace{5mm} + \sum_{z} |\hat R_{t_k}(x_{s_j}, x_{s_j} + z) - \hat R_{t_k}(x_{t_k}, x_{t_k} + z)| \\
    & \leq C_2 J_t
\end{align}
where we have made the substitution $z = y - x_{t_k}$. We conclude that
\begin{align}
    ||\delta_{x_{t_k}} \mathcal{R}^{\theta}_k - \delta_{x_{t_k}} \mathcal{R}^{\theta, (\tau)}_k||_{\textup{TV}} &\leq \int_{t_{k-1}}^{t_k} \E \left[ 2M + C_2 J_t \right] \; \mathrm{d} t \\
    &\leq 2M |t_k - t_{k-1}| + C_2 \int_{t_{k-1}}^{t_k} |t_k - t| \cdot \sup_x |\hat R^\theta_{t_k}(x,x)| \; \mathrm{d} t \\
    &\leq 2M\gamma_k + \frac{1}{2}C_2|\hat R^\theta_{t_k}|\tau_k^2 \\
    &\leq 2 M \tau_k + \frac{1}{2} C_2(M + C_1 SD|R|) \tau_k^2
\end{align}
where to bound $\mathbb{E}[J_t]$ we have observed that jumps of $(x_t)$ occur at a rate bounded above by $\sup_x |\hat R^\theta_{t_k}(x,x)|$, and in the last line we have used the condition (\ref{eq:error_bound_condition}) and Assumption \ref{ass:bounded_ratios}. Since the above holds for any choice of $x_{t_k}$, it follows that
\begin{equation}
    \sup_{\nu} ||\nu \mathcal{R}^{\theta}_k - \nu \mathcal{R}^{\theta, (\tau)}_k||_{\textup{TV}} \leq 2 M \tau_k + \frac{1}{2} C_2(M + C_1 SD|R|) \tau_k^2
\end{equation}
Summing over $k$ and putting all our bounds together, we get
\begin{equation}
\textstyle
    ||\mathcal{L}(y_0) - \pdata||_{\textup{TV}} \leq 3MT + \left\{ \big(|R|SDC_1\big)^2 + \frac{1}{2} C_2(M + C_1 SD|R|) \right\} \tau T + 2 \exp \left \{ - \frac{T \log^2 2}{t_{\textup{mix}} \log 4D} \right \}
\end{equation}
as required.
\end{proof}

\section{Continuous Time ELBO Details} \label{sec:ApdxCTELBODetails}

\subsection{Comparison with the Discrete Time ELBO} \label{sec:ApdxCTDTELBOComparison}
It is easiest to gain intuition on the $\LCT$ objective by comparing it to its discrete time counterpart, $\LDT$, and examining the way in which $\LDT$ in the limit becomes $\LCT$ when we take the time step size to be very small. We repeat the definition of $\LCT$ here for convenience
\begin{equation}
   \textstyle \LCT(\theta) = T \, \mathbb{E}_{t\sim \mathcal{U}(0, T) q_t(x) r_t(\tilde{x}|x)} \Big[ \Big\{ \sum_{x' \neq x} \hat{R}_t^\theta(x, x') \Big\} - \mathcal{Z}^t(x) \log \left(\hat{R}_t^\theta(\tilde{x}, x) \right) \Big] + C.
\end{equation}
Recall that a single term from the KL sum in $\LDT$ up to an additive constant independent of $\theta$ is
\begin{equation}
  \textstyle -\mathbb{E}_{q_k(x_k) q_{k+1|k}(x_{k+1} | x_k)} \left[ \log p_{k|k+1}^\theta(x_k|x_{k+1})\right].
\end{equation}
Minimizing this term is to sample $(x_k , x_{k+1})$ from the forward dynamics and then maximize the assigned model probability for the pairing in the reverse direction. A similar idea can be used to understand $\LCT$. First, we write $\log p_{k|k+1}^\theta(x_k | x_{k+1})$ in terms of $\hat{R}_k^\theta$ as
\begin{align}
    \log p_{k|k+1}^\theta(x_k | x_{k+1}) =& \delta_{x_k, x_{k+1}} \left( \hat{R}_k^\theta(x_k, x_k) \dt + o(\dt)  \right)\\
    & \, + ( 1 - \delta_{x_k, x_{k+1}} ) \log \left( \hat{R}_k^\theta(x_{k+1}, x_k) \dt + o(\dt) \right)
\end{align}
where we have separated the cases when $x_k = x_{k+1}$ and when $x_k \neq x_{k+1}$ (see the proof of $\LCT$ for the full details). The first term will become the $\sum_{x' \neq x} \hat{R}_t^\theta(x, x')$ term in $\LCT$ whilst the second term will become the $\mathcal{Z}^t(x) \log \big( \hat{R}_t^\theta(\tilde{x}, x) \big)$ term. Now, when we minimize $\LCT$, we are sampling $(x, \tilde{x})$ from the forward process and then maximizing the assigned model probability for the pairing in the reverse direction, just as in $\LDT$. The slight extra complexity comes from the fact we are considering the case when $x_k = x_{k+1}$ and the case when $x_k \neq x_{k+1}$ separately. When $x_k = x_{k+1}$, this corresponds to the first term in $\LCT$ which we can see is minimizing the reverse rate out of $x$ which is exactly maximizing the model probability for no transition to occur. When $x_k \neq x_{k+1}$, this corresponds to the second term in $\LCT$, which is maximizing the reverse rate from $\tilde{x}$ to $x$ which in turn maximizes the model probability for the $\tilde{x}$ to $x$ transition to occur.

\subsection{Conditional Form} \label{sec:ApdxConditionalObjective}
For the conditional form of $\LCT$, denoted as $\bar{\mathcal{L}}_{\textup{CT}}$, we instead upper bound the negative conditional model log-likelihood, $\E_{\pdata(x_0, y)} [ - \log p_0^\theta(x_0 | y)]$ where $y$ is our conditioner. $\bar{\mathcal{L}}_{\textup{CT}}$ has the following form
\begin{equation}
   \textstyle \bar{\mathcal{L}}_{\textup{CT}}(\theta) = T \, \mathbb{E}_{t\sim \mathcal{U}(0, T) \pdata(x_0, y) q_{t|0}(x|x_0) r_t(\tilde{x}|x)} \Big[ \Big\{ \sum_{x' \neq x} \hat{R}_t^\theta(x, x' | y) \Big\} - \mathcal{Z}^t(x) \log \left(\hat{R}_t^\theta(\tilde{x}, x | y) \right) \Big] + C,
\end{equation}
where
\begin{align}
    \textstyle \hat{R}_t^\theta(x, \tilde{x}| y) =& R_t(\tilde{x}, x) \sum_{x_0} \frac{q_{t|0}(\tilde{x} | x_0)}{q_{t|0}(x | x_0)} p^\theta_{0|t}(x_0 | x, y) \quad \text{for} \quad x \neq \tilde{x}.\\
    =& -\sum_{x' \neq x} \hat{R}_t^\theta(x, x' | y) \quad \text{for} \quad x = \tilde{x}
\end{align}
This follows easily from considering the conditional form of the discret time ELBO, $\bar{\mathcal{L}}_{\textup{DT}}$ and using the same arguments as before to go from discrete time to continuous time.
\begin{equation}
     \E_{\pdata(x_0, y)} [-\log p_0^\theta(x_0 | y)] \leq \E_{\pdata(x_0, y) q_{1:K|0}(x_{1:K}|x_0)} \left[ - \log \frac{p_{0:K}^\theta(x_{0:K}|y)}{q_{1:K|0}(x_{1:K} | x_0)}\right] = \bar{\mathcal{L}}_{\textup{DT}}
\end{equation}

\subsection{Continuous Time ELBO with Factorization Assumptions} \label{sec:ApdxCTELBOFactorization}
\newcommand{\done}{\psi_t} 
\newcommand{\dtwo}{\phi_t} 
In the following Proposition, we show the form of $\LCT$ when we use a factorized forward process. We note that in the proof we rearrange the sampling distribution from $\pdata(\bm{x}_0^{1:D}) q_{t|0}(\bm{x}^{1:D} | \bm{x}_0^{1:D}) r_t(\tilde{\bm{x}}^{1:D} | \bm{x}^{1:D})$ to $\pdata(\bm{x}_0^{1:D}) \done(\tilde{\bm{x}}^{1:D} | \bm{x}_0^{1:D}) \dtwo(\bm{x}^{1:D} | \tilde{\bm{x}}^{1:D}, \bm{x}_0^{1:D})$. This is not strictly necessary but it allows us to analytically sum over the intermediate $\bm{x}^{1:D}$ variable which greatly reduces the variance of the resulting objective.

\begin{proposition} \label{prop:FactorizedELBO}
The $\LCT$ objective when we substitute in the factorized forms for the forward and reverse process given in Proposition \ref{prop:dimfactorize} is
\begin{align}
    \LCT &= T \, \E_{t\sim \mathcal{U}(0,T) \pdata(\bm{x}_0^{1:D}) q_{t|0}(\bm{x}^{1:D} | \bm{x}_0^{1:D})} \left[\sum_{d=1}^D \sum_{x'^d \neq x^d} \hat{R}_t^{\theta \, d}(\bm{x}^{1:D}, x'^d) \right]\\
    & - T \, \E_{t\sim \mathcal{U}(0, T) \pdata(\bm{x}_0^{1:D}) \done(\tilde{\bm{x}}^{1:D} | \bm{x}_0^{1:D})} \Bigg[ \sum_{d=1}^D \sum_{x^d \neq \tilde{x}^d} \dtwo(x^d | \tilde{\bm{x}}^{1:D}, \bm{x}_0^{1:D}) \mathcal{Z}^t(\tilde{\bm{x}}^{1:D/d} \circ x^d) \log \left( \hat{R}_t^{\theta \, d}(\tilde{\bm{x}}^{1:D}, x^d) \right) \Bigg]\\
    & + C
\end{align}
with
\begin{equation}
    \hat{R}_t^{\theta \, d} (\bm{x}^{1:D}, \tilde{x}^d) = R_t^d(\tilde{x}^d, x^d) \sum_{x_0^d} p_{0|t}^\theta (x_0^d | \bm{x}^{1:D}) \frac{q_{t|0}(\tilde{x}^d | x_0^d)}{q_{t|0}(x^d | x_0^d)}
\end{equation}
\begin{equation}
    \mathcal{Z}^t(\bm{x}^{1:D}) = \sum_{d=1}^D \sum_{\tilde{x}^d \neq x^d} R_t^d(x^d, \tilde{x}^d)
\end{equation}
\begin{equation}
    \dtwo(x^d | \tilde{\bm{x}}^{1:D}, \bm{x}_0^{1:D}) = \frac{R_t^d(x^d, \tilde{x}^d)  q_{t|0}(\tilde{\bm{x}}^{\Dd} \circ x^d | \bm{x}_0^{1:D})}{\mathcal{Z}^t(\tilde{\bm{x}}^{\Dd} \circ x^d  )   \sum_{d'=1}^D \sum_{x'^{d'} \neq \tilde{x}^{d'}} \frac{R_t^{d'}(x'^{d'}, \tilde{x}^{d'})}{\mathcal{Z}^t(\tilde{\bm{x}}^{1:D \backslash {d'}} \circ x'^{d'}  )} q_{t|0}(\tilde{\bm{x}}^{1:D \backslash d'} \circ x'^{d'} | \bm{x}_0^{1:D})}
\end{equation}
where $\circ$ represents the concatenation of a $D-1$ dimensional vector, $\bm{x}^{\Dd}$ with a scalar $x^d$, such that the resultant $D$ dimensional vector has $x^d$ at its $d^{\textup{th}}$ dimension.
$\done(\tilde{\bm{x}}^{1:D} | \bm{x}_0^{1:D})$ is defined as the marginal of the forward noising process joint, $\int q_{t|0}(\bm{x}^{1:D} | \bm{x}_0^{1:D}) r_t(\tilde{\bm{x}}^{1:D} | \bm{x}^{1:D}) d \bm{x}^{1:D}$.
\end{proposition}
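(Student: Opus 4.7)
The plan is to derive the factorized form of $\LCT$ by directly substituting the factorized rates of Proposition \ref{prop:dimfactorize} into the generic $\LCT$ of Proposition \ref{prop:CTELBO}, and then rearranging the sampling distribution so that analytic summation can be used to reduce variance. The crucial structural fact I will use repeatedly is that both $R_t^{1:D}(\bm{x}^{1:D}, \bm{x}'^{1:D})$ and $\hat{R}_t^{\theta \, 1:D}(\bm{x}^{1:D}, \bm{x}'^{1:D})$ vanish unless $\bm{x}'^{1:D}$ and $\bm{x}^{1:D}$ differ in exactly one coordinate. Consequently, every sum of the form $\sum_{\bm{x}'^{1:D} \neq \bm{x}^{1:D}}$ collapses to a double sum $\sum_{d=1}^{D}\sum_{x'^{d}\neq x^{d}}$ with $\bm{x}'^{1:D} = \bm{x}^{\Dd} \circ x'^{d}$; in particular $\mathcal{Z}^t(\bm{x}^{1:D})$ from Proposition \ref{prop:CTELBO} immediately rewrites as the stated expression.

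For the first (rate-sum) term of $\LCT$, I would simply plug in the factorized $\hat{R}_t^{\theta \, 1:D}$, apply the collapse above so that the inner sum becomes $\sum_{d=1}^D \sum_{x'^d\neq x^d}\hat{R}_t^{\theta \, d}(\bm{x}^{1:D},x'^d)$, and then expand $q_t(\bm{x}^{1:D}) = \mathbb{E}_{\pdata(\bm{x}_0^{1:D})}[q_{t|0}(\bm{x}^{1:D}\mid \bm{x}_0^{1:D})]$ to match the stated form verbatim.

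For the second (log) term, the main piece of work is a change of variables in the sampling joint. Writing
\[
\pdata(\bm{x}_0^{1:D})\,q_{t|0}(\bm{x}^{1:D}\mid \bm{x}_0^{1:D})\,r_t(\tilde{\bm{x}}^{1:D}\mid \bm{x}^{1:D}) = \pdata(\bm{x}_0^{1:D})\,\done(\tilde{\bm{x}}^{1:D}\mid \bm{x}_0^{1:D})\,\dtwo(\bm{x}^{1:D}\mid \tilde{\bm{x}}^{1:D},\bm{x}_0^{1:D}),
\]
I would identify $\done$ by marginalizing out $\bm{x}^{1:D}$, using that $r_t$ supports only single-coordinate changes, to get
\[
\done(\tilde{\bm{x}}^{1:D}\mid \bm{x}_0^{1:D}) = \sum_{d=1}^D\sum_{x^d\neq \tilde{x}^d}\frac{R_t^d(x^d,\tilde{x}^d)}{\mathcal{Z}^t(\tilde{\bm{x}}^{\Dd}\circ x^d)}\,q_{t|0}(\tilde{\bm{x}}^{\Dd}\circ x^d\mid \bm{x}_0^{1:D}),
\]
and then $\dtwo$ follows immediately by Bayes' rule, giving the stated quotient formula. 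Since $\dtwo(\cdot \mid \tilde{\bm{x}}^{1:D},\bm{x}_0^{1:D})$ is supported on the $D(S-1)$ atoms of the form $\tilde{\bm{x}}^{\Dd}\circ x^d$, I can then replace the expectation over $\dtwo$ with an explicit sum, substitute $\hat{R}_t^{\theta \, 1:D}(\tilde{\bm{x}}^{1:D},\tilde{\bm{x}}^{\Dd}\circ x^d) = \hat{R}_t^{\theta \, d}(\tilde{\bm{x}}^{1:D},x^d)$ from Proposition \ref{prop:dimfactorize}, and write $\mathcal{Z}^t(\tilde{\bm{x}}^{\Dd}\circ x^d)$ in place of $\mathcal{Z}^t(\bm{x}^{1:D})$ at each atom. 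Summed with the first term, this gives exactly the expression in Proposition \ref{prop:FactorizedELBO}.

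The main obstacle is purely bookkeeping: tracking which coordinate has changed under the concatenation notation $\circ$, and verifying that the Bayes rearrangement is well-defined, i.e.\ that $\done(\tilde{\bm{x}}^{1:D}\mid \bm{x}_0^{1:D})$ is strictly positive on its support (which holds because $R_t$ is irreducible and $q_{t|0}(\cdot\mid \bm{x}_0^{1:D}) > 0$ for $t>0$, so the denominator in $\dtwo$ never vanishes). The rearrangement is not needed for correctness of $\LCT$, but, as the proposition notes, it allows one to sum out $\bm{x}^{1:D}$ analytically via $\dtwo$, thereby reducing the variance of the Monte Carlo estimator used during training.
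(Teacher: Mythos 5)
Your proposal is correct and follows essentially the same route as the paper's proof: collapse the sums using the single-coordinate-change support of the factorized rates, and re-factor the sampling joint as $\pdata \cdot \done \cdot \dtwo$ (your marginalization to get $\done$ followed by Bayes is the same computation as the paper's normalization of $\dtwo \propto q_{t|0}\, r_t$, since that normalizing constant is exactly $\done$). The positivity remark you add to justify the Bayes rearrangement is a sensible, if minor, supplement.
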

\begin{proof}
We first re-write the general form of $\LCT$ here
\begin{equation}
   \textstyle \LCT(\theta) = T \, \mathbb{E}_{t\sim \mathcal{U}(0, T) q_t(x) r_t(\tilde{x}|x)} \Big[ \Big\{ \sum_{x' \neq x} \hat{R}_t^\theta(x, x') \Big\} - \mathcal{Z}^t(x) \log \left(\hat{R}_t^\theta(\tilde{x}, x) \right) \Big] + C
\end{equation}
where
\begin{equation}
    \textstyle \mathcal{Z}^t(x) = \sum_{x' \neq x} R_t(x, x') \hspace{2cm} r_t(\tilde{x}|x) = (1 - \delta_{\tilde{x}, x} ) R_t(x, \tilde{x})/\mathcal{Z}^t(x).
\end{equation}
With a factorized forward process, $\hat{R}_t^\theta$ becomes
\begin{equation}
    \hat{R}_t^{\theta \, 1:D}(\bm{x}^{1:D}, \tilde{\bm{x}}^{1:D}) = \sum_{d=1}^D \hat{R}_t^{\theta \, d}(\bm{x}^{1:D}, \tilde{x}^d) \delta_{\bm{x}^{\Dd}, \tilde{\bm{x}}^{\Dd}}
\end{equation}
where
\begin{equation}
    \hat{R}_t^{\theta \, d} (\bm{x}^{1:D}, \tilde{x}^d) = R_t^d(\tilde{x}^d, x^d) \sum_{x_0^d} p_{0|t}^\theta (x_0^d | \bm{x}^{1:D}) \frac{q_{t|0}(\tilde{x}^d | x_0^d)}{q_{t|0}(x^d | x_0^d)}
\end{equation}
Substituting this form for $\hat{R}_t^{\theta \, 1:D}$ into the first term in $\LCT$ we get
\begin{align}
    &\sum_{\bm{x}'^{1:D} \neq \bm{x}^{1:D}} \sum_{d=1}^D \hat{R}_t^{\theta \, d}(\bm{x}^{1:D}, x'^d) \delta_{\bm{x}^{\Dd}, \bm{x}'^{\Dd} }\\
    &= \sum_{d=1}^D \sum_{x'^d} \hat{R}_t^{\theta \, d}(\bm{x}^{1:D}, x'^d) \sum_{\bm{x}'^{\Dd}} \delta_{\bm{x}^{\Dd}, \bm{x}'^{\Dd}} ( 1 - \delta_{\bm{x}'^{1:D}, \bm{x}^{1:D}})\\
    &= \sum_{d=1}^D \sum_{x'^d \neq x^d} \hat{R}_t^{\theta \, d}(\bm{x}^{1:D}, x'^d)
\end{align}

Now we tackle the second term in $\LCT$. We first re-arrange the distribution over which we take the expectation:
\begin{equation}
    \pdata(\bm{x}_0^{1:D}) q_{t|0}(\bm{x}^{1:D} | \bm{x}_0^{1:D}) r_t(\tilde{\bm{x}}^{1:D} | \bm{x}^{1:D}) = \pdata(\bm{x}_0^{1:D}) \done(\tilde{\bm{x}}^{1:D} | \bm{x}_0^{1:D}) \dtwo(\bm{x}^{1:D} | \tilde{\bm{x}}^{1:D}, \bm{x}_0^{1:D}) 
\end{equation}
We have,
\begin{align}
    \dtwo(\bm{x}^{1:D} | \tilde{\bm{x}}^{1:D}, \bm{x}_0^{1:D}) &\propto q_{t|0}(\bm{x}^{1:D} | \bm{x}_0^{1:D}) r_t(\tilde{\bm{x}}^{1:D} | \bm{x}^{1:D})\\
    & = q_{t|0}(\bm{x}^{1:D} | \bm{x}_0^{1:D}) ( 1- \delta_{\tilde{\bm{x}}^{1:D}, \bm{x}^{1:D}}) \frac{\sum_{d=1}^D R_t^d(x^d, \tilde{x}^d) \delta_{\bm{x}^{\Dd}, \tilde{\bm{x}}^{\Dd}}}{\mathcal{Z}^t(\bm{x}^{1:D})}\\
    & = \sum_{d=1}^D \frac{R_t^d(x^d, \tilde{x}^d)}{\mathcal{Z}^t(\tilde{\bm{x}}^{\Dd} \circ x^d  )} q_{t|0}(\tilde{\bm{x}}^{\Dd} \circ x^d | \bm{x}_0^{1:D})\delta_{\bm{x}^{\Dd}, \tilde{\bm{x}}^{\Dd}} ( 1- \delta_{\tilde{\bm{x}}^{1:D}, \bm{x}^{1:D}})  \\
\end{align}
To find the normalization constant, we can sum the proportional term over $\bm{x}^{1:D}$
\begin{align}
    &\sum_{\bm{x}^{1:D}} \sum_{d=1}^D \frac{R_t^d(x^d, \tilde{x}^d)}{\mathcal{Z}^t(\tilde{\bm{x}}^{\Dd} \circ x^d  )} q_{t|0}(\tilde{\bm{x}}^{\Dd} \circ x^d | \bm{x}_0^{1:D})  \delta_{\bm{x}^{\Dd}, \tilde{\bm{x}}^{\Dd}} ( 1- \delta_{\tilde{\bm{x}}^{1:D}, \bm{x}^{1:D}})\\
    & \quad = \sum_{d=1}^D \sum_{x^d \neq \tilde{x}^d} \frac{R_t^d(x^d, \tilde{x}^d)}{\mathcal{Z}^t(\tilde{\bm{x}}^{\Dd} \circ x^d  )} q_{t|0}(\tilde{\bm{x}}^{\Dd} \circ x^d | \bm{x}_0^{1:D})
\end{align}
Therefore,
\begin{equation}
    \dtwo(\bm{x}^{1:D} | \tilde{\bm{x}}^{1:D}, \bm{x}_0^{1:D}) = (1-\delta_{\tilde{\bm{x}}^{1:D}, \bm{x}^{1:D}}) \sum_{d=1}^D \dtwo(x^d | \tilde{\bm{x}}^{1:D}, \bm{x}_0^{1:D}) \delta_{\bm{x}^{\Dd}, \tilde{\bm{x}}^{\Dd}}
\end{equation}
where
\begin{equation}
    \dtwo(x^d | \tilde{\bm{x}}^{1:D}, \bm{x}_0^{1:D}) = \frac{R_t^d(x^d, \tilde{x}^d)  q_{t|0}(\tilde{\bm{x}}^{\Dd} \circ x^d | \bm{x}_0^{1:D})}{\mathcal{Z}^t(\tilde{\bm{x}}^{\Dd} \circ x^d  )   \sum_{d'=1}^D \sum_{x'^{d'} \neq \tilde{x}^{d'}} \frac{R_t^{d'}(x'^{d'}, \tilde{x}^{d'})}{\mathcal{Z}^t(\tilde{\bm{x}}^{1:D \backslash {d'}} \circ x'^{d'}  )} q_{t|0}(\tilde{\bm{x}}^{1:D \backslash d'} \circ x'^{d'} | \bm{x}_0^{1:D})}
\end{equation}
Now we write the second term as
\begin{align}
    &T \, \E_{t\sim \mathcal{U}(0, T) \pdata(\bm{x}_0^{1:D}) \done(\tilde{\bm{x}}^{1:D} | \bm{x}_0^{1:D})} \left[ - \sum_{\bm{x}^{1:D}} \dtwo(\bm{x}^{1:D} | \tilde{\bm{x}}^{1:D}, \bm{x}_0^{1:D}) \mathcal{Z}^t(\bm{x}^{1:D}) \log \hat{R}_t^{\theta \, 1:D}(\tilde{\bm{x}}^{1:D}, \bm{x}^{1:D}) \right]\\
   &= - T \, \E_{t\sim \mathcal{U}(0, T) \pdata(\bm{x}_0^{1:D}) \done(\tilde{\bm{x}}^{1:D} | \bm{x}_0^{1:D})} \Bigg[ \sum_{d=1}^D \sum_{x^d \neq \tilde{x}^d} \dtwo(x^d | \tilde{\bm{x}}^{1:D}, \bm{x}_0^{1:D}) \mathcal{Z}^t(\tilde{\bm{x}}^{1:D/d} \circ x^d) \log \left( \hat{R}_t^{\theta \, d}(\tilde{\bm{x}}^{1:D}, x^d) \right) \Bigg]
\end{align}
\end{proof}

\subsection{One Forward Pass} \label{sec:ApdxCTELBOOneForwardPass}

To evaluate the $\LCT$ objective, we naively need to perform two forward passes of the denoising network: $p_{0|t}^\theta(x_0 | x)$ to calculate $\hat{R}_t^\theta(x, x')$ and $p_{0|t}^\theta(x_0 | \tilde{x})$ to calculate $\hat{R}_t^\theta(\tilde{x}, x)$. This is wasteful because $\tilde{x}$ is created from $x$ by applying a single forward transition which on multi-dimensional problems means $\tilde{x}$ differs from $x$ in only a single dimension. To exploit the fact that $\tilde{x}$ and $x$ are very similar, we approximate the sample $x \sim q_t(x)$ with the sample $\tilde{x} \sim \sum_{x} q_t(x) r_t(\tilde{x} | x)$. This gives the more efficient objective,
\begin{equation}
\textstyle
    \LeCT(\theta) = T \, \mathbb{E}_{t \sim \mathcal{U}(0, T) q_t(x) r_t(\tilde{x} | x)} \left[ \left\{ \sum_{x' \neq \tilde{x}} \hat{R}_t^\theta(\tilde{x}, x') \right\} - \mathcal{Z}^t(x) \log \left( \hat{R}_t^\theta(\tilde{x}, x) \right) \right] + C
\end{equation}
The approximation is valid because $q_t(x)$ and $\sum_x q_t(x) r_t(\tilde{x} | x)$ are very similar distributions, as we now show.
\begin{align}
    \sum_x q_t(x) r_t(\tilde{x} | x) &= \sum_x q_t(x) (1 - \delta_{x, \tilde{x}}) \frac{R_t(x, \tilde{x})}{\sum_{x' \neq x} R_t(x, x')}\\
    & \propto - q_t(\tilde{x}) R_t(\tilde{x}, \tilde{x}) + \sum_x q_t(x) R_t(x, \tilde{x}) \\
    & = q_t(\tilde{x}) \sum_{x' \neq \tilde{x}} R_t(\tilde{x}, x') + \partial_t q_t(\tilde{x}) \\
    & \propto q_t(\tilde{x}) + \frac{1}{\sum_{x' \neq x} R_t(\tilde{x}, x')} \partial_t q_t(\tilde{x})\\
    & = q_t(\tilde{x}) + \delta t \, \partial_t q_t(\tilde{x})
\end{align}
where on the third line we have used the Kolmogorov forward equation and defined $\delta_t = 1 / \sum_{x' \neq x} R_t(\tilde{x}, x')$. The distribution $\sum_x q_t(x) r_t(\tilde{x} | x)$ is therefore $q_{t+\delta t}(\tilde{x})$ approximated using a first-order Taylor expansion around $q_t(\tilde{x})$. We notice that $\delta t$ is the average time to the next transition at time $t$. $\delta t$ can be calculated for the practical settings we consider, its varies between $2 \times 10^{-6} T$ and $2 \times 10^{-8} T$ in the image modelling task and is $1 \times 10^{-3} T$ in the monophonic music task.\\

We perform an ablation experiment comparing between training with $\LeCT$ and $\LCT$ on the monophonic music dataset, the results are shown in Table \ref{tab:apdx_two_forward_pass_ablation}. We find that we gain a small boost in performance when using the more efficient $\LeCT$ objective alongside the improved efficiency. We hypothesize that this is because of a slight reduction in variance for the $\LeCT$ objective due to increased negative correlation between the two terms in the objective when $\tilde{x}$ is shared between them.

\begin{table}
  \caption{Metrics on the monophonic music dataset comparing training with the efficient $\LeCT$ objective vs the original $\LCT$ objective. We compute these over the test set showing mean$\pm$std with respect to 5 samples for each test song.}
  \label{tab:apdx_two_forward_pass_ablation}
  \centering
  \begin{tabular}{lll}
    \toprule
    Model  & Hellinger Distance & Proportion of Outliers \\
    \midrule
    $\tau$LDR-0 Uniform $\LeCT$ & $0.3765 \pm 0.0013$ & $0.1106 \pm 0.0010$ \\
    $\tau$LDR-0 Uniform $\LCT$ & $0.3797 \pm 0.0009$ & $0.1128 \pm 0.0007$\\
    \bottomrule
  \end{tabular}
\end{table}

\section{Direct Denoising Model Supervision} \label{sec:ApdxDirectDenoisingSupervision}
Following \cite{austin2021structured}, we can introduce direct $p_{0|t}^\theta$ supervision into the optimization objective which has been found empirically to improve performance. We first contextualize the change by expressing $\LCT$ with the dependence on $p_{0|t}^\theta$ made explicit.
\begin{align}
    \LCT = T \, \E_{t \sim \mathcal{U}(0,T) q_t(x) r_t(\tilde{x}|x)} \Big[ &\Big\{ \sum_{x' \neq x} R_t(x', x) \sum_{x_0} \frac{q_{t|0}(x' | x_0)}{q_{t|0}(x | x_0)} p_{0|t}^\theta(x_0 | x) \Big\}\\
    & - \mathcal{Z}^t(x) \log \Big( R_t(x, \tilde{x}) \sum_{x_0} \frac{q_{t|0}(x | x_0)}{q_{t|0}(\tilde{x} | x_0)} p_{0|t}^\theta (x_0 | \tilde{x}) \Big) \Big] + C
\end{align}
The signal for $p_{0|t}^\theta(x_0 | x)$ comes through a sum over $x_0$ weighted by the ratio $\frac{q_{t|0}(x | x_0)}{q_{t|0}(\tilde{x} | x_0)}$. We can also provide a direct denoising signal by predicting the clean datapoint $x_0$ from the corrupted version $x$ and using the negative log-likelihood loss. 
\begin{equation}
   \textstyle L_{ll}(\theta) = T \, \E_{t \sim \mathcal{U}(0, T) \pdata(x_0) q_{t|0}(x | x_0) }  \left[ - \log p_{0|t}^\theta(x_0 | x) \right]
\end{equation}
\begin{proposition}
The true denoising distribution, $q_{0|t}$, minimizes $L_{ll}$
\end{proposition}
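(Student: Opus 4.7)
The plan is to recast $L_{ll}$ as a cross-entropy between $q_{0|t}$ and $p_{0|t}^\theta$ plus terms independent of $\theta$, and then invoke the non-negativity of the Kullback--Leibler divergence. First I would use Bayes' rule on the forward joint, writing $\pdata(x_0) q_{t|0}(x \mid x_0) = q_t(x) q_{0|t}(x_0 \mid x)$, so that
\begin{equation}
L_{ll}(\theta) = T \int_0^T \E_{q_t(x)} \Big[ \E_{q_{0|t}(x_0 \mid x)} \big[ -\log p_{0|t}^\theta(x_0 \mid x) \big] \Big] \, \rd t .
\end{equation}

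Next I would add and subtract the (parameter-independent) conditional entropy $\E_{q_{0|t}(x_0 \mid x)}[-\log q_{0|t}(x_0 \mid x)]$ inside the expectation to identify the integrand as a KL divergence plus a constant:
\begin{equation}
L_{ll}(\theta) = T \int_0^T \E_{q_t(x)} \big[ \KL\!\left( q_{0|t}(\cdot \mid x) \,\Vert\, p_{0|t}^\theta(\cdot \mid x) \right) \big] \, \rd t + C',
\end{equation}
where $C' = -T \int_0^T \E_{q_t(x)} \E_{q_{0|t}(x_0 \mid x)}[\log q_{0|t}(x_0 \mid x)] \, \rd t$ does not depend on $\theta$. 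Since the KL divergence is non-negative and equals zero precisely when the two distributions coincide, taking $p_{0|t}^\theta(\cdot \mid x) = q_{0|t}(\cdot \mid x)$ for (almost) every $(t,x)$ attains the pointwise minimum of the integrand and hence the minimum of $L_{ll}$.

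There is really no substantive obstacle here; the only item requiring mild care is the measure-theoretic caveat that the minimizer is unique only up to sets of $q_t$-measure zero (and only on the support where $q_t(x)>0$), but since we assume an irreducible forward chain so that $q_t>0$ for $t>0$, the characterization $q_{0|t}$ is genuinely the minimizer over parametric families rich enough to represent it.
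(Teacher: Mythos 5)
Your proposal is correct and follows essentially the same route as the paper: both rewrite $L_{ll}$ as the expected KL divergence $\E_{t, q_t(x)}[\KL(q_{0|t}(\cdot\mid x)\,\|\,p_{0|t}^\theta(\cdot\mid x))]$ plus a $\theta$-independent constant (via Bayes' rule and the conditional entropy) and conclude by non-negativity of KL. The only blemish is a harmless notational slip where you write $T\int_0^T(\cdot)\,\rd t$ instead of $T\,\E_{t\sim\mathcal{U}(0,T)}[\cdot]=\int_0^T(\cdot)\,\rd t$, which rescales the objective by a constant and does not affect the minimizer.
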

\begin{proof}
\begin{talign}
    & T \, \E_{t \sim \mathcal{U}(0, T) q_t(x)} \left[ \KL\left( q_{0|t}(x_0 | x) \, || \, p_{0|t}^\theta(x_0 | x) \right)\right]\\
    & = T \, \E_{t \sim \mathcal{U}(0, T) \pdata(x_0) q_{t|0}(x | x_0)} \left[ - \log p_{0|t}^\theta(x_0 | x) \right] + C
\end{talign}
where $C$ is a constant independent of $\theta$. Therefore, minimizing $L_{ll}$ is equivalent to minimizing the KL divergence between $q_{0|t}$ and $p_{0|t}^\theta$, which is minimized when $p_{0|t}^\theta = q_{0|t}$.
\end{proof}
If we obtain the true denoising distribution, $p_{0|t}^\theta = q_{0|t}$, then we will have the true reverse rate, $\hat{R}_t$. \cite{austin2021structured} find that optimizing with an objective combining $L_{ll}$ and $\LDT$ performs best, which we can also do in continuous time
\begin{equation}
    \underset{\theta}{\text{min}} \quad \LCT(\theta) + \lambda L_{ll}(\theta)
\end{equation}
where $\lambda$ is a hyperparameter. In \cite{austin2021structured}, it was found that training with $L_{ll}$ alone resulted in poorer performance than when the ELBO was included in the loss. We provide a theoretical hypothesis as to why this may be the case here. We show that minimizing $L_{ll}$ is equivalent to minimizing an upper bound on the negative ELBO in discrete time and thus by training with $L_{ll}$ we are simply minimizing a looser bound on the negative model log-likelihood than if we were to use the negative ELBO directly.

\begin{proposition}
Minimizing the sum of negative log-likelihoods
\begin{equation}
 \sum_{k=0}^{K-1} \E_{\pdata(x_0) q_{k+1|0}(x_{k+1} | x_0)} \left[ - \log p_{0|k+1}^\theta(x_0 | x_{k+1}) \right]
\end{equation}
is equivalent to minimizing an upper bound on the negative ELBO.
\end{proposition}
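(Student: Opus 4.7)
The plan is to derive, for each discrete time index $k$, a pointwise inequality of the form
\begin{equation}
\KL\!\left(q_{k|k+1,0}(\cdot | x_{k+1}, x_0) \,\|\, p^\theta_{k|k+1}(\cdot | x_{k+1})\right) \leq -\log p^\theta_{0|k+1}(x_0 | x_{k+1}),
\end{equation}
and then sum these up, together with the $k=0$ term that already appears in $\LDT$ in log-likelihood form, to obtain an upper bound on $\LDT$ (the negative ELBO) whose $\theta$-dependent part is exactly the quantity we are told to minimize. The $\KL(q_{K|0}\|\pref)$ piece of $\LDT$ is $\theta$-independent and just sits as a constant on the right-hand side.

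The key step is establishing the pointwise inequality above; my approach is to use the data processing inequality. For a fixed $x_{k+1}$ and clean datapoint $x_0$, define two joint distributions over $(X_0', X_k)$:
\begin{align}
P(x_0', x_k \mid x_{k+1}) &= \delta_{x_0', x_0}\, q_{k|k+1,0}(x_k \mid x_{k+1}, x_0), \\
Q(x_0', x_k \mid x_{k+1}) &= p^\theta_{0|k+1}(x_0' \mid x_{k+1})\, q_{k|k+1,0}(x_k \mid x_{k+1}, x_0').
\end{align}
Their $X_k$-marginals are $q_{k|k+1,0}(\cdot \mid x_{k+1}, x_0)$ and $p^\theta_{k|k+1}(\cdot \mid x_{k+1})$ respectively, using the defining identity $p^\theta_{k|k+1}(x_k \mid x_{k+1}) = \sum_{x_0'} q_{k|k+1,0}(x_k \mid x_{k+1}, x_0') p^\theta_{0|k+1}(x_0' \mid x_{k+1})$ from equation~(\ref{eq:discreteTimeParametricKernel}). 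By the chain rule for KL and non-negativity of the conditional KL term, $\KL(P_{X_k} \,\|\, Q_{X_k}) \leq \KL(P \,\|\, Q)$; and the conditional term $\KL(P(X_k \mid X_0', x_{k+1}) \,\|\, Q(X_k \mid X_0', x_{k+1}))$ vanishes identically since both conditionals equal $q_{k|k+1,0}(\cdot \mid x_{k+1}, x_0')$. Thus the full joint KL reduces to $\KL(\delta_{x_0} \,\|\, p^\theta_{0|k+1}(\cdot \mid x_{k+1})) = -\log p^\theta_{0|k+1}(x_0 \mid x_{k+1})$, giving the desired bound.

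With the pointwise inequality in hand, taking expectations under $\pdata(x_0) q_{k+1|0}(x_{k+1}|x_0)$ and summing over $k=1,\dots,K-1$ upper bounds the KL sum in $\LDT$ by $\sum_{k=1}^{K-1} \E[-\log p^\theta_{0|k+1}(x_0 \mid x_{k+1})]$; combining with the explicit $-\E_{q_{1|0}}[\log p^\theta_{0|1}(x_0 \mid x_1)]$ term already present in $\LDT$ absorbs the $k=0$ summand, and the $\KL(q_{K|0}\|\pref)$ term is independent of $\theta$. The conclusion is that
\begin{equation}
\LDT(\theta) \leq \sum_{k=0}^{K-1} \E_{\pdata(x_0) q_{k+1|0}(x_{k+1}|x_0)}\!\left[-\log p^\theta_{0|k+1}(x_0 \mid x_{k+1})\right] + C,
\end{equation}
with $C$ independent of $\theta$, so minimizing the negative log-likelihood sum minimizes an upper bound on the negative ELBO. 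I do not anticipate a real obstacle: the only subtle point is recognizing that $p^\theta_{k|k+1}$ is obtained from $p^\theta_{0|k+1}$ by applying the \emph{same} Markov kernel $q_{k|k+1,0}$ that gives $q_{k|k+1,0}(\cdot \mid x_{k+1}, x_0)$ from $\delta_{x_0}$, which is exactly what makes the data processing inequality applicable here.
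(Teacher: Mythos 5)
Your proof is correct, and it takes a genuinely different route from the paper's. The paper rewrites $q_{k|k+1,0}(x_k\mid \tilde x_0,x_{k+1})$ via Bayes' rule as $q_{0|k}(\tilde x_0\mid x_k)\,q_{k|k+1}(x_k\mid x_{k+1})/q_{0|k+1}(\tilde x_0\mid x_{k+1})$ and then applies Jensen's inequality to $-\log\sum_{\tilde x_0}(\cdot)$ with the true posterior $q_{0|k}(\tilde x_0\mid x_k)$ as the mixing distribution; this produces the NLL term plus some extra $\theta$-independent bookkeeping terms that must be tracked through the expectation. You instead prove the clean pointwise inequality
\begin{equation}
\KL\!\left(q_{k|k+1,0}(\cdot\mid x_{k+1},x_0)\,\middle\|\,p^\theta_{k|k+1}(\cdot\mid x_{k+1})\right)\le -\log p^\theta_{0|k+1}(x_0\mid x_{k+1})
\end{equation}
by the chain rule for KL applied to the coupling $(\delta_{x_0},\,p^\theta_{0|k+1})$ pushed through the common kernel $q_{k|k+1,0}$, and the observation that the conditional KL given $X_0'$ vanishes; both arguments are at bottom instances of the log-sum inequality, but with different variational weights (yours is the delta at the true $x_0$, the paper's is $q_{0|k}(\cdot\mid x_k)$). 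Your version buys a tidier, expectation-free statement with no residual constants to carry around, and makes transparent that the bound is exactly the data processing inequality; the paper's choice of weights yields a numerically tighter upper bound (the two differ by $\E\left[H\!\left(q_{0|k}(\cdot\mid x_k)\right)\right]\ge 0$), though this is immaterial here since the proposition only requires that the $\theta$-dependent part of some valid upper bound on $\LDT$ equal the stated NLL sum, which both derivations deliver. Your handling of the $k=0$ term and the $\theta$-independent $\KL(q_{K|0}\|\pref)$ term matches the paper.
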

\begin{proof}

\begin{talign}
    \textstyle \mathcal{L}_{\textup{DT}}(\theta) = \mathbb{E}_{\pdata(x_0)} \Big[& \KL( q_{K|0}(x_K | x_0) || \pref(x_K) ) - \E_{q_{1|0}(x_1 | x_0)} \left[ \log p^\theta_{0|1}(x_0 | x_1)\right] \\
    & + \sum_{k=1}^{K-1} \mathbb{E}_{q_{k+1|0}(x_{k+1} | x_0)} \left[ \KL(q_{k|k+1,0}(x_{k}|x_{k+1}, x_0) || p_{k|k+1}^{\theta}(x_{k}|x_{k+1})) \right] \Big]
\end{talign}

Consider one term from the sum
\begin{align}
    L_k &= \E_{\pdata(x_0) q_{k+1|0}(x_{k+1} | x_0)} \left[ \KL ( q_{k|k+1, 0}(x_{k}|x_{k+1}, x_0) || p_{k|k+1}^\theta(x_{k}|x_{k+1}) \right]\\
    &= \E_{q_{k+1}(x_{k+1})q_{k|k+1}(x_{k}|x_{k+1})} \left[ -\log p_{k|k+1}^\theta(x_{k}|x_{k+1}) \right] \\
    & \quad + \E_{\pdata(x_0) q_{k+1|0}(x_{k+1} | x_0) q_{k|k+1, 0}(x_{k}|x_{k+1} , x_0) } \left[ \log q_{k|k+1, 0}(x_{k}|x_{k+1}, x_0) \right]
\end{align}
Now,
\begin{align}
    &\E_{q_{k+1}(x_{k+1}) q_{k|k+1}(x_{k}|x_{k+1})} \left[ - \log p_{k|k+1}^\theta(x_{k}|x_{k+1}) \right] \\
    & \quad = \E_{q_{k+1}(x_{k+1})q_{k|k+1}(x_{k}|x_{k+1})} \left[ - \log \sum_{\tilde{x}_0} q(x_{k} | \tilde{x}_0, x_{k+1}) p_{0|k+1}^\theta(\tilde{x}_0 | x_{k+1}) \right]\\
    & \quad = \E_{q_{k+1}(x_{k+1})q_{k|k+1}(x_{k}|x_{k+1})} \left[ - \log \sum_{\tilde{x}_0} \frac{q_{0|k}(\tilde{x}_0 | x_{k}) q_{k|k+1}(x_{k}|x_{k+1})}{q_{0|k+1}(\tilde{x}_0 | x_{k+1})} p_{0|k+1}^\theta(\tilde{x}_0 | x_{k+1}) \right]\\
    & \quad \leq \E_{q_{k+1}(x_{k+1}) q_{k|k+1}(x_{k}| x_{k+1}) q_{0|k}(\tilde{x}_0 | x_{k})} \left[ - \log \frac{q_{k|k+1}(x_{k}|x_{k+1})}{q_{0|k+1}(\tilde{x}_0 | x_{k+1})} p_{0|k+1}^\theta(\tilde{x}_0 | x_{k+1}) \right]\\
    & \quad = \E_{\pdata(x_0) q_{k+1|0}(x_{k+1} | x_0)} \left[ -\log p_{0|k+1}^\theta(x_0 | x_{k+1}) \right] \\
    & \hspace{1cm} + \E_{\pdata(x_0) q_{k|0}(x_{k}|x_0) q_{k+1|k}(x_{k+1} | x_{k})} \left[ -\log \frac{q_{k|k+1}(x_{k}|x_{k+1})}{q_{0|k+1}(x_0 | x_{k+1})} \right]
\end{align}
Therefore,
\begin{align}
    \mathcal{L}_{\textup{DT}} \leq& \sum_{k=0}^{K-1} \Bigg\{ \E_{\pdata(x_0) q_{k+1|0}(x_{k+1} | x_0)} \left[ - \log p_{0|k+1}^\theta(x_0 | x_{k+1}) \right]\Bigg\} \\
    &+ \E_{\pdata(x_0)} \left[ \KL ( q_{K|0}(x_K | x_0) || p_K(x_K) ) \right] \\
    &+ \sum_{k=1}^{K-1} \Bigg \{ \E_{\pdata(x_0) q_{k+1|0}(x_{k+1} | x_0) q_{k|k+1, 0}(x_{k} | x_{k+1}, x_0)} \left[ \log q_{k|k+1, 0}(x_{k} | x_{k+1}, x_0) \right] \\
    &\hspace{1.5cm}+ \E_{\pdata(x_0) q_{k|0}(x_{k} | x_0) q_{k+1|k}(x_{k+1} | x_{k})} \left[ - \log \frac{ q_{k|k+1}(x_{k}|x_{k+1})   }{ q_{0|k+1}(x_0 | x_{k+1})   }\right] \Bigg\}
\end{align}
We can see that only the first term depends on $\theta$.
\end{proof}

\section{Choice of Forward Process} \label{sec:ApdxChoiceOfForwardProcess}
We need to choose the structure of $R_t$ such that we can analytically obtain $q_{t|0}$ marginals to enable efficient training.
\begin{proposition}
If $R_t$ and $R_{t'}$ commute for all $t$, $t'$ then $q_{t|0}(x = j | x_0 = i) = \big( \text{exp} \big[ \int_0^t R_s ds \big] \big)_{ij}$ where $\text{exp}$ here is understood to be the matrix exponential function.
\end{proposition}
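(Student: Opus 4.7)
The plan is to recast the proposition as a statement about a matrix ODE and solve it using the commutativity hypothesis. First I would organize the conditional transition probabilities into a matrix $P(t) \in \mathbb{R}^{S \times S}$ with entries $P(t)_{ij} = q_{t|0}(x=j \mid x_0 = i)$. The Kolmogorov forward equation stated in Section~\ref{sec:ApdxPrimerCTMC}, namely $\partial_t q_{t|s}(x \mid \tilde{x}) = \sum_y q_{t|s}(y \mid \tilde{x}) R_t(y,x)$, then translates into the matrix ODE $\dot{P}(t) = P(t) R_t$ with initial condition $P(0) = \mathrm{Id}$, since conditioning on $x_0$ at $s=0$ gives $P(0)_{ij} = \delta_{ij}$.

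Next, I would define $A(t) = \int_0^t R_s \, ds$ and propose $P(t) = \exp(A(t))$ as the candidate solution. Clearly $A(0) = 0$ so the initial condition holds. The main task is then to verify that $\frac{d}{dt} \exp(A(t)) = \exp(A(t)) \, R_t$. For a general time-dependent matrix $A(t)$ this is false because $\frac{d}{dt} A(t)^k$ involves a sum $\sum_{j=0}^{k-1} A(t)^j \dot{A}(t) A(t)^{k-1-j}$ that does not collapse. However, the commutativity hypothesis $R_t R_{t'} = R_{t'} R_t$ for all $t,t'$ implies, by linearity of the integral, that $R_t$ commutes with $A(t)$, and hence each term in that sum equals $A(t)^{k-1} R_t$, giving $\frac{d}{dt} A(t)^k = k A(t)^{k-1} R_t$.

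Differentiating the power series $\exp(A(t)) = \sum_{k=0}^\infty \tfrac{1}{k!} A(t)^k$ term by term, which is justified by absolute convergence on any bounded time interval since $\|A(t)\|$ is bounded, I would then obtain
\begin{equation}
\frac{d}{dt} \exp(A(t)) = \sum_{k=1}^\infty \tfrac{1}{(k-1)!} A(t)^{k-1} R_t = \exp(A(t)) \, R_t,
\end{equation}
so $P(t) = \exp(A(t))$ indeed solves the ODE. Uniqueness of solutions to this linear matrix ODE (standard Picard--Lindelöf argument, since $R_t$ is bounded and continuous in $t$) ensures this is the only solution, so $q_{t|0}(x=j \mid x_0=i) = \bigl(\exp[\int_0^t R_s\, ds]\bigr)_{ij}$ as claimed.

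The only subtle step is the commutativity-driven simplification of $\frac{d}{dt} A(t)^k$; everything else is bookkeeping. I do not anticipate any real obstacle, since $\mathcal{X}$ is finite so all matrices are of finite size and power series manipulations are routine.
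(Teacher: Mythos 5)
Your proposal is correct and follows essentially the same route as the paper's proof: recast the statement as the matrix ODE $\partial_t P_t = P_t R_t$, expand $\exp\bigl(\int_0^t R_s\, ds\bigr)$ as a power series, and use commutativity to collapse the product-rule terms when differentiating term by term. You are in fact slightly more complete than the paper, which only verifies that the matrix exponential is \emph{a} solution and omits the initial condition and the uniqueness argument that you correctly supply.
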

\begin{proof}
Let $(P_t)_{ij} = q_{t|0}(x = j | x_0 = i)$. We show that $P_t = \text{exp} \left( \int_0^t R_s ds \right)$ is a solution to the Kolmogorov forward equation, which in matrix form reads, $\partial_t P_t = P_t R_t$. Writing the matrix exponential in sum form
\begin{align}
    P_t &= \sum_{k=0}^\infty \frac{1}{k!} \left( \int_0^t R_s ds \right)^k\\
    &= \mathrm{Id} + \int_0^t R_s ds + \frac{1}{2!} \left( \int_0^t R_s ds \right)^2 + \frac{1}{3!} \left( \int_0^t R_s ds \right)^3 + \dots
\end{align}
Now, differentiating and using the fact that $R_t$, $R_{t'}$ commute.
\begin{align}
    \partial_t P_t &= R_t + \int_0^t R_s ds R_t + \frac{1}{2!} \left( \int_0^t R_s ds \right)^2 R_t + \dots\\
    &= \left\{ \sum_{k=0}^\infty \frac{1}{k!} \left( \int_0^t R_s ds \right)^k \right\} R_t\\
    &= P_t R_t
\end{align}
\end{proof}
As stated in the main text, we achieve the commutative property by selecting $R_t = \beta(t) R_{b}$ where $\beta(t)$ is a time dependent scalar and $R_b$ is a constant base matrix. We can utilize the eigendecomposition of $R_b = Q \Lambda Q^{-1}$ to efficiently calculate $P_t$, 
\begin{align}
    P_t &= \text{exp} \left( \int_0^t \beta(s) R_b ds \right)\\
    &= \sum_{k=0}^\infty \frac{1}{k!} \left( \int_0^t \beta(s) R_b ds \right)^k\\
    &= \sum_{k=0}^\infty \frac{1}{k!} \left(Q \Lambda Q^{-1} \int_0^t \beta(s) ds \right)^k\\
    &= \sum_{k=0}^\infty \frac{1}{k!} Q \left(\Lambda \int_0^t \beta(s) ds \right)^k Q^{-1}\\
    &= Q \left\{ \sum_{k=0}^\infty \frac{1}{k!} \left(\Lambda \int_0^t \beta(s) ds \right)^k \right\} Q^{-1}\\
    &= Q \text{exp} \left[ \Lambda \int_0^t \beta(s) ds \right] Q^{-1}
\end{align}
Since $\Lambda$ is a diagonal matrix, the matrix exponential coincides with the element wise exponential making the final expression tractable to compute. We choose $\beta(t) = a b^t \log b$ because this makes the integral which dictates the variance of $q_{t|0}$ have a simple form $\int_0^t \beta(s) ds = a b^t - a$.\\

For categorical problems, we found a uniform rate matrix works well, $R_t = \beta \mathbbm{1} \mathbbm{1}^T - \beta S \mathrm{Id}$.
This is directly analogous to the discrete time uniform transition matrix: $P = \alpha \mathbbm{1}\mathbbm{1}^T + (1 - S \alpha) \mathrm{Id}$ with $\alpha$ depending on the time discretization used. Indeed, if one calculates the corresponding discrete transition matrix for the uniform $R_t$ rate through the matrix exponential, the uniform transition matrix is obtained. Another categorical corruption process is the absorbing state process. In discrete time, the transition matrix is given by $P = \alpha \mathbbm{1} \mathbf{e}_\ast^T + (1-\alpha) \mathrm{Id}$ where $\mathbf{e}_\ast$ is the one-hot encoding of the absorbing state. The corresponding absorbing state continuous time process has transition rate matrix: $R_t = \beta \mathbf{1} \mathbf{e}_\ast^T - \beta \mathrm{Id}$. The correspondence for more complex transition matrices e.g. the Discretized Gaussian matrix in \cite{austin2021structured} is much harder to find analytically especially if the time inhomogeneous case is considered. For datasets with an ordinal structure, we construct a new rate matrix that maintains a bias towards nearby states using a similar approach as that taken by \cite{austin2021structured} to construct the Discretized Gaussian matrix.

We construct this matrix by first picking a desired stationary distribution, $\pref$, and then filling in matrix entries such that we encourage transitions to nearby states whilst keeping $\pref$ as our stationary distribution. Specifically, we let $\pref$ be a discretized Gaussian over the state space, i.e.
\begin{equation}
    \pref(x) \propto \text{exp} \left[ - \frac{(x - \mu_0)^2}{2 \sigma_0^2} \right]
\end{equation}
To find a condition on the rate such that this is the case, recall the Kolmogorov differential equation for the marginals
\begin{equation}
    \partial_t q_t(x) = \sum_{\tilde{x}} q_t(\tilde{x}) R_b(\tilde{x}, x)
\end{equation}
Now, consider a rate that is in detailed balance with $\pref$
\begin{equation}
    \pref(\tilde{x}) R_b(\tilde{x}, x) = \pref(x) R_b(x, \tilde{x})
\end{equation}
Substituting this rate into the Kolmogorov equation, we see that $\pref$ is the stationary distribution
\begin{align}
    \partial_t \pref(x) &= \sum_{\tilde{x}} \pref(\tilde{x}) R_b(\tilde{x}, x)\\
    &= \sum_{\tilde{x}} \pref(x) R_b(x, \tilde{x})\\
    &= \pref(x) \sum_{\tilde{x}} R_b(x, \tilde{x})\\
    &= 0
\end{align}
where the last line follows from the fact that the row sum of a rate matrix is zero. Note that any $R_t = \beta(t) R_b$ will also have this stationary distribution as the multiplication by $\beta(t)$ can be seen as just a scaling of the time axis. From the detailed balance equation, we gain a condition on $R_b$ such that our desired $\pref$ is the stationary distribution
\begin{equation}
    \frac{R_b(\tilde{x}, x)}{R_b(x, \tilde{x})} = \frac{\pref(x)}{\pref(\tilde{x})} = \text{exp} \left[ \frac{(\tilde{x} - \mu_0)^2}{2 \sigma_0^2} - \frac{(x - \mu_0)^2}{2 \sigma_0^2} \right]
\end{equation}
This gives constraints on diagonal elements within $R_b$ but does not fully define the entire matrix. To do this, we first make the assumption that $\mu$ is selected to be at the center of the state space. Then we set off diagonal terms to the right of the diagonal in the top half of the rate matrix and off diagonal terms to the left of the diagonal in the bottom half to be 1. Finally, progressing in from the top and bottom of the rate matrix we make definitions of rate matrix values that have not already been defined by the detailed balance condition.
For clarity, we provide a pictorial representation of this scheme for an $8 \times 8$ rate matrix below

\newcommand{\syma}{\square}
\newcommand{\symb}{\triangle}
\newcommand{\symc}{\cdot}
\begin{equation}
    \begin{bmatrix}
        \symc&1&\syma&\syma&\syma&\syma&\syma&\syma\\
        \symb&\symc&1&\syma&\syma&\syma&\syma&\symb\\
        \symb&\symb&\symc&1&\syma&\syma&\symb&\symb\\
        \symb&\symb&\symb&\symc&1&\symb&\symb&\symb\\
        \symb&\symb&\symb&\symb&\symc&\symb&\symb&\symb\\
        \symb&\symb&\symb&\syma&1&\symc&\symb&\symb\\
        \symb&\symb&\syma&\syma&\syma&1&\symc&\symb\\
        \symb&\syma&\syma&\syma&\syma&\syma&1&\symc\\
    \end{bmatrix}
\end{equation}
where $\syma$ represents a value we will define, $\symb$ represents a value that is defined relative to another entry through the detailed balance condition and $\symc$ is a diagonal entry that is equal to the negative off diagonal row sum. We could define $\syma$ values to be 0 to gain a sparse rate matrix, however, we found in early experiments that allowing transitions to further away states greatly reduces the mixing time and gives better performance. We define $\syma$ in each row similarly, by setting it equal to $\text{exp}[- i^2 / \sigma_r^2 ]$ where $i$ is the distance away from the `$1$' value in that row and $\sigma_r$ is a hyperparameter defining the length scale in state space of a typical transition. This biases our forward process to make transitions between nearby states, at a length scale of $\sigma_r$.

\section{CTMC Simulation} \label{sec:ApdxCTMCSimulation}
\subsection{Exact CTMC and Tau-Leaping}
In this section, we first describe exact CTMC simulation before giving an algorithmic description of tau-leaping.\\

When a CTMC has a time-homogeneous rate matrix, we can use Gillespie's Algorithm \cite{gillespie1976general, gillespie1977exact,wilkinson2018stochastic} to exactly simulate it. This algorithm is based on the jump chain/holding time definition of the CTMC. It repeats the following two steps:
\begin{itemize}
    \item Draw a holding time from an exponential distribution with mean $-1 / R(x, x)$ and wait in the current state $x$ for that amount of time.
    \item Sample the next state from $r(\tilde{x} | x) = (1 - \delta_{x, \tilde{x}}) \frac{R(x, \tilde{x})}{\sum_{x' \neq x} R(x, x')}$
\end{itemize}
This Algorithm can be adjusted for the case when we have a time-inhomogeneous rate matrix using the modified next reaction method \cite{anderson2007modified}. However, both algorithms still step through each transition in the CTMC individually and are thus unsuitable in our case because only one dimension would change for each simulation step making it very computationally expensive to produce a sample. Instead we use tau-leaping that allows multiple dimensions to change in a single simulation step. We detail this method in Algorithm \ref{alg:tauleaping}.

\begin{algorithm}[t]
\SetAlgoNoLine
\DontPrintSemicolon
$t \leftarrow T$\;
$\bm{x}_t^{1:D} \sim \pref(\bm{x}_T^{1:D})$\;
\While{ $t > 0$}{
Compute $p_{0|t}^\theta(x_0^d | \bm{x}_t^{1:D}), d=1,\dots,D$ with one forward pass of the denoising network\;
\For{$d=1, \dots, D$}{
    \For{$s = 1, \dots,  S \backslash x_t^d$}{
    $\hat{R}_t^{\theta \, d}(\bm{x}_t^{1:D}, s ) \leftarrow R_t^d(s, x_t^d) \sum_{x_0^d} p^\theta_{0|t}(x_0^d | \bm{x}_t^{1:D}) \frac{q_{t|0}(s | x_0^d)}{q_{t|0}(x_t^d | x_0^d)}$\;
        $P_{ds} \leftarrow$ Poisson$\Big(\tau \hat{R}_t^{\theta \, d}(\bm{x}_t^{1:D}, s)\Big)$\;
    }
}
\For{$d = 1, \dots, D$}{
\eIf{data is categorical AND $\sum_{s=1}^S P_{ds} > 1$    }{
$x_{t-\tau}^d \leftarrow x_t^d$ \tcp{reject change}
}{
 $x_{t-\tau}^d \leftarrow x_t^d + \sum_{s=1}^S P_{ds} \times (s - x_t^d)$
}
}
$\bm{x}_{t-\tau}^{1:D} \leftarrow \text{Clamp}(\bm{x}_{t-\tau}^{1:D}, \text{min}=1, \text{max}=S)$\;
$t \leftarrow t - \tau$
}
 \caption{Generative Reverse Process Simulation with Tau-Leaping}
 \label{alg:tauleaping}
\end{algorithm}

\subsection{Predictor-Corrector Discussion} \label{sec:ApdxPredictorCorrectorDiscussion}

In this section we compare predictor-corrector sampling schemes as applied to continuous state spaces and discrete state spaces.

The predictor-corrector scheme in continuous state spaces was introduced in \cite{song2020score}. It consists of alternating between a predictor step and a corrector step:
\begin{align}
    &\text{Predictor} \quad \bm{x}_i \leftarrow \bm{x}_{i+1} + \gamma_i s_\theta(\bm{x}_{i+1}, i+1) + \sqrt{\gamma_i} z, \quad z \sim \mathcal{N}(0, I)\\
    &\text{Corrector} \quad \bm{x}_i \leftarrow \bm{x}_{i} + \epsilon_i s_\theta(\bm{x}_i, i) + \sqrt{2 \epsilon_i} z, \quad z \sim \mathcal{N}(0, I)
\end{align}
where $\bm{x}_i$ is the state at sampling step $i$, $s_\theta$ is the learned score model approximating $\nabla_{\bm{x}} \log q_t(\bm{x})$ and $\gamma_i$, $\epsilon_i$ are the step sizes for the predictor and corrector respectively. We see that both take similar forms, except the corrector adds in a factor $\sqrt{2}$ more Gaussian noise during the update step.

In discrete state spaces, rather than sampling using gradient guided stochastic steps as in the continuous state space case, we sample by simulating CTMCs with defined rates. When we take a predictor step, we simulate using $\hat{R}_t^\theta$ and when we take a corrector step we simulate using $R_t^{c \, \theta} = \hat{R}_t^\theta + R_t$. If we simulate the CTMC exactly, we have seen in the previous section that this amounts to sampling next states from the categorical distribution defined by normalizing the row of the rate matrix corresponding to the current state. Therefore, corrector sampling can be seen as sampling from a slightly noisier categorical distribution defined through $R_t^{c \, \theta}$ as compared to the predictor categorical distribution defined through $\hat{R}_t^\theta$. This is analogous to the increased Gaussian noise applied during a corrector step in continuous state spaces.

Adding corrector steps brings the marginal of the samples closer to $q_t(\bm{x})$ and continued application of the corrector will further explore the domain of $q_t(\bm{x})$. In previous work on continuous state predictor-corrector methods, the number of corrector steps has been small (e.g. 1 or 2 corrector steps per predictor step) or indeed the corrector steps have been removed altogether. In this work we have found that using up to 10 corrector steps per predictor steps can be beneficial during certain regions of the reverse generative process. Additionally, in continuous state spaces, it has been observed that too many corrector steps can result in unwanted noise in the generated data \cite{furusawa2021generative}.

We hypothesize that corrector steps are better utilized in discrete state spaces to explore the domain of $q_t(\bm{x})$ than in continuous state spaces. This is because, the corrector update is defined largely through the reverse rate itself, $\hat{R}_t^\theta$, just with the categorical probabilities being annealed slightly more towards uniform through the addition of the forward rate $R_t$. This may be a more effective update than simply adding extra Gaussian noise in the continuous state space case. Furthermore, the denoising model in continuous state spaces can be seen as outputting a point estimate of $\bm{x}_0$ of dimension $D$. However, in discrete state spaces, the denoising model outputs a categorical distribution over every dimension (output dimension $D \times S$) allowing it to express some uncertainty information in the $\bm{x}_0$ prediction, albeit with conditional independence between the dimensions. Adding corrector steps in discrete state spaces would then allow information to mix between dimensions for the current time step, exploring modes of $q_t(\bm{x})$.

We explore this idea on the image modelling task in Figure \ref{fig:apdxCorrectorProgressionImages}. We run the reverse generative process until time $t=0.4$ at which point we hold the time constant and apply $1000$ corrector steps. We see that the resulting progression of $\bm{x}_t$ states explores potential local modes of $q_t(\bm{x})$ in the local region of image space.

\begin{figure}
    \centering
    \includegraphics[width=\linewidth]{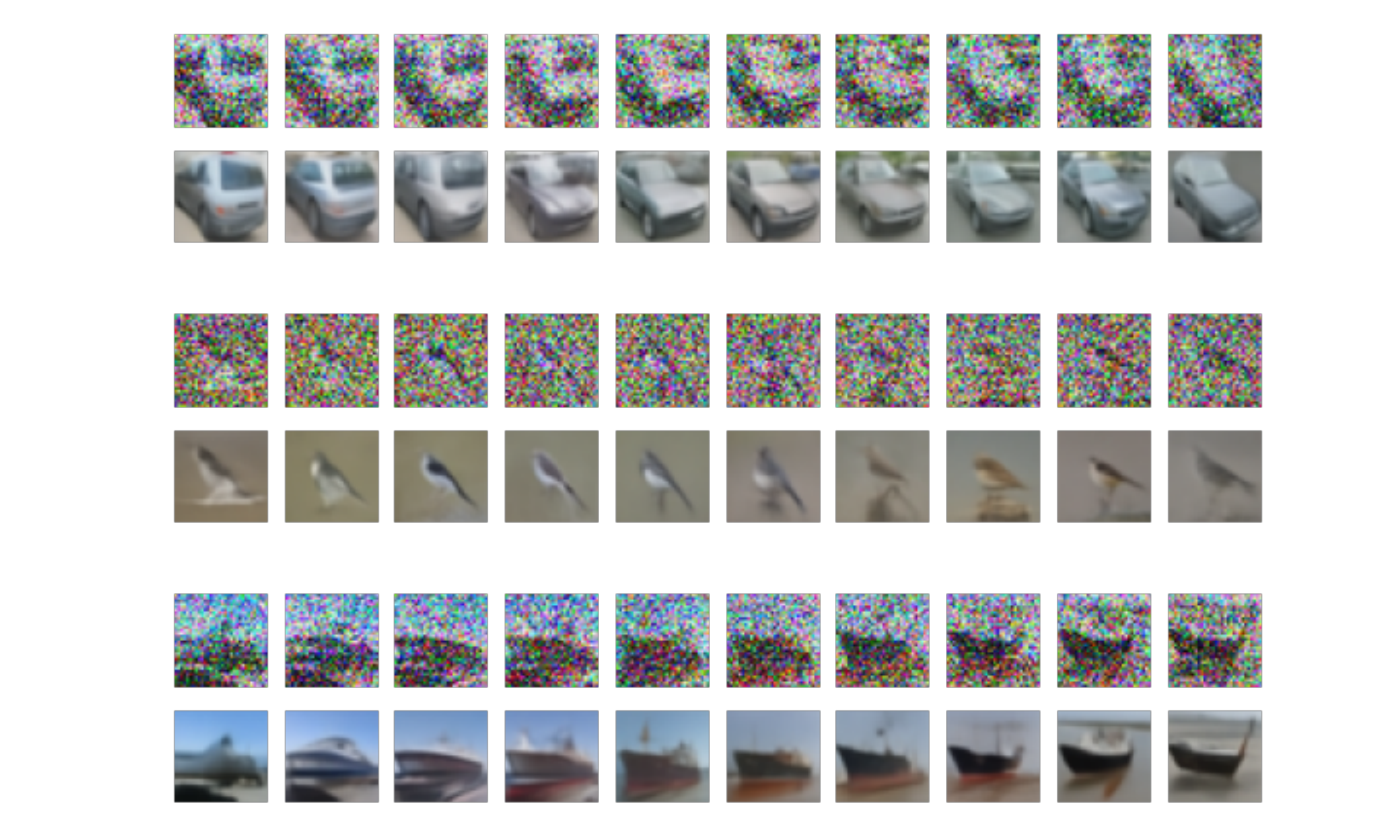}
    \caption{Progression of $\bm{x}_t$ for $t=0.4$ by repeated application of corrector steps. In each pair of rows, the top row is $\bm{x}_t$ whilst the bottom row is the $\bm{x}_0$ prediction made by $p^\theta_{0|t}(\bm{x}_0 | \bm{x}_t)$ (argmax of the categorical probabilities in each dimension). Each column represents an additional $100$ corrector steps.}
    \label{fig:apdxCorrectorProgressionImages}
\end{figure}

\section{Implicit Dimensional Assumptions Made in Discrete Time} \label{sec:ApdxDTDimAssumptions}
In discrete time, the parametric reverse kernel, $p_{k|k+1}^\theta$, is commonly defined through a denoising model $p_{0|k+1}^\theta$. Here, we examine this definition in the multi-dimensional case where the forward process factorizes, as in Appendix \ref{sec:ApdxCTELBOFactorization} and previous discrete time work \cite{austin2021structured}. We begin by writing the true full dimensional reverse kernel, $q_{k|k+1}$, in terms of the true denoising distribution, $q_{0|k+1}$.
\begin{align}
    q_{k|k+1}(\bm{x}_{k}^{1:D} | \bm{x}_{k+1}^{1:D}) &= \prod_{d=1}^D q_{k|k+1}(x_{k}^d | \bm{x}_{k}^{1:d-1}, \bm{x}_{k+1}^{1:D})\\
    & = \prod_{d=1}^D \sum_{x_0^d} q_{k,0|k+1}(x_{k}^d, x_0^d | \bm{x}_{k}^{1:d-1}, \bm{x}_{k+1}^{1:D})\\
    & = \prod_{d=1}^D \sum_{x_0^d} q_{0|k+1}(x_0^d | \bm{x}_{k}^{1:d-1}, \bm{x}_{k+1}^{1:D}) q_{k|0, k+1}(x_{k}^d | x_0^d, x_{k+1}^d)
\end{align}
where on the final line we have used the fact that the forward process is independent across dimensions. To create our approximate reverse kernel, $p_{k|k+1}^\theta$, we approximate $q_{0|k+1}(x_0^d | \bm{x}_k^{1:d-1}, \bm{x}_{k+1}^{1:D})$ with $p_{0|k+1}^\theta(x_0^d | \bm{x}_{k+1}^{1:D})$,
\begin{equation}
    p^\theta_{k|k+1}(\bm{x}_{k}^{1:D} | \bm{x}_{k+1}^{1:D}) = \prod_{d=1}^D \sum_{x_0^d} p^\theta_{0|k+1}(x_0^d | \bm{x}_{k+1}^{1:D}) q_{k|0,k+1}(x_{k}^d | x_0^d, x_{k+1}^d)
\end{equation}
We throw away the extra $\bm{x}_k^{1:d-1}$ conditioning because we use a non-autoregressive model that takes in $\bm{x}_{k+1}^{1:D}$ and in a single forward pass gives conditionally independent probabilities over $x_0^d$, $d = 1, \dots, D$. For finite $K$, this approximation can never match the true kernel because we are not conditioning on all relevant information. Of course, as $K$ gets larger, this approximation becomes more accurate. Since we operate in the continuous regime, we do not have to make this approximation because the conditionally independent denoising model, $q_{0|t}(x_0^d | \bm{x}^{1:D})$, appears directly in our reverse rate, $\hat{R}_t^{1:D}$, when we factorize the forward process (see Proposition \ref{prop:dimfactorize}).

\section{Experimental Details} \label{sec:ApdxExperimentDetails}
In this section, we provide additional details for the experiments we performed applying our method to practical problems. The code for our models is available at \url{https://github.com/andrew-cr/tauLDR}. Before describing the specifics for each experiment, we first explain the implementation details common to all.\\

When we evaluate the objective $\LCT$ on each minibatch of training datapoints, we must sample a time for each from $t \sim \mathcal{U}(0, T)$ which represents the point in the forward process which we will noise to. Training instabilities can be found if $t$ is sampled very close to $0$ because the reverse rate, $\hat{R}_t$, becomes ill-conditioned in this region. This phenomenon is also observed in continuous state space models because the score, $\nabla_x \log q_t(x)$, becomes ill-conditioned close to $t=0$. The reverse rate and score become ill conditioned close to the start of the forward process because the marginal probability, $q_t(x)$, will be highly peaked around the data manifold and $\log q_t(x)$ will explode in regions that are not close to the data. To avoid these issues, a common trick is to set a minimum time such that $t \sim \mathcal{U}(\epsilon, T)$. $\epsilon$ is set such that the level of noising at $t=\epsilon$ is very small and reverse sampling to this point will produce samples very close to $\pdata$. In our experiments, we set $\epsilon = 0.01T$.\\

During reverse sampling, we use tau-leaping to simulate the reverse process from $t=T$ until $t=\epsilon$ because the reverse rate is not trained for $t < \epsilon$. This produces a sample close to $\pdata$. We found improved performance in metrics such as FID if we then complete a final step to remove the small amount of noise that may still be present in the sample. Specifically, we pass the sample through the denoising model $p_{0|t}^\theta(x_0 | x_t)$ with $t=\epsilon$ to obtain an output of shape $D \times S$ where $D$ is the dimensionality of the problem. This is a probability distribution over the states for each of the dimensions. We set the value of each dimension to the state with the highest probability. This then produces a sample which has all of the noise removed.\\

The specific value of $T$ within our model is arbitrary because the forward process can be scaled in the time axis to provide the same noising process for any $T$. Therefore, we simply set $T=1$.
\subsection{Demonstrative Example} \label{sec:ApdxToyExperiment}
Our 2d dataset is created by sampling 1M 2d points from a $32 \times 32$ state space with probability proportional to the pixel values of a $32 \times 32$ grayscale image of a $\tau$ character.\\

For our forward process, we use a Gaussian rate (see Appendix \ref{sec:ApdxChoiceOfForwardProcess}) with stationary distribution standard deviation $\sigma_0 = 8$ and rate length scale $\sigma_r = 1$. We use a rate schedule of $\beta(t) = 5 \times 5^t \log (5) $.\\

To represent $p_{0|t}^\theta$ we use a residual MLP. The architecture consists of an input linear layer to lift the input dimension of $2$ to the internal network dimension of $16$. Then, there are $2$ residual blocks each consisting of: a single hidden layer MLP of hidden dimension $32$, a residual connection to the input of the MLP, a layer norm, and finally a FiLM layer \cite{perez2018film} modulated by the time embedding. At the output, there is a single linear layer with output size of $2 \times 32 = 64$ representing state probabilities in each of the $2$ dimensions. The time is embedded using the Transformer sinusoidal position embedding \cite{vaswani2017attention} creating an embedding of size $32$. Then, the embedding is further processed by a single hidden layer MLP with hidden layer size $32$ and output size $128$. To create the FiLM parameters in each residual block, the time embedding is passed through a linear layer with output of size $32$ to provide a multiplicative and additive modulation to the state dimension of $16$. We minimize the $\LCT$ objective using Adam with a learning rate of $0.0001$ and batch size of 32 for 1M steps.\\

For the exact simulation we use the next reaction method with modifications for time dependent transition rates \cite{anderson2007modified}. This method steps through each transition in the exact simulation path individually by calculating the time to the next occurrence of each transition type and applying the transition that occurs soonest. Exact algorithmic details can be found in \cite{anderson2007modified}. To calculate the time to the next occurrence for a transition, we need to integrate the reverse rate matrix (eq (13) in \cite{anderson2007modified}). We do this with euler integration with a step size of 0.001.

\subsection{Image Modeling} \label{sec:ApdxCIFAR10Experiment}
We train on the CIFAR10 training dataset that contains 50000 images of dimension $3 \times 32 \times 32$. We evaluate the test ELBO on the  CIFAR10 test dataset which consists of 10000 images. For the forward noising process, we use the the Gaussian rate (see Appendix \ref{sec:ApdxChoiceOfForwardProcess}) with stationary distribution standard deviation of $\sigma_0 = 512$ and rate length scale $\sigma_r = 6$. This effectively defines a uniform stationary distribution since the state space is of size $256$. We found this performs better than a more concentrated Gaussian. Our $\beta$ schedule is $\beta(t) = 3 \times 100^t \log 100$. This was selected in accordance with $\sigma_r$ such that the overall shape of progression of the $q_{t|0}$ variances approximately matches that of the schedule proposed in \cite{ho2020denoising}.

Our $p_{0|t}^\theta$ model is parameterized with the standard U-net \cite{ronneberger2015u} architecture introduced in \cite{ho2020denoising}. The network follows the PixelCNN++ backbone \cite{salimans2017pixelcnn++} with group normalization layers. There are four feature map resolutions ($32 \times 32$ to $4 \times 4$) in the downsampling/upsampling stacks. At each resolution there are two convolutional residual blocks. There is a self-attention block between the residual blocks at the $16 \times 16$ resolution level \cite{chen2018pixelsnail}. The time is input into the network by first embedding with the Transformer sinusoidal position embedding \cite{vaswani2017attention}. This time embedding is passed into each residual block by passing it through a SiLU activation \cite{elfwing2018sigmoid} and then a linear layer before adding it onto the hidden state within the residual block between the two convolution operations.\\

The original architecture of \cite{ho2020denoising} has an output of dimension $3 \times 32 \times 32$ as it makes a point prediction of $x_0$ given $x_t$. In order for the model to output probabilities over $x_0$ (i.e. an output dimension of $3 \times 32 \times 32 \times 256$) we make the adjustments suggested in \cite{austin2021structured}. Specifically, we use their truncated logistic distribution parameterization where the model outputs the mean and log scale of a logistic distribution i.e. an output dimension of $3 \times 32 \times 32 \times 2$. The probability for a state is then the integral of this continuous distribution between this state and the next when mapped onto the real line. To impart a residual inductive bias on the output, the mean of the logistic distribution is taken to be $\text{tanh}(x_t + \mu')$ where $x_t$ is the normalized input into the model and $\mu'$ is mean outputted from the network. The normalization operation takes the input in the range $0, \dots, 255$ and maps it to $[-1, 1]$. In total, our network has approximately $35.7$ million parameters.\\

We optimize with the auxiliary objective described in Appendix \ref{sec:ApdxDirectDenoisingSupervision} with $\lambda = 0.001$. Within the auxiliary objective, we use the one-forward pass version of the continuous time ELBO, $\mathcal{L}_{\textup{eCT}}$. We optimize with Adam for 2M steps with a learning rate of 0.0002 and batch size of 128. We use the standard set of training tricks to improve optimization \cite{ho2020denoising, song2020score}. Throughout training we maintain an exponential moving average of the parameters with decay factor 0.9999. These average parameters are used during testing. At the start of optimization we use a linear learning rate warm-up for the first 5000 steps. We clip the gradient norm at a norm value of 1.0. We set the dropout rate for the network at 0.1. The skip connections for each residual block are rescaled by a factor of $\frac{1}{\sqrt{2}}$. The input images have random horizontal flips applied to them during training.\\

For sampling in Table \ref{tab:fids} we set $\tau=0.001$ for $\tau$LDR-$0$ and set $\tau=0.002$ for $\tau$LDR-$10$. The 10 corrector steps per predictor steps for $\tau$LDR-$10$ are introduced after $t < 0.1T$. We found that introducing the corrector steps near the end of the reverse sampling process had the best improvement in sample quality for the smallest increase in computational cost. When performing tau-leaping with the corrector rate, $R_t^c$, we have control over what $\tau$ we use since we are sampling a different CTMC (with $q_t$ as its stationary distribution) to the original reverse CTMC. We found that setting the corrector rate $\tau$ to be $1.5$ times the original $\tau$ for the reverse CTMC achieves the best performance in this example.

We train using 4 V100 GPUs on an academic research cluster. To calculate Inception and FID values, we use pytorch-fid \cite{Seitzer2020FID} and a further development \footnote{\url{https:/github.com/w86763777/pytorch-gan-metrics}}. We verified this library produced comparable values to previous work by calculating the Inception and FID scores for the published images from the DDPM \cite{ho2020denoising} method.\\

We show a large array of unconditional samples from the $\tau$LDR-$10$ model in Figure \ref{fig:apdxCIFAR10LotsOfSamples}. We now also present statistics from the reverse sampling process with standard tau-leaping with $\tau=0.001$. Figure \ref{fig:apdxcifarPropDimsJump} shows the proportion of dimensions that transition during a single step of tau-leaping. We see that during the initial stages, every dimensions changes during every tau-leaping step, but nearer the end of the process, more dimensions will have settled in their final positions and the proportion is less. In Figure \ref{fig:apdxcifarPropDimsOOB} we show the proportion of dimensions that are clipped due to proposing an out of bounds jump. Overall, the proportion is small. It is largest at the start of the process when we have initially sampled from the approximately uniform $\pref$ and there will be dimensions close to the boundary. As pixel values settle to their final values, the proportion reduces. Figure \ref{fig:apdxcifarFollowDims} shows the progression of a selection of dimensions during the reverse sampling process. A similar picture emerges where dimensions eventually settle in a region of the state space. We also note that larger jumps are made in a single tau-leaping step nearer the start of the reverse process and smaller jumps are made nearer the end.

\begin{figure}
    \centering
    \includegraphics[trim=300 400 300 400, width=\textwidth]{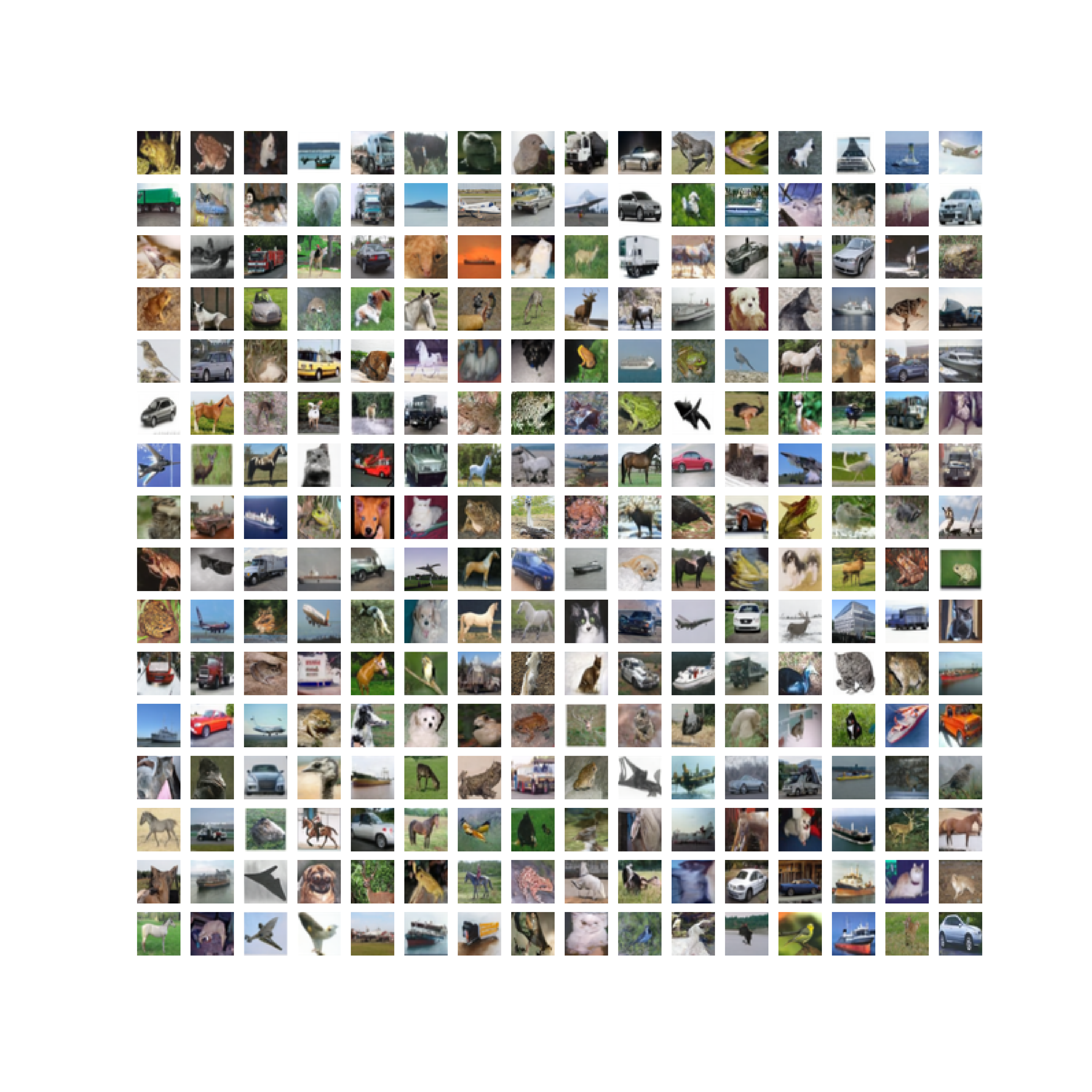}
    \caption{Unconditional CIFAR10 samples from our $\tau$LDR-$10$ model.}
    \label{fig:apdxCIFAR10LotsOfSamples}
\end{figure}

\begin{figure}
    \centering
    \includegraphics[width=0.8\textwidth]{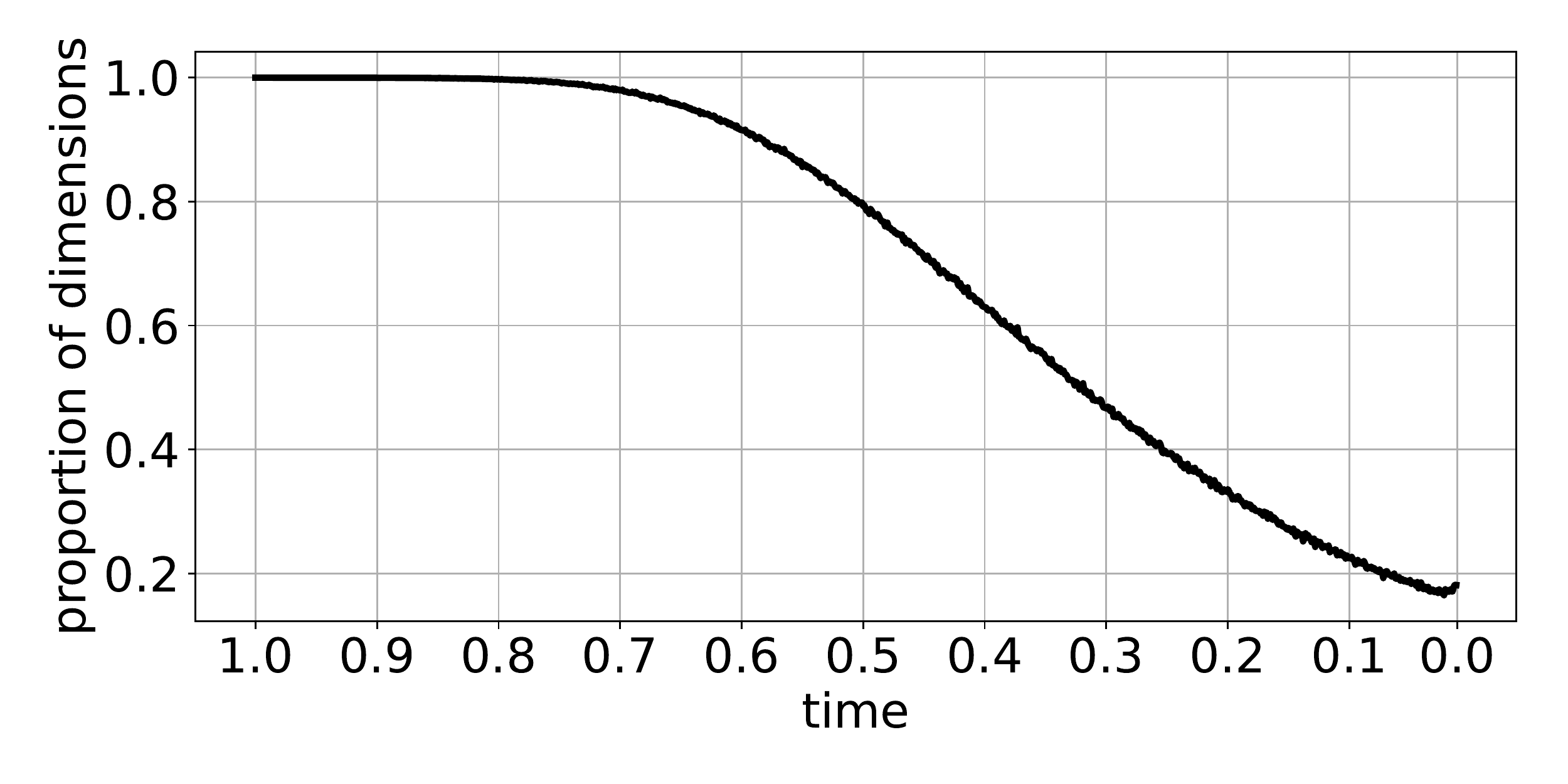}
    \caption{Proportion of dimensions that transition during a single step of tau-leaping during the reverse sampling process.}
    \label{fig:apdxcifarPropDimsJump}
\end{figure}

\begin{figure}
    \centering
    \includegraphics[width=0.8\textwidth]{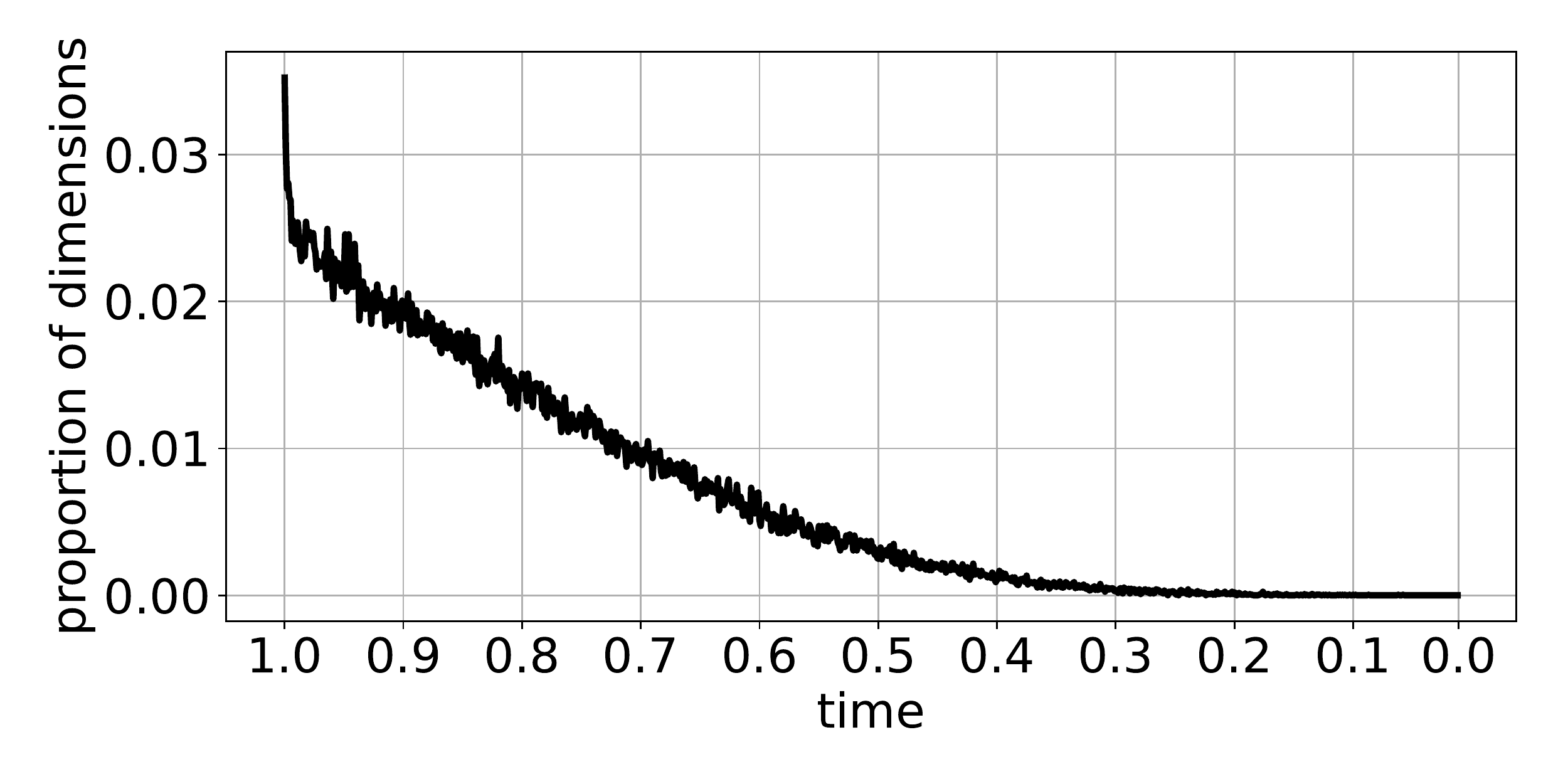}
    \caption{Proportion of dimensions that are clipped during a tau-leaping step due to proposing an out of bounds jump during the reverse sampling process.}
    \label{fig:apdxcifarPropDimsOOB}
\end{figure}

\begin{figure}
    \centering
    \includegraphics[width=0.8\textwidth]{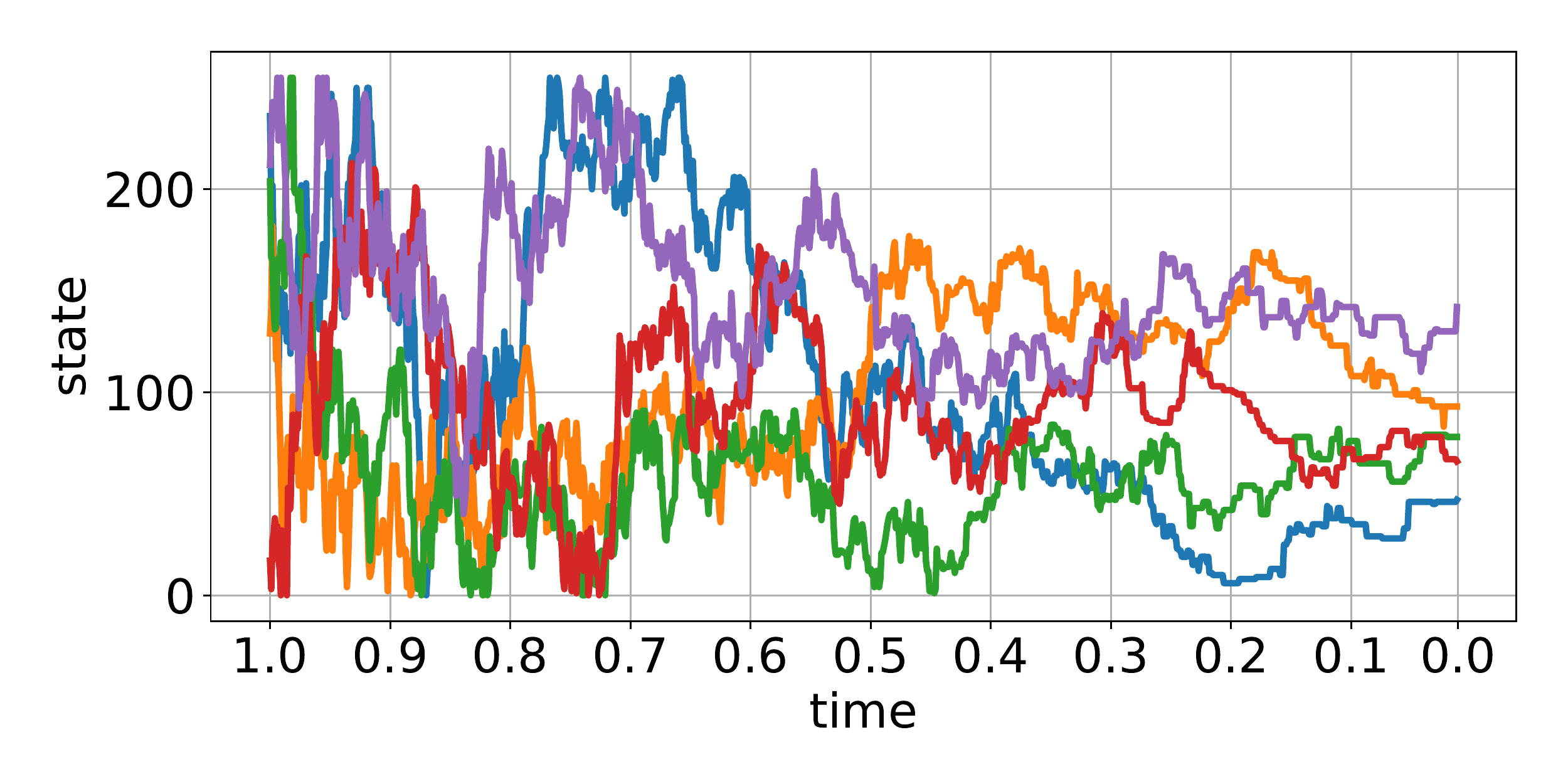}
    \caption{The progression of a selection of dimensions during the reverse sampling process.}
    \label{fig:apdxcifarFollowDims}
\end{figure}

\subsection{Monophonic Music} \label{sec:ApdxMusicExperiment}

We generate our training dataset from the Lakh pianoroll dataset \cite{raffel2016learning, dong2018musegan} (license CC By 4.0). This dataset consists of 174,154 multitrack pianorolls. We go through all songs and all tracks within each song and select sequences that match the following criteria: they are monophonic (only one note played at a time), there is not a period longer than one bar in which no note is played, there is more than one type of note played in the sequence and finally there is no one note played for more than 50 time steps out of the total 256 time steps. This removes the uninteresting and trivial sequences present within the dataset. We then remove any duplicates in the result. This leaves us with 6000 training examples and 950 testing examples. Each song consists of 256 time steps (16 per bar) and each time step takes one from 129 values i.e. we have $D=256$ and $S=129$. This state value represents either a note from 128 options or a rest. We scramble the ordering of this state space when mapping to the integers from 0 to 128. When we input into the denoising network, we input as one-hot $129$ dimensional vectors. \\

For the forward noising process, we use a uniform rate matrix, $R_b = \mathbbm{1} \mathbbm{1}^T - S \mathrm{Id}$ and set $\beta(t) = 0.03$. We found a constant in time $\beta(t)$ was sufficient for this dataset. In our comparison, we used a birth/death rate matrix defined as 
\begin{equation}
    \begin{bmatrix}
    -\lambda&\lambda&0&0 & \dots &0 & 0\\
    \lambda & -2\lambda & \lambda & 0 & \dots & 0 & 0\\
    0 & \lambda & -2 \lambda & \lambda & \dots & 0 & 0\\
    \vdots & \vdots & \vdots & \vdots & \ddots & \vdots & \vdots\\
    0 & 0 & 0 & 0& \dots & \lambda & -\lambda
    \end{bmatrix}
\end{equation}
this is the rate matrix for a birth/death process. We set $\lambda = 1$ and $\beta(t) = \frac{1}{2} \times 10000^t \log 10000$. These hyperparameters were selected such that the forward process has a steady rate of noising whilst still having $q_T$ very close to $\pref$. We chose to compare these types of rate matrix because the birth/death rate is inappropriate for this categorical data as adjacent states have no meaning since the mapping to the integers was arbitrary. The uniform rate is suitable for this categorical data because, during a time interval, it has a uniform probability to transition to any other state. The D3PM baseline was implemented also with a time homogeneous uniform forward kernel set such that the rate of noising is matched in the discrete and continuous time cases.\\

We define our conditional denoising network, $\smash{p_{0|t}^\theta}(x_0 | x, y)$ using a transformer architecture inspired by \cite{mittal2021symbolic}. It takes an input of shape $(B, D, S)$ where $B$ is the batch size, $D$ is the dimensionality ($256$) and $S$ is the state size ($129$). This final dimension contains the one-hot vectors. The conditioning on the initial bars is achieved by concatenating the conditioning information $y$ with the noisy input $x$. At the start of the network, there is an input embedding linear layer with output of size $128$ which is our model dimension for the transformer. Then a transformer positional embedding is added to the hidden state. Next a stack of $6$ transformer encoder layers are applied which consist of a self attention block and a one hidden layer MLP. The self attention block uses $8$ heads and the MLP has a hidden layer size of $2048$. At the output of each internal block, we apply dropout with rate $0.1$. Finally, there is a stack of 2 residual MLP layers. Each consists of a one hidden layer MLP with a hidden dimension of $2048$. There is a residual connection between the input and output of the MLP. A layer norm is applied to the output of the block. To create the output of the network, there is an output linear layer with an output shape of $(B, D, S)$ where now the $S$ dimension has logit probabilities. To instill a residual bias into the network, we add the one-hot input to the output logits. All activations are ReLU. The time is input into the network through FiLM layers \cite{perez2018film}. First, the time is embedded using the sinusoidal transformer position embedding as in the U-net architecture used for image modeling to create an embedding size of $128$. This is then passed into a single hidden layer MLP with hidden size $2048$ and output size $512$. Within each encoder and residual MLP block, there is a FiLM linear layer which takes in the $512$ time embedding and outputs two FiLM parameters each of size $128$. These are the scale and offset applied to the hidden state. In the encoder blocks, this FiLM transform is applied after the self attention block and again after the fully connected block. In the residual MLP blocks, it is applied after the layer norm operation. Our network has approximately $7$ million parameters in total.\\

We optimize using Adam for 1M steps with a batch size of 64 and learning rate of 0.0002. We use the conditional $\bar{\mathcal{L}}_{\textup{CT}}$ objective with additional direct $p_{0|t}^\theta$ supervision as described in Appendix \ref{sec:ApdxDirectDenoisingSupervision} with weight $\lambda=0.001$. We also make the same one forward pass approximation as explained in Appendix \ref{sec:ApdxCTELBOOneForwardPass}.  We use the standard set of training tricks to improve optimization \cite{ho2020denoising, song2020score}. Throughout training we maintain an exponential moving average of the parameters with decay factor 0.9999. These average parameters are used during testing. At the start of optimization we use a linear learning rate warm-up for the first 5000 steps. We clip the gradient norm at a norm value of 1.0. We train on a single V100 GPU on an academic cluster.\\

For sampling with $\tau$LDR-$0$ we use $\tau=0.001$ and for sampling with $\tau$LDR-$2$ we include $2$ corrector steps per predictor step after $t < 0.9T$. The corrector rate is simulated with $\tau=0.0001$ which we found to perform best. We reject any dimension in which 2 or more jumps are proposed as this is categorical data. We plot the rejection rate in Figure \ref{fig:ApdxPianoRejectionRate}. Most of the time, the rejection rate is zero and there are few steps for which it increases slightly. We show a large batch of samples from the first 10 songs in the test dataset in Figure \ref{fig:ApdxPianoExtraSamples}. We see that there is variation between the sampled completions and they consistently follow the style of the conditioning first two bars of the song. Audio samples from the model are available at \url{https://github.com/andrew-cr/tauLDR}. Finally, we examine the progression of a random selection of dimensions during reverse sampling for the uniform and birth/death rate matrix cases. Figure \ref{fig:ApdxPianoUniformFollowDim} shows the progression for the uniform case, we see that large jumps through the state space are made throughout the reverse process. Figure \ref{fig:ApdxPianoBDFollowDim} shows the progression for the birth/death case. At the start of reverse sampling, no dimensions move as the rejection rate is high in this case because the rate matrix is not suitable for categorical data. Nearer the end, small jumps are made between adjacent states but since large jumps between any category do not occur for this rate matrix, the performance will overall be worse.

\begin{figure}
    \centering
    \includegraphics[width=0.8\textwidth]{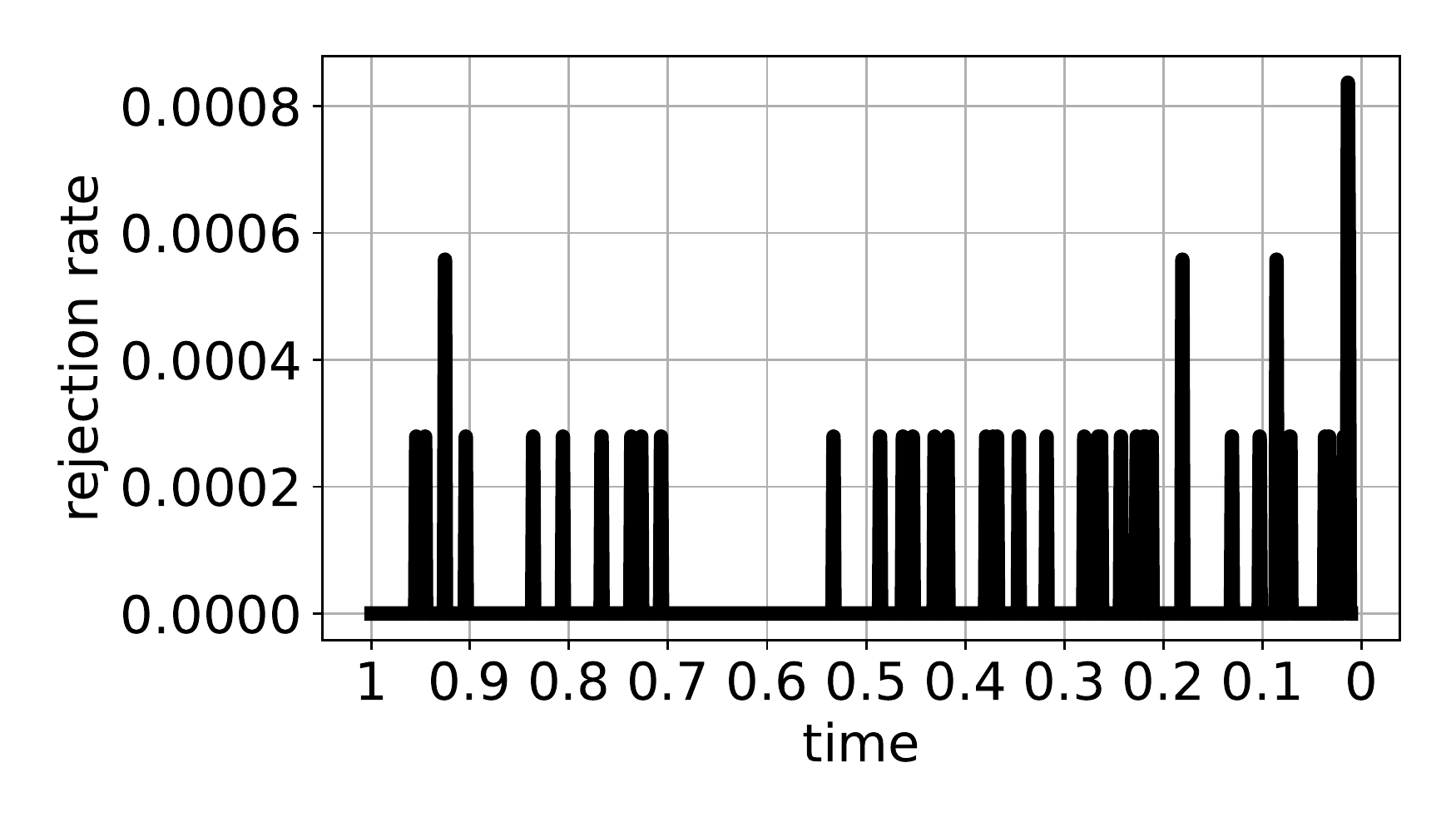}
    \caption{Proportion of jumps rejected during reverse sampling. The rejection rate is calculated as the proportion of dimensions in a tau leaping step that have their jump rejected. The results are averaged over a batch of 16.}
    \label{fig:ApdxPianoRejectionRate}
\end{figure}

\begin{figure}
    \centering
    \includegraphics[width=\textwidth]{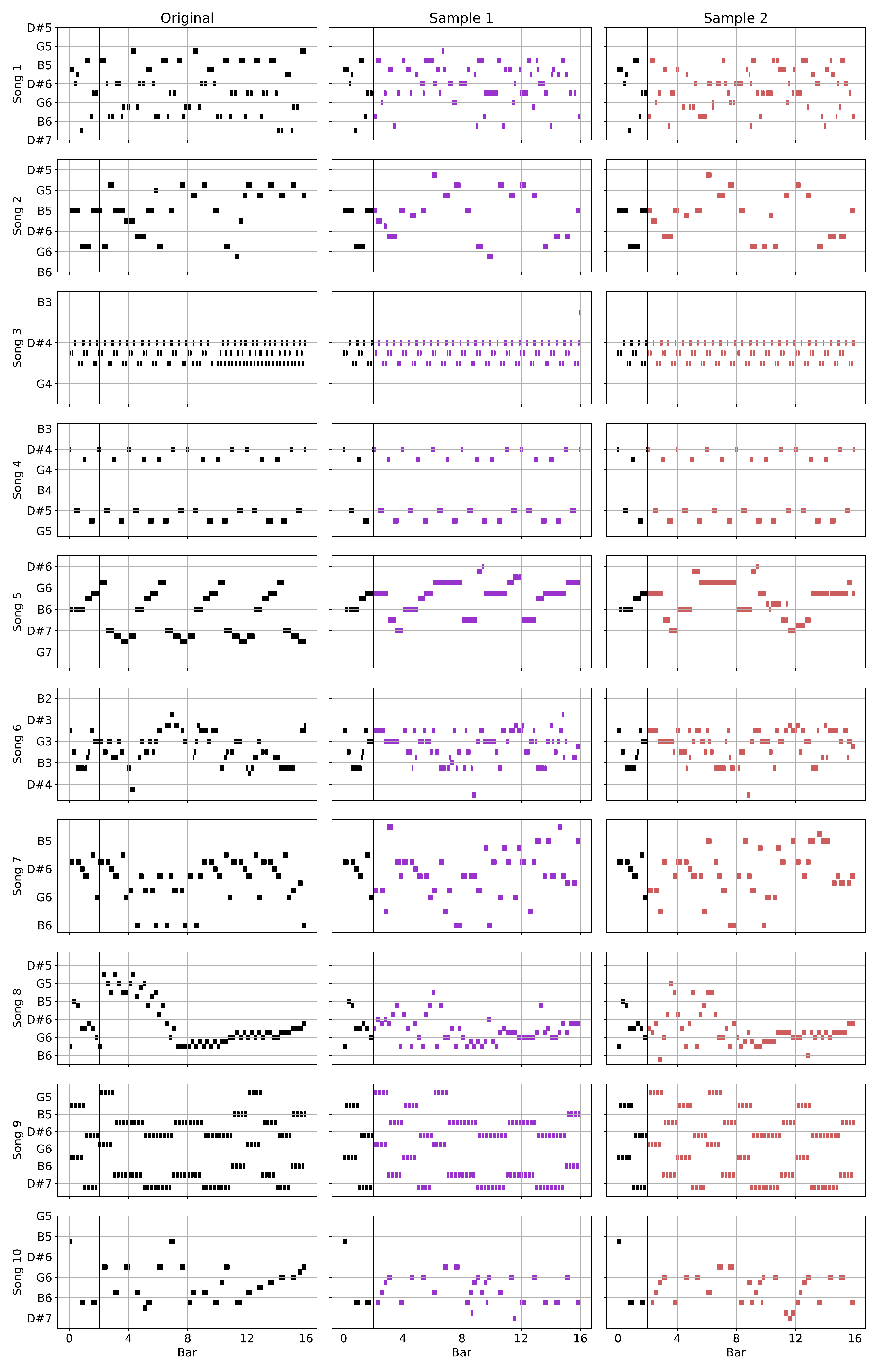}
    \caption{Two conditional samples from the $\tau$LDR-$0$ model for each of the first 10 songs in the test dataset.}
    \label{fig:ApdxPianoExtraSamples}
\end{figure}

\begin{figure}
    \centering
    \includegraphics[width=0.8\textwidth]{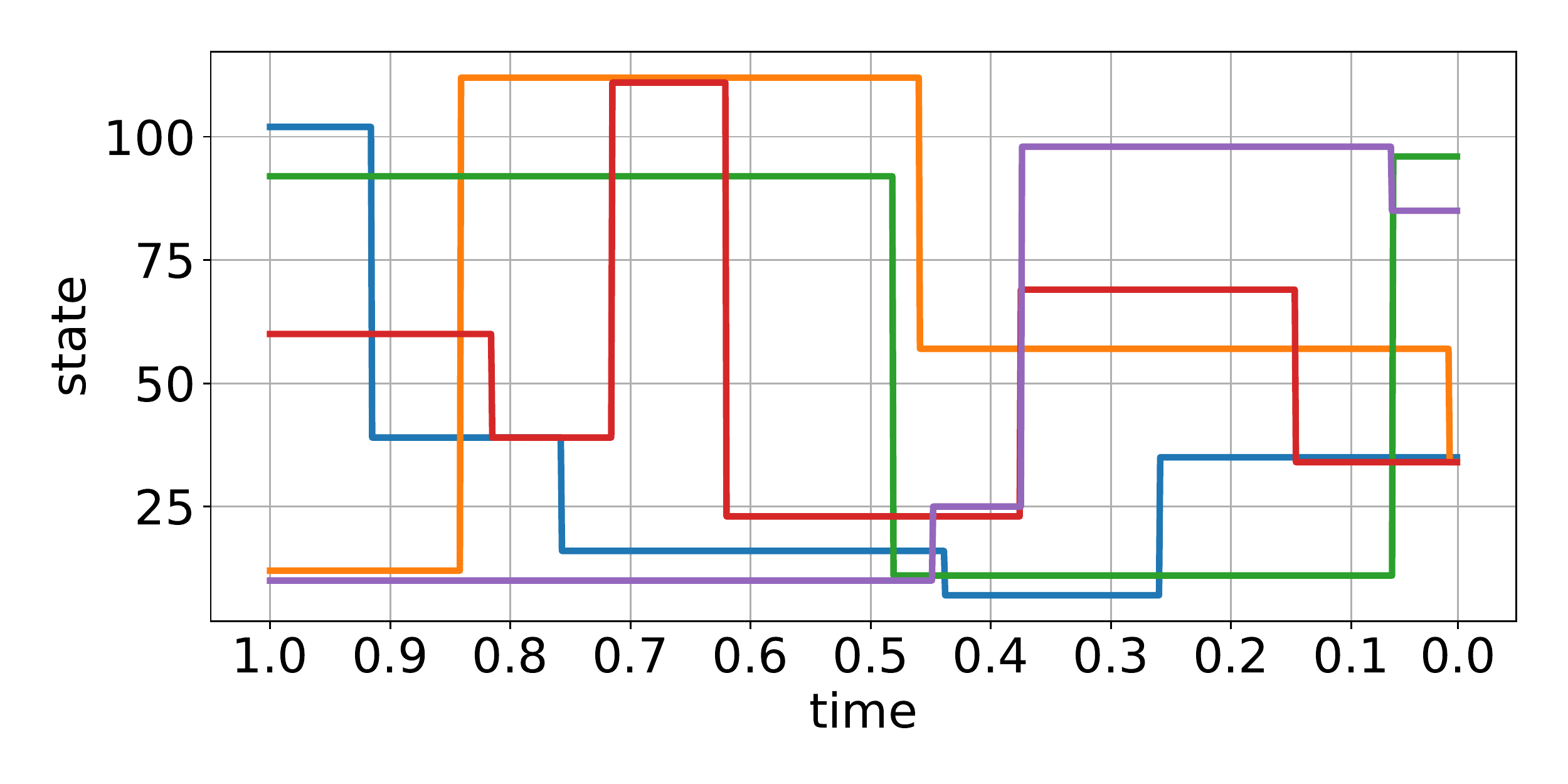}
    \caption{The progression of a selection of dimensions during the reverse sampling process for the uniform rate matrix.}
    \label{fig:ApdxPianoUniformFollowDim}
\end{figure}

\begin{figure}
    \centering
    \includegraphics[width=0.8\textwidth]{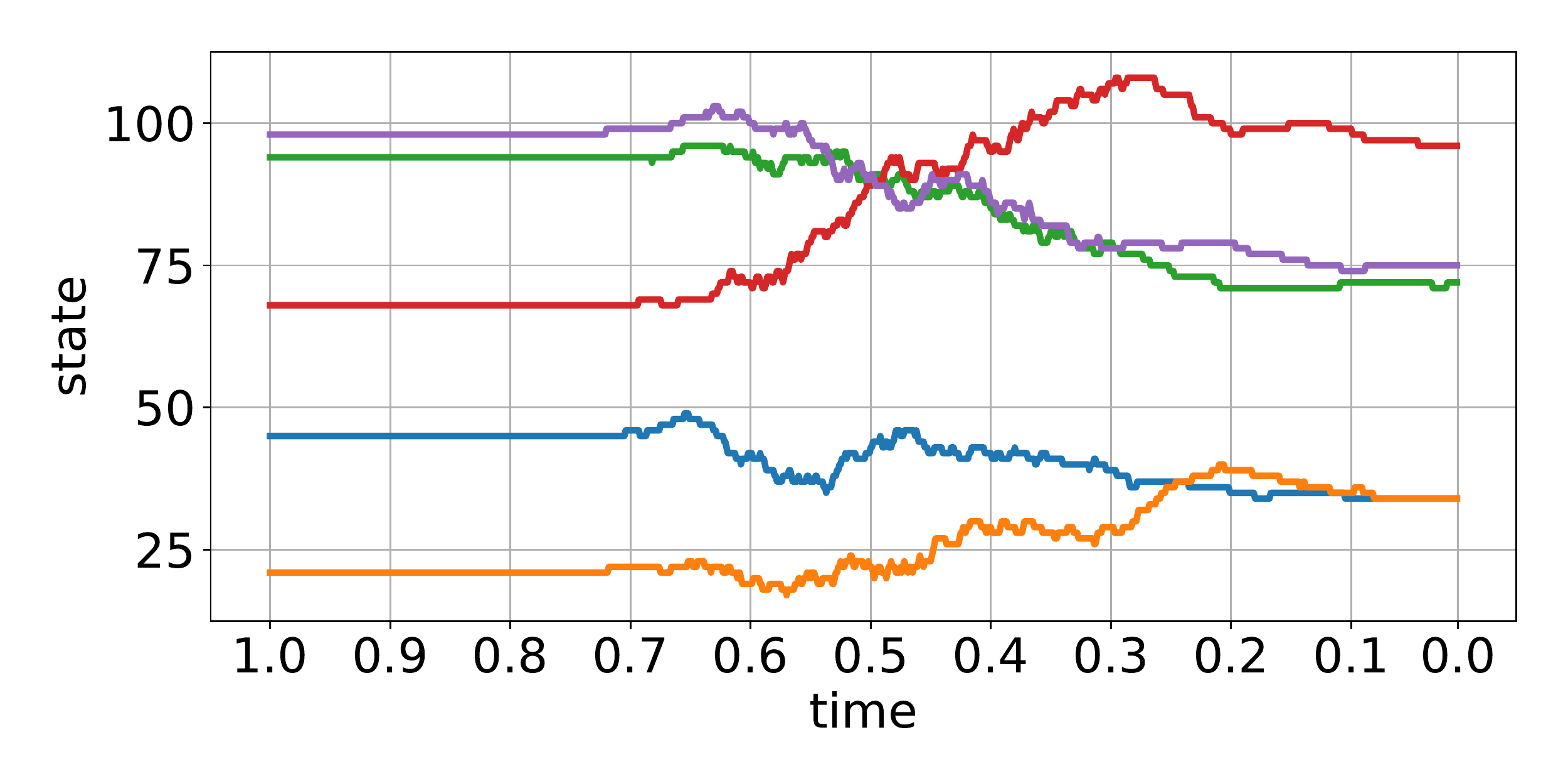}
    \caption{The progression of a selection of dimensions during the reverse sampling process for the birth/death rate matrix.}
    \label{fig:ApdxPianoBDFollowDim}
\end{figure}

\section{Ethical Considerations} \label{sec:ApdxEthicalConsiderations}
Our work increases our theoretical understanding of denoising generative models and also improves generation capabilities within some discrete datasets. Deep generative models are generic methods for learning from unstructured data and can have negative social impacts when misused. For example, they can be used to spread misinformation by reducing the resources required to create realistic fake content. Furthermore, generative models will produce samples that accurately reflect the statistics of their training dataset. Therefore, if samples from these models are interpreted as an objective truth without fully considering the biases present in the original data, then they can perpetuate discrimination against minority groups.

In this work, we train on datasets that contain less sensitive data such as pictures of objects and music samples. The methods we presented, however, could be used to model images of people or text from the internet which will contain biases and potentially harmful content that the model will then learn from and reproduce. Great care must be taken when training these models on real world datasets and when deploying them so as to mitigate and prevent the harms that they can cause.

\end{document}